\documentclass{article}

\usepackage[final,nonatbib]{neurips_2021}

\usepackage[utf8]{inputenc} 
\usepackage[T1]{fontenc}    
\usepackage{hyperref}       
\usepackage{url}            
\usepackage{booktabs}       
\usepackage{amsfonts}       
\usepackage{nicefrac}       
\usepackage{microtype}      
\usepackage{xcolor}         
\usepackage{enumitem}

\usepackage[mathscr]{eucal}
\usepackage{epsfig,epsf,psfrag}
\usepackage{amssymb,amsmath,amsfonts,latexsym}
\usepackage{amsmath,graphicx,bm,xcolor,url}
\usepackage[caption=false]{subfig} 
\usepackage{fixltx2e}
\usepackage{array}
\usepackage{verbatim}
\usepackage{bm}
\usepackage{algpseudocode}
\usepackage{algorithm}
\usepackage{verbatim}
\usepackage{textcomp}
\usepackage{mathrsfs}
\usepackage{epstopdf}
\usepackage{relsize}
\usepackage{cleveref} 
\usepackage{subfig}
 \usepackage{amsthm}

 
\catcode`~=11 \def\UrlSpecials{\do\~{\kern -.15em\lower .7ex\hbox{~}\kern .04em}} \catcode`~=13 

\allowdisplaybreaks[3]


\newcommand{\calD}{\mathcal{D}}
\newcommand{\calE}{\mathcal{E}}

\newcommand{\calI}{\mathcal{I}}

\newcommand{\calN}{\mathcal{N}}

\newcommand{\calS}{\mathcal{S}}

\newcommand{\ba}{\mathbf{a}}
\newcommand{\bA}{\mathbf{A}}
\newcommand{\bb}{\mathbf{b}}

\newcommand{\bg}{\mathbf{g}}

\newcommand{\bI}{\mathbf{I}}

\newcommand{\bJ}{\mathbf{J}}

\newcommand{\bp}{\mathbf{p}}

\newcommand{\bq}{\mathbf{q}}

\newcommand{\br}{\mathbf{r}}

\newcommand{\bs}{\mathbf{s}}

\newcommand{\bt}{\mathbf{t}}

\newcommand{\bU}{\mathbf{U}}

\newcommand{\bV}{\mathbf{V}}
\newcommand{\bw}{\mathbf{w}}
\newcommand{\bW}{\mathbf{W}}
\newcommand{\bx}{\mathbf{x}}
\newcommand{\bX}{\mathbf{X}}
\newcommand{\by}{\mathbf{y}}

\newcommand{\bz}{\mathbf{z}}


\newcommand{\rmd}{\mathrm{d}}


\newcommand{\bbE}{\mathbb{E}}

\newcommand{\bbP}{\mathbb{P}}

\newcommand{\bbR}{\mathbb{R}}



\DeclareMathAlphabet{\mathbsf}{OT1}{cmss}{bx}{n}
\DeclareMathAlphabet{\mathssf}{OT1}{cmss}{m}{sl}

\DeclareSymbolFont{bsfletters}{OT1}{cmss}{bx}{n}  
\DeclareSymbolFont{ssfletters}{OT1}{cmss}{m}{n}
\DeclareMathSymbol{\bsfGamma}{0}{bsfletters}{'000}
\DeclareMathSymbol{\ssfGamma}{0}{ssfletters}{'000}
\DeclareMathSymbol{\bsfDelta}{0}{bsfletters}{'001}
\DeclareMathSymbol{\ssfDelta}{0}{ssfletters}{'001}
\DeclareMathSymbol{\bsfTheta}{0}{bsfletters}{'002}
\DeclareMathSymbol{\ssfTheta}{0}{ssfletters}{'002}
\DeclareMathSymbol{\bsfLambda}{0}{bsfletters}{'003}
\DeclareMathSymbol{\ssfLambda}{0}{ssfletters}{'003}
\DeclareMathSymbol{\bsfXi}{0}{bsfletters}{'004}
\DeclareMathSymbol{\ssfXi}{0}{ssfletters}{'004}
\DeclareMathSymbol{\bsfPi}{0}{bsfletters}{'005}
\DeclareMathSymbol{\ssfPi}{0}{ssfletters}{'005}
\DeclareMathSymbol{\bsfSigma}{0}{bsfletters}{'006}
\DeclareMathSymbol{\ssfSigma}{0}{ssfletters}{'006}
\DeclareMathSymbol{\bsfUpsilon}{0}{bsfletters}{'007}
\DeclareMathSymbol{\ssfUpsilon}{0}{ssfletters}{'007}
\DeclareMathSymbol{\bsfPhi}{0}{bsfletters}{'010}
\DeclareMathSymbol{\ssfPhi}{0}{ssfletters}{'010}
\DeclareMathSymbol{\bsfPsi}{0}{bsfletters}{'011}
\DeclareMathSymbol{\ssfPsi}{0}{ssfletters}{'011}
\DeclareMathSymbol{\bsfOmega}{0}{bsfletters}{'012}
\DeclareMathSymbol{\ssfOmega}{0}{ssfletters}{'012}


\newcommand{\balpha}{\bm{\alpha}}









\theoremstyle{plain}
\newtheorem{theorem}{Theorem} 
\newtheorem{lemma}{Lemma}

\newtheorem{definition}{Definition} 

\newtheorem{remark}{Remark}

\newcommand{\qednew}{\nobreak \ifvmode \relax \else
      \ifdim\lastskip<1.5em \hskip-\lastskip
      \hskip1.5em plus0em minus0.5em \fi \nobreak
      \vrule height0.75em width0.5em depth0.25em\fi}



\newcommand\sign[1]{\mathrm{sign}{#1}}

\title{Towards Sample-Optimal Compressive Phase Retrieval with Sparse and Generative Priors}

\author{
  Zhaoqiang Liu \\
    National University of Singapore\\
  \texttt{dcslizha@nus.edu.sg} \\
  \And
  Subhroshekhar Ghosh \\
  National University of Singapore \\
  \texttt{subhrowork@gmail.com} \\
  \AND
  Jonathan Scarlett \\
  National University of Singapore\\
  \texttt{scarlett@comp.nus.edu.sg} \\\
}

\begin{document}

\maketitle

\begin{abstract}
  Compressive phase retrieval is a popular variant of the standard compressive sensing problem in which the measurements only contain magnitude information. In this paper, motivated by recent advances in deep generative models, we provide recovery guarantees with near-optimal sample complexity for phase retrieval with generative priors. We first show that when using i.i.d. Gaussian measurements and an $L$-Lipschitz continuous generative model with bounded $k$-dimensional inputs, roughly $O(k \log L)$ samples suffice to guarantee that any signal minimizing an amplitude-based empirical loss function is close to the true signal. Attaining this sample complexity with a practical algorithm remains a difficult challenge, and finding a good initialization for gradient-based methods has been observed to pose a major bottleneck. To partially address this, we further show that roughly $O(k \log L)$ samples ensure sufficient closeness between the underlying signal and any {\em globally optimal} solution to an optimization problem designed for spectral initialization (though finding such a solution may still be challenging). We also adapt this result to sparse phase retrieval, and show that $O(s \log n)$ samples are sufficient for a similar guarantee when the underlying signal is $s$-sparse and $n$-dimensional, matching an information-theoretic lower bound. While these guarantees do not directly correspond to a practical algorithm, we propose a practical spectral initialization method motivated by our findings, and experimentally observe performance gains over various existing spectral initialization methods for sparse phase retrieval.
\end{abstract}

\section{Introduction}

In this paper, we consider the (real-valued) phase retrieval problem, which aims to recover a signal $\bx \in \bbR^n$ from noisy magnitude-only measurements:
\begin{equation}\label{eq:measurement_model}
 y_i = |\langle\ba_i,\bx\rangle| +\eta_i,  \quad i = 1,2,\ldots,m,
\end{equation}
where $\ba_i \in \bbR^n$ is the $i$-th sensing vector, and $\eta_i$ represents additive noise. This problem arises naturally in areas such as diffraction imaging, 
X-ray crystallography, 
microscopy, 
optics 
and astronomy~\cite{candes2015phase}, where it is often difficult or even impossible to observe the linear measurements directly, and one can only record the magnitudes or intensities (squared magnitudes).  

In many real applications, to reduce the required number of measurements, it is of interest to exploit structure in the signal being estimated. In particular, for applications related to signal processing and imaging, it is well-known that the underlying signal typically admits a sparse or approximately sparse representation in some known basis~\cite{mallat1999wavelet}. Motivated by this, and considering the popularity of the standard compressive sensing (CS) problem~\cite{Fou13}, the sparse phase retrieval problem has attracted significant research interest. 

Moreover, inspired by the successful applications of deep generative models in many fields~\cite{Fos19}, recently, a new perspective of CS has emerged, for which the sparsity assumption is replaced by a generative model assumption. That is, instead of assuming sparsity, the signal is assumed to be close to the range of a generative model~\cite{bora2017compressed}. In addition to the theoretical developments in~\cite{bora2017compressed}, the authors also provide impressive numerical results showing that for some imaging applications, using generative priors can significantly reduce the required number of measurements (e.g., by a factor of 5 to 10) for recovering the signal up to a given accuracy. Follow-up works of~\cite{bora2017compressed} include~\cite{van2018compressed,heckel2019deep,ongie2020deep,whang2020compressed,cocola2020nonasymptotic,jalal2020robust}, just to name a few.

In this paper, we focus on providing recovery guarantees with near-optimal sample complexity bounds (e.g., optimal up to constant factors) for the compressive phase retrieval problem, considering both sparse and generative priors.

\vspace*{1ex}
\subsection{Related Work}
\vspace*{1ex}

In this subsection, we provide a summary of some relevant works, which can roughly be divided into (i) the general phase retrieval problem with no structural assumptions on the signal, (ii) sparse phase retrieval, and (iii) phase retrieval with generative models. 

\textbf{General phase retrieval:} A wide range of approaches have been designed to solve the phase retrieval problem. Error-reduction approaches~\cite{gerchberg1972practical,fienup1982phase} work well in practice, but they lack provable guarantees. Convex methods such as PhaseLift~\cite{candes2013phaselift} and PhaseCut~\cite{waldspurger2015phase} lift the phase retrieval problem to a higher dimension, and typically suffer from high computation cost. Convex relaxations that operate in the natural domain of the signal are proposed in~\cite{goldstein2018phasemax,bahmani2017phase}. However, these convex relaxation based methods are not empirically competitive against widely-used non-convex methods. 

Following the seminal work of Netrapalli {\em et al.}~\cite{netrapalli2015phase}, several works have studied theoretically-guaranteed non-convex optimization algorithms for phase retrieval~\cite{candes2015phase,chen2017solving,wang2017solving,zhang2017nonconvex}. These algorithms start from a spectral initialization method, and then use an iterative algorithm (e.g., alternating minimization or gradient descent) to further decrease the approximation error. For the general phase retrieval problem, the optimal sample complexity of $O(n)$ can be achieved by certain non-convex methods~\cite{chen2017solving,wang2017solving,zhang2017nonconvex}. Interestingly, optimal (up to a logarithmic factor) sample complexity guarantees have also been provided in the case of a random initialization~\cite{sun2018geometric,chen2019gradient}. It is worth mentioning that some variants of the classic Wirtinger Flow (WF) algorithm~\cite{candes2015phase,chen2017solving} and trust-region methods~\cite{sun2018geometric} consider squared measurements:
\begin{equation}\label{eq:square_measurements}
 y_i  = |\langle \ba_i,\bx\rangle| ^2 + \bm{\epsilon}_i, \quad i = 1,2,\ldots,m,
\end{equation}
and  minimize the intensity based empirical loss function: 
\begin{equation}\label{eq:intensity_loss}
 f_{I}(\bw) := \frac{1}{2}\left\|\by - |\bA\bw|^2\right\|_2^2,
\end{equation}
where $\bA \in \bbR^{m \times n}$ is the sensing matrix with its $i$-th row being $\ba_i^T$, and $|\cdot|$ is applied element-wise. Nonetheless, numerical results suggest that algorithms minimizing the amplitude based empirical risk (with the measurement model~\eqref{eq:measurement_model}) 
\begin{equation}\label{eq:amplitude_loss}
 f(\bw) := \|\by - |\bA\bw|\|_2^2
\end{equation}
are usually more efficient in computation~\cite{wang2017solving,zhang2017nonconvex,soltanolkotabi2019structured}. Based on this, we focus on the measurement model~\eqref{eq:measurement_model} and the associated loss function~\eqref{eq:amplitude_loss}.

\textbf{Sparse phase retrieval:} A variety of algorithms have been devised for sparse phase retrieval~\cite{ohlsson2012cprl,shechtman2014gespar,schniter2014compressive}. In particular, there exist various non-convex optimization based algorithms, including thresholded/projected WF~\cite{cai2016optimal,soltanolkotabi2019structured}, sparse truncated amplitude flow~\cite{wang2017sparse}, compressive phase retrieval with alternating minimization~\cite{jagatap2019sample}, and sparse phase retrieval by hard iterative pursuit~\cite{cai2020sparse}. All of these approaches are analyzed under the assumption of using a sensing matrix with i.i.d.~Gaussian entries. In addition, similar to non-convex methods for the general phase retrieval problem, all of these approaches start with a spectral initialization step, and then use an iterative algorithm to refine the initial vector. More specifically, in the initialization step, these approaches try to accurately estimate the support of the underlying signal. 

Unlike the general phase retrieval setting in which optimal sample complexity guarantees are provided for some practical algorithms, for non-convex methods of sparse phase retrieval, the typical sample complexity requirement is $O(s^2\log n)$, where $s$ is the sparsity level. This does not match the information theoretic lower bound~\cite{scarlett2019introductory,truong2020support}; the bottleneck comes from the accurate estimation of the support set~\cite{wang2017sparse,soltanolkotabi2019structured,jagatap2019sample}. Improved guarantees are also available in the noiseless setting under more restrictive assumptions on the magnitudes of the non-zero entries \cite{wu2021hadamard}.
In addition, for sparse phase retrieval, recovery guarantees with respect to {\em globally optimal solutions} of both the intensity and amplitude based loss functions have been provided in~\cite{eldar2014phase,iwen2017robust,huang2020performance,huang2020estimation,xia2021recovery}, achieving the optimal sample complexity $O(s \log n)$. However, designing a \textit{computationally efficient algorithm achieving the optimal sample complexity} for $s$-sparse vectors remains an important open problem.

\textbf{Phase retrieval with generative models}: Phase retrieval using generative models has been studied in~\cite{hand2018phase,hyder2019alternating,jagatap2019algorithmic,shamshad2020compressed,wei2019statistical,aubin2020exact}.
Extensive numerical experiments for phase retrieval with generative models, as well as both Gaussian and coded diffraction pattern measurements, have been presented in~\cite{shamshad2020compressed}. Algorithms proposed in both works~\cite{hand2018phase,shamshad2020compressed} minimize the objective function directly over the latent variable $\bz \in \bbR^k$ using gradient descent, where $k$ is the latent dimension. The corresponding objective function is highly non-convex, and performing gradient descent directly over $\bz$ limits the explorable solution space, which may lead to getting stuck in local minima. 

We highlight two particularly relevant works in more detail.  To guarantee favorable global geometry for gradient methods, the authors of~\cite{hand2018phase} assume a ReLU neural network generative model with i.i.d. zero-mean Gaussian weights and no offsets. In addition, the neural network needs to be sufficiently expansive such that $n_i \ge \Omega(n_{i-1}\log n_{i-1})$, where $n_i$ represents the number of neurons in the $i$-th layer.  Under such conditions, a sample complexity $O(k d^2 \log n)$ is obtained, where $d$ is the number of layers.  In another related work studying an approximate message passing algorithm, the authors of \cite{aubin2020exact} maintain i.i.d.~Gaussian weights but slightly relax to general activation functions (not only ReLU) and $n_i \ge \Omega(n_{i-1})$. Under these conditions, the high-dimensional regime is studied ($n,m,k \to \infty$ with $m/n$ kept fixed), and an asymptotic analysis is given (not yet established as fully rigorous).  We also note that \cite{aubin2020exact} focuses on the case that $\bz \sim P_{\bz}$ for some separable $P_{\bz}$.  
Both~\cite{aubin2020exact} and~\cite{hand2018phase} focus on the noiseless case, though~\cite{aubin2020exact} states that the extension to noisy phase retrieval is possible.

Recovery guarantees for non-convex optimization algorithms for phase retrieval with pre-trained ReLU neural network priors and untrained neural network priors have been provided in~\cite{hyder2019alternating,jagatap2019algorithmic}, with assumptions of noiseless measurements and the existence of a good initial vector. Recovery guarantees for phase retrieval with generative models can also be found in~\cite{wei2019statistical}, within a more general setting of using differentiable but unknown link functions. The weaker assumption of an unknown link function comes at a price; namely, it is an open problem to handle signals with representation error~\cite{plan2016generalized}. 
Moreover, the assumption of differentiability makes it is not directly applicable to the measurement model~\eqref{eq:measurement_model}, and the loss function considered in~\cite{wei2019statistical} is different from both~\eqref{eq:intensity_loss} and~\eqref{eq:amplitude_loss}, which are widely adopted for phase retrieval. 

\vspace*{1ex}
\subsection{Contributions}
\vspace*{1ex}

\begin{itemize}[leftmargin=5ex,itemsep=0ex,topsep=0.25ex]
\item We show that for compressive phase retrieval with i.i.d.~Gaussian measurements, when the signal is close to the range of an $L$-Lipschitz continuous generative model with bounded $k$-dimensional inputs, roughly $O(k \log L)$ samples suffice for ensuring the closeness between the signal and any vector minimizing the amplitude-based empirical loss function \eqref{eq:amplitude_loss}.

\vspace*{0.5ex}

 \item To address the sample complexity barrier posed by spectral initialization, we propose a relevant optimization problem, and show that roughly $O(k \log L)$ samples are sufficient to guarantee that any {\em globally optimal} solution of the optimization problem is close to the true signal.  This suggests the plausibility of practical spectral initialization algorithms that are able to find such a global optimum, though we do not attempt to provide one that can provably do so.
 
 \vspace*{0.5ex}
 
 \item We adapt our analysis to the case of sparse phase retrieval, and show that roughly $O(s\log n)$ samples suffice for a similar recovery guarantee, where $s$ is the sparsity level and $n$ is the ambient dimension. This matches the information-theoretic lower bound up to constant factors for broad signal-to-noise ratios, and up to a logarithmic factor in general; see for example~\cite[Theorem~3.2]{cai2016optimal} and~\cite[Section~6.1]{lecue2015minimax}). 
 
 \vspace*{0.5ex}
 
 \item Motivated by our theoretical findings for sparse phase retrieval, we propose a spectral initialization method based on sparse principal component analysis. We verify on synthetic experiments that our initialization method significantly outperforms several popular spectral initialization methods of sparse phase retrieval (in the sense of relative error). This further corroborates the optimal sample complexity guarantee in our theory. 
 \end{itemize}

\subsection{Notation}
\vspace*{1ex}

We use upper and lower case boldface letters to denote matrices and vectors respectively. We write $[N]=\{1,2,\cdots,N\}$ for a positive integer $N$. 
 We define the $\ell_2$-ball $B_2^k(r):=\{\bz \in \bbR^k: \|\bz\|_2 \le r\}$, and the unit sphere $\calS^{n-1} := \{\bx \in \bbR^n: \|\bx\|_2=1\}$. We use $G$ to denote an $L$-Lipschitz continuous generative model from $B_2^k(r)$ to $\bbR^n$. For a set $B \subseteq B_2^k(r)$, we write $G(B) = \{ G(\bz) \,:\, \bz \in B  \}$. The sensing matrix $\bA \in \bbR^{m\times n}$ is assumed to have i.i.d.~standard normal entries, i.e., $a_{ij}\overset{i.i.d.}{\sim} \calN(0,1)$. For any $\bs \in \bbR^n$, we use $\bar{\bs} = \frac{\bs}{\|\bs\|_2}$ to denote the corresponding normalized vector. The support (set) of a vector is the index set of its non-zero entries. For any $\bX \in \bbR^{m \times n}$ and any index set $I \subseteq [m]$, $\bX_{I}$ denotes the $|I| \times n$ sub-matrix of $\bX$ that only keeps the rows indexed by $I$. We use $\|\bX\|_{2 \to 2}$ to denote the spectral norm of a matrix $\bX$. For any positive integer $N$, we use $\bI_N$ to denote the identity matrix in $\bbR^{N\times N}$. We use the generic notations $C$ and $C'$ to denote large positive constants, and we use $c$ and $c'$ to denote small positive constants; their values may differ from line to line.
 
\vspace*{1ex}
\section{Amplitude-Based Loss Minimization for Generative Priors}
\label{sec:amplitude_opt}
\vspace*{1ex}

In this section, we provide recovery guarantees with respect to optimal solutions of the amplitude based loss function~\eqref{eq:amplitude_loss} for phase retrieval with generative models.  We let the generative model $G\,:\, B_2^k(r) \to \bbR^n$ be any $L$-Lipschitz continuous function, and we suppose that the underlying signal $\bx$ is close to (but does not need to lie in) the range of $G$.  We are interested in the case $k \ll n$ (i.e., relatively small latent dimension).
The sensing matrix $\bA$ is assumed to have i.i.d.~standard normal entries. 
Let $\bq \in \mathrm{Range}(G)$  minimize~\eqref{eq:amplitude_loss} to within additive $\tau>0$ of the optimum, i.e., 
\begin{equation}\label{eq:opt_noise}
 \|\by - |\bA\bq|\|_2^2 \le \min_{\bw \in \mathrm{Range}(G)}\|\by - |\bA\bw|\|_2^2 + \tau, 
\end{equation}
and define $\bp \in \mathrm{Range}(G)$ to be the point that is closest to the signal $\bx$, i.e.,
\begin{equation}\label{eq:rep_noise}
 \bp = \arg\min_{\bw \in \mathrm{Range}(G)} \|\bx-\bw\|_2.
\end{equation}

With the above settings in place, we present the following theorem, proved in Appendix \ref{app:proof_amp}.
\begin{theorem}\label{thm:amplitude_loss}
  Consider the observation model \eqref{eq:measurement_model} with i.i.d.~Gaussian measurements, 
  and $\bp,\bq$ satisfying \eqref{eq:opt_noise}--\eqref{eq:rep_noise} with parameter $\tau$.  Then, for any $\delta > 0$,  
  we have that if $m = \Omega(k \log \frac{Lr}{\delta})$,\footnote{Here and subsequently, all $m = \Omega(\cdot)$ statements are assumed to have a large enough implied constant.} with probability $1-e^{-\Omega(m)}$, it holds that
 \begin{equation}
  \min\{\|\bq - \bx\|_2,\|\bq + \bx\|_2\} \le O\left(\|\bp -\bx\|_2+\frac{\|\bm{\eta}\|_2 + \sqrt{\tau}}{\sqrt{m}} +\delta\right).
 \end{equation}
\end{theorem}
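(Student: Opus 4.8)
The plan is to combine three ingredients and glue them with a matching estimate for the fixed vector $\bp$: (i) a deterministic reduction that turns the near-optimality of $\bq$ into an upper bound on the empirical amplitude discrepancy between $\bq$ and $\bx$; (ii) a pointwise ``population'' lower bound showing that this discrepancy controls $\min\{\|\cdot-\bx\|_2,\|\cdot+\bx\|_2\}$ in expectation; and (iii) a uniformization of (ii) over $\mathrm{Range}(G)$ by covering/chaining. For Step~1, writing $\by = |\bA\bx| + \bm{\eta}$ and invoking \eqref{eq:opt_noise} with $\bw = \bp \in \mathrm{Range}(G)$, together with the triangle inequality and $\sqrt{a^2+\tau}\le a+\sqrt{\tau}$, gives
\[
  \big\||\bA\bx| - |\bA\bq|\big\|_2 \;\le\; \big\||\bA\bx| - |\bA\bp|\big\|_2 + 2\|\bm{\eta}\|_2 + \sqrt{\tau} \;\le\; \|\bA(\bx-\bp)\|_2 + 2\|\bm{\eta}\|_2 + \sqrt{\tau},
\]
using $\big||a|-|b|\big|\le|a-b|$ coordinatewise. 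Since $\bx-\bp$ is a fixed vector, standard $\chi^2$-concentration gives $\|\bA(\bx-\bp)\|_2 \le 2\sqrt{m}\,\|\bx-\bp\|_2$ with probability $1-e^{-\Omega(m)}$, so on that event $\tfrac{1}{\sqrt m}\big\||\bA\bx|-|\bA\bq|\big\|_2 \le 2\|\bx-\bp\|_2 + \tfrac{2\|\bm{\eta}\|_2+\sqrt{\tau}}{\sqrt m}$.

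For Step~2, the geometric heart is the claim that there is an absolute constant $c_0 \in \big(0,\,1-\tfrac{2}{\pi}\big)$ such that, for all $\bu,\bw \in \bbR^n$,
\[
  \bbE_{\ba \sim \calN(\bzero,\bI_n)}\big(|\langle\ba,\bu\rangle| - |\langle\ba,\bw\rangle|\big)^2 \;\ge\; c_0\,\min\{\|\bu-\bw\|_2^2,\ \|\bu+\bw\|_2^2\}.
\]
I would prove this by passing to the (at most) two-dimensional Gaussian spanned by $\langle\ba,\bu\rangle,\langle\ba,\bw\rangle$: with $\alpha=\|\bu\|_2$, $\beta=\|\bw\|_2$, $\rho=\langle\bu,\bw\rangle/(\alpha\beta)$, the classical identity $\bbE|\langle\ba,\bu\rangle\langle\ba,\bw\rangle| = \tfrac{2}{\pi}\alpha\beta\big(\sqrt{1-\rho^2}+\rho\arcsin\rho\big)$ reduces the claim, after homogenizing in $(\alpha,\beta)$, to the elementary inequality $1 - c_0(1-|\rho|) \ge \tfrac{2}{\pi}\big(\sqrt{1-\rho^2}+\rho\arcsin\rho\big)$, which follows from a short monotonicity argument (the derivative of $t\mapsto\sqrt{1-t^2}+t\arcsin t$ equals $\arcsin t$). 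The same two-dimensional reduction also yields the complementary bound $\bbE(\cdots)\le\min\{\|\bu-\bw\|_2^2,\|\bu+\bw\|_2^2\}$ and shows that the integrand $\big(|\langle\ba,\bu\rangle|-|\langle\ba,\bw\rangle|\big)^2$ is sub-exponential with parameter of order $\min\{\|\bu-\bw\|_2^2,\|\bu+\bw\|_2^2\}$ as well (via $\big||a|-|b|\big|\le\min\{|a-b|,|a+b|\}$); crucially, its scale matches its mean, which keeps the final sample bound free of $\delta$-dependent polynomial factors.

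For Step~3, I would show that, with probability $1-e^{-\Omega(m)}$, for every $\bw \in \mathrm{Range}(G)$,
\[
  \tfrac{1}{\sqrt m}\big\||\bA\bx|-|\bA\bw|\big\|_2 \;\ge\; c\,\min\{\|\bw-\bx\|_2,\ \|\bw+\bx\|_2\} - \delta .
\]
Since $G$ is $L$-Lipschitz on $B_2^k(r)$, pulling back a $(\delta'/L)$-net of $B_2^k(r)$ gives a $\delta'$-net $\calM$ of $\mathrm{Range}(G)$ with $\log|\calM| \le k\log\tfrac{3rL}{\delta'}$. For each $\tilbw\in\calM$, $\tfrac1m\big\||\bA\bx|-|\bA\tilbw|\big\|_2^2$ is an average of i.i.d.\ sub-exponential variables whose mean and sub-exponential parameter are both of order $\min\{\|\bx-\tilbw\|_2^2,\|\bx+\tilbw\|_2^2\}$ by Step~2, so Bernstein's inequality yields a constant-relative-accuracy lower bound with failure probability $e^{-\Omega(m)}$ (the implied constant absolute, because the deviation and the scale are comparable); a union bound over $\calM$ then costs only $m=\Omega\big(k\log\tfrac{rL}{\delta'}\big)$. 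To pass from the net to all of $\mathrm{Range}(G)$, write $\bw = \tilbw + (\bw-\tilbw)$ with $\|\bw-\tilbw\|_2\le\delta'$ and bound the perturbation $\big\||\bA\bx|-|\bA\bw|\big\|_2 - \big\||\bA\bx|-|\bA\tilbw|\big\|_2$ by $\|\bA(\bw-\tilbw)\|_2$; since $\bw-\tilbw$ lies in the intrinsically $k$-dimensional set $\big(\mathrm{Range}(G)-\mathrm{Range}(G)\big)\cap B_2^n(\delta')$, a chaining/Gaussian-width argument of the kind used for generative-model compressed sensing gives $\sup\,\tfrac1{\sqrt m}\|\bA(\bw-\tilbw)\|_2 \le C\delta'$ with probability $1-e^{-\Omega(m)}$ once $m=\Omega\big(k\log\tfrac{rL}{\delta'}\big)$, with \emph{no} dependence on $n$. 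Using that $\min\{\|\cdot-\bx\|_2,\|\cdot+\bx\|_2\}$ is $1$-Lipschitz and choosing $\delta'$ a small constant multiple of $\delta$ gives the displayed bound.

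Finally, applying Step~3 with $\bw=\bq\in\mathrm{Range}(G)$ and comparing with Step~1, on the intersection of the two high-probability events, yields $c\,\min\{\|\bq-\bx\|_2,\|\bq+\bx\|_2\} - \delta \le 2\|\bx-\bp\|_2 + \tfrac{2\|\bm{\eta}\|_2+\sqrt\tau}{\sqrt m}$, which is the claimed inequality after rescaling $\delta$ by an absolute constant. I expect the main obstacle to be Step~3: the naive ``net $+$ operator-norm'' control of the perturbation term $\|\bA(\bw-\tilbw)\|_2$ would incur a factor $\|\bA\|_{2\to2} \asymp \sqrt m + \sqrt n$, hence an extra $k\log n$ in the sample complexity, so one must instead exploit that $\mathrm{Range}(G)$ is a low-complexity set and control the relevant Gaussian process by chaining — while simultaneously keeping the Bernstein step's constants from degrading, for which the Step~2 observation that the integrand's sub-exponential scale matches its mean is exactly what is needed.
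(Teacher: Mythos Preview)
Your proposal is correct, and it takes a genuinely different route from the paper. The paper compares $\bq$ with the in-range point $\bp$, decomposes $[m]$ into the index sets where $\sign(\ba_i^T\bp)=\sign(\ba_i^T\bq)$ and its complement, and observes that on the larger half (say $S$) one has $\||\bA\bp|-|\bA\bq|\|_2^2 \ge \|\bA_S(\bp-\bq)\|_2^2$; the lower bound then comes from a ``strong S-REC'' over all half-index restrictions (their Lemma~\ref{lem:strong_SREC}, proved via the Voroninski--Xu order-statistics lemma). You instead compare $\bq$ directly with $\bx$, and replace the sign decomposition by the population inequality $\bbE\big(|\langle\ba,\bu\rangle|-|\langle\ba,\bw\rangle|\big)^2 \ge c_0\min\{\|\bu-\bw\|_2^2,\|\bu+\bw\|_2^2\}$, uniformized by Bernstein plus covering; your key observation that the sub-exponential scale of the summands matches their mean is exactly what makes the union bound cost only $k\log(Lr/\delta)$, as in the paper. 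The paper's route is more modular---Lemma~\ref{lem:strong_SREC} is a reusable black box that sidesteps any explicit bivariate-Gaussian computation---while yours is arguably more self-contained, avoiding the order-statistics input in favor of the classical formula for $\bbE|g_1 g_2|$; both land on the same sample complexity and error bound. Your perturbation step (extending from the net to all of $\mathrm{Range}(G)$) is precisely the upper half of the paper's two-sided S-REC (Lemma~\ref{lem:twosided_SREC}), so you may simply cite that rather than redo the chaining.
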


Since we assume that $\bx$ is close to the range of $G$, the representation error $\|\bp-\bx\|_2$ is small (compared to $\|\bx\|_2$). The optimization error $\tau$ is also ideally small. In addition, it is common to assume that $\frac{\|\bm{\eta}\|_2}{\sqrt{m}} \le c \|\bx\|_2$ (e.g., see~\cite{zhang2017nonconvex}), where $c>0$ is also small. A typical $d$-layer neural network generative model has $\mathrm{poly}(n)$-bounded weights in each layer, and thus its Lipschitz constant is $L = n^{O(d)}$~\cite{bora2017compressed}. Then, the values of $r$ and $\delta$ can be as extreme as $r=n^{O(d)}$ and $\delta =\frac{1}{n^{O(d)}}$ without affecting the final scaling. In this case, Theorem~\ref{thm:amplitude_loss} reveals that roughly $O(k\log L)$ samples suffice to guarantee that any optimal solution of the amplitude-based loss function~\eqref{eq:amplitude_loss} is close to the true signal (up to an unavoidable possible global sign flip).  By the analysis of information-theoretic lower bounds in \cite{kamath2020power,liu2020information}, this scaling cannot be improved without further assumptions.

\vspace*{1ex}
\section{Spectral Initializations with Generative Priors}
\vspace*{1ex}

In Theorem~\ref{thm:amplitude_loss}, we provided recovery guarantees with respect to optimal solutions of the amplitude based loss function. However, it is not clear whether such a guarantee is achievable by practical algorithms. 

A notable recent work is~\cite{hyder2019alternating}, showing that for a depth-$d$, width-$w$ ReLU neural network, roughly $O(kd\log w)$ samples\footnote{This matches the $O(k \log L)$ scaling upon substituting $L= O\big(w^d\big)$ for ReLU networks~\cite{bora2017compressed}.} suffice for their alternating phase projected gradient descent (APPGD) approach to obtain an accurate estimate of the signal, as long as there exists a proper initial vector $\bx^{(0)}$, in the sense that $\min \{\|\bx -\bx^{(0)}\|_2,\|\bx+\bx^{(0)}\|_2\} \le c \|\bx\|_2$ for some small positive constant $c$. 
However, as we know from sparse phase retrieval, designing a practical algorithm to find a proper initial vector with near-optimal sample complexity can be more difficult than designing the subsequent iterative algorithm that refines the initial guess, and this still remains as an open problem.   It is also important to note that~\cite{hyder2019alternating} assumes accurate projections onto the range of the generative model, but this is not guaranteed in practice, due to the use of approximations \cite{shah2018solving,raj2019gan}.

Existing initialization methods for sparse phase retrieval typically first estimate the support of the sparse signal, and then perform a power method on a sub-matrix corresponding to the estimated support set to calculate an initial vector. Without extra assumptions beyond sparsity, these methods have a suboptimal sample complexity of $O(s^2 \log n)$. For phase retrieval with generative models, the situation is even worse -- there is no theoretically-guaranteed initialization method at all. To address this gap, we provide recovery guarantees for a spectral initialization method of phase retrieval with generative models.  We emphasize that our guarantees only concern the global optima of a suitably-defined optimization problem; see~\eqref{eq:main_opt} below. Similarly to~\cite{hyder2019alternating}, the caveat remains that practical solutions may fail to find such optima (e.g., due to inexact projections).

We assume that $\bx$ is in the range of $G\,:\, B_2^k(r) \to \bbR^n$ (again assumed to be $L$-Lipschitz continuous), and we assume that the noise terms $\eta_1,\eta_2,\ldots,\eta_m$ are bounded as follows:
\begin{equation}
    \frac{\|\bm{\eta}\|_2}{\sqrt{m}} \le c_0 \|\bx\|_2,\label{eq:bdd_noise}
\end{equation}
where $c_0>0$ is a sufficiently small constant. This assumption states that the signal to noise ratio (SNR) is sufficiently large, and has also been made in relevant existing works (see, e.g.,~\cite[Theorem~3]{zhang2017nonconvex}).  Moreover, we assume that
\begin{equation}
    \|\bm{\eta}\|_\infty \le c_1 \|\bx\|_2,\label{eq:bdd_noise0}
\end{equation}
where $c_1>0$ is also a sufficiently small constant. This assumption is satisfied, for example, when (assuming~\eqref{eq:bdd_noise}),
\begin{equation}\label{eq:eq8eq10}
 \|\bm{\eta}\|_\infty = O\left(\frac{\|\bm{\eta}\|_2}{\sqrt{m}}\right),
\end{equation}
i.e., when none of the noise entries are unusually large. Again, similar assumptions have been made in existing works such as~\cite[Theorem~2]{chen2017solving}, and~\cite[Theorem~2]{zhang2016provable}.\footnote{When $\bm{\eta}$ is Gaussian with $\bm{\eta} \sim \calN(\bm{0},\sigma^2 \bI_m)$,~\eqref{eq:eq8eq10} is not guaranteed, but any ``truncated Gaussian'' does satisfy~\eqref{eq:eq8eq10}. In addition, to circumvent this issue under Gaussian noise, we may instead assume that $\sigma \sqrt{\log m} \le c_0 \|\bx\|_2$ in~\eqref{eq:bdd_noise} (and remove the assumption in~\eqref{eq:bdd_noise0}).  We also note that in \cite{chen2017solving,zhang2016provable}, the authors consider an intensity based measurement model, and accordingly, the corresponding assumption is for $\frac{\|\bm{\eta}\|_\infty}{\|\bx\|_2^2}$ instead of for $\frac{\|\bm{\eta}\|_\infty}{\|\bx\|_2}$.}

Let $\lambda$ be defined as
\begin{equation}\label{eq:def_lambda}
    \lambda := \sqrt{\frac{\pi}{2}} \cdot \frac{1}{m}\sum_{i=1}^m y_i.
\end{equation}
Using sub-Gaussian concentration~\cite[Proposition 5.10]{vershynin2010introduction}, $\lambda$ is close to $\|\bx\|_2$ with high probability given enough samples. In the following, we focus on estimating the normalized signal vector $\bar{\bx} := \frac{\bx}{\|\bx\|_2}$. For this purpose, similar to the idea in~\cite{liu2020sample}, we consider a normalized generative model $\tilde{G}\,:\, \calD \to \calS^{n-1}$, where $\calD :=\{\bz \in B_2^k(r)\,:\, \|G(\bz)\|_2 > R_{\min}\}$ for some $R_{\min}>0$,\footnote{As discussed in~\cite{liu2020sample}, the dependence on $R_{\min}$ in the sample complexity is very mild. Under the typical scaling $L = n^{O(d)}$ for a $d$-layer neural network, the scaling laws remain unchanged even with $R_{\min} = \frac{1}{n^{O(d)}}$.} $\calS^{n-1}$ denotes the unit sphere in $\bbR^n$, and $\tilde{G}(\bz) = \frac{G(\bz)}{\|G(\bz)\|_2}$. Then, $\tilde{G}$ is $\tilde{L}$-Lipschitz continuous with $\tilde{L} = \frac{L}{R_{\min}}$. 

In the following, we adopt an idea from \cite{zhang2017nonconvex} of considering an empirical matrix with a truncation operation.  We note that \cite{zhang2017nonconvex} only considered doing so for general phase retrieval; we are not aware of any work doing so for constrained settings (e.g., with sparse or generative priors), and we found that handling such settings required substantial additional effort.  In more detail, we consider the following matrix defined with a truncation operation (see Remark \ref{rem:truncation} below for a discussion comparing to the non-truncated counterpart):
\begin{equation}\label{eq:def_bV}
 \bV := \frac{1}{m}\sum_{i = 1}^m y_i \ba_i \ba_i^T \mathbf{1}_{\{l \lambda < y_i < u \lambda\}},
\end{equation}
where $l,u$ are positive constants satisfying $1<l<u$, and $\lambda$ is given in \eqref{eq:def_lambda}. Note that for the noiseless case with $y_i = |\langle \ba_i,\bx\rangle|$, in accordance with the above-mentioned closeness between $\lambda$ and $\|\bx\|_2$, it is useful to consider the following expectation:
\begin{equation}\label{eq:intuition_main}
 \bJ := \bbE\left[\frac{1}{m}\sum_{i = 1}^m y_i \ba_i \ba_i^T \mathbf{1}_{\{l \|\bx\|_2 < y_i < u \|\bx\|_2\}}\right] = \|\bx\|_2 (\gamma_0 \bI_n + \beta_0 \bar{\bx}\bar{\bx}^T),
\end{equation}
where for $g \sim \calN(0,1)$, we define $\gamma_0 := \bbE[|g|\mathbf{1}_{\{l < |g| < u\}}]$ and $\beta_0 := \bbE\big[|g|^3 \mathbf{1}_{\{l < |g| < u\}}\big] - \gamma_0$ ({\em cf.} Lemma~\ref{lem:exp_emp_mat} in Appendix~\ref{sec:add_lemmas}).
Based on the observation that $\bar{\bx}$ is a leading eigenvector of $\bJ$ and the assumption $\bx \in \mathrm{Range}(G)$, we consider using the following optimization problem to find an $\hat{\bx}$ that approximates $\bar{\bx}$:
\begin{equation}\label{eq:main_opt}
 \hat{\bx} :=  \arg \max_{\bw \in \tilde{G}(\calD)} \quad \bw^T \bV \bw.
\end{equation}
In addition, in order to take the norm of $\bx$ into account, we set the the initial vector $\bx^{(0)}$ as 
\begin{equation}\label{eq:x0}
 \bx^{(0)} = \lambda \hat{\bx}.
\end{equation}
We show that roughly $O(k \log L)$ samples suffice for ensuring the closeness between $\bx^{(0)}$ and $\bx$ (up to a global signal flip).  Formally, we have the following.

\begin{theorem}\label{thm:main_init_gen}
    Consider the model \eqref{eq:measurement_model} with i.i.d.~Gaussian measurements and noise satisfying \eqref{eq:bdd_noise}--\eqref{eq:bdd_noise0}, and let $\hat{\bx}$ be as defined in \eqref{eq:main_opt}.
 Suppose that\footnote{This is weaker than the assumption that $\|\bx\|_2 > R_{\min}$ made in~\cite{liu2020sample}.  Moreover, even if $\|\bx\|_2 \le R_{\min}$, the condition $\bar{\bx} \in \tilde{G}(\calD)$ may be satisfied (e.g., when $G$ is a ReLU neural network generative model with no offsets, then the range of $G$ becomes a cone, and if $0<\|\bx\|_2 \le R_{\min}$, we can choose a sufficiently large $C$ such that $C\|\bx\|_2 > R_{\min}$, with $C \bx \in {\rm Range}(G)$).} 
     $\bar{\bx} \in \tilde{G}(\calD)$ and $u > l > 1+c_1$, where $c_1 \ge \frac{\|\bm{\eta}\|_\infty}{\|\bx\|_2} $ appears in~\eqref{eq:bdd_noise0}. Given a sufficiently small positive constant $c$,\footnote{Note that the smaller $c$ is taken to be here, the smaller the constant $c_0$ needs to be in the assumption \eqref{eq:bdd_noise}.} when $m = \Omega\big(k\log ( \tilde{L} n r)\big)$, we have with probability $1-e^{-\Omega(m)}$ that 
 \begin{equation}\label{eq:normalized_bd}
  \min \{\|\hat{\bx}-\bar{\bx}\|_2,\|\hat{\bx}+\bar{\bx}\|_2\} < 0.9 c.
 \end{equation}
    In addition, we have with probability $1-e^{-\Omega(m)}$ that $\bx^{(0)} = \lambda \hat{\bx}$ satisfies 
 \begin{equation}\label{eq:unnormalized_bd}
   \min \{\|\bx^{(0)}-\bx\|_2,\|\bx^{(0)}+\bx\|_2\} < c \|\bx\|_2.
 \end{equation}
\end{theorem}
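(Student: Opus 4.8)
The plan is to treat \eqref{eq:main_opt} as a quadratic maximization over the intrinsically $k$-dimensional set $\tilde G(\calD)\subseteq\calS^{n-1}$, to show that $\bw^T\bV\bw$ is \emph{uniformly} (over $\bw\in\tilde G(\calD)$) close to the quadratic form of the idealized population matrix $\bJ$ from \eqref{eq:intuition_main}, and then to exploit feasibility of $\bar\bx$. First I would record the population picture: by Lemma~\ref{lem:exp_emp_mat}, $\bJ=\|\bx\|_2(\gamma_0\bI_n+\beta_0\bar\bx\bar\bx^T)$, and the hypothesis $u>l>1+c_1>1$ forces $\beta_0=\bbE[|g|^3\mathbf{1}_{\{l<|g|<u\}}]-\bbE[|g|\mathbf{1}_{\{l<|g|<u\}}]>0$ (strictly, since $|g|^3>|g|$ on the truncation event). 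Hence for every unit vector $\bw$ one has $\bw^T\bJ\bw=\|\bx\|_2\big(\gamma_0+\beta_0\langle\bw,\bar\bx\rangle^2\big)$, uniquely maximized at $\bw=\pm\bar\bx$ with spectral gap $\|\bx\|_2\beta_0$.

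\textbf{Decoupling via conditioning.} The key structural observation is that the coefficient $\tilde y_i:=y_i\mathbf{1}_{\{l\lambda<y_i<u\lambda\}}$, and $\lambda$ itself, are measurable with respect to $\{\langle\ba_i,\bar\bx\rangle\}_{i=1}^m$ and $\{\eta_i\}_{i=1}^m$ (since $y_i=\|\bx\|_2|\langle\ba_i,\bar\bx\rangle|+\eta_i$), and that $\tilde y_i\in[0,u\lambda]$ is bounded. Decomposing $\bw=\langle\bw,\bar\bx\rangle\bar\bx+\bw_\perp$ with $\bw_\perp\perp\bar\bx$ and using independence of $\langle\ba_i,\bw_\perp\rangle$ and $\langle\ba_i,\bar\bx\rangle$, conditioning on these variables gives $\bbE[\bw^T\bV\bw\mid\cdot]=\langle\bw,\bar\bx\rangle^2\cdot\frac1m\sum_i\tilde y_i\langle\ba_i,\bar\bx\rangle^2+(1-\langle\bw,\bar\bx\rangle^2)\cdot\frac1m\sum_i\tilde y_i$. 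I would then establish, on a high-probability event, that (i) $\lambda=(1+o(1))\|\bx\|_2$ by sub-Gaussian concentration of $\frac1m\sum_i|\langle\ba_i,\bx\rangle|$ together with $\frac1m\|\bm{\eta}\|_1\le m^{-1/2}\|\bm{\eta}\|_2\le c_0\|\bx\|_2$; (ii) the two scalar averages above concentrate (one-dimensional sub-exponential concentration) around $\bbE[\tilde y_i\langle\ba_i,\bar\bx\rangle^2]$ and $\bbE[\tilde y_i]$; and (iii) these expectations lie within $O(c_0+c_1+m^{-1/2})\|\bx\|_2$ of the clean values $\|\bx\|_2(\gamma_0+\beta_0)$ and $\|\bx\|_2\gamma_0$, via continuity of $t\mapsto\bbE[|g|^p\mathbf{1}_{\{t<|g|<\cdot\}}]$ in the thresholds and the fact that $l>1+c_1\ge 1+\|\bm{\eta}\|_\infty/\|\bx\|_2$ keeps $|\langle\ba_i,\bx\rangle|=y_i-\eta_i$ bounded away from $0$ on the truncation event, so that replacing $y_i$ by $|\langle\ba_i,\bx\rangle|$ and $l\lambda,u\lambda$ by $l\|\bx\|_2,u\|\bx\|_2$ costs only a relative $O(c_0+c_1+m^{-1/2})$. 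Thus $\bbE[\bw^T\bV\bw\mid\cdot]$ is uniformly within $O(c_0+c_1+m^{-1/2})\|\bx\|_2$ of $\bw^T\bJ\bw$.

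\textbf{Uniform fluctuation bound over the generative range.} This is the crux. With the conditioning variables fixed, $\bw^T\bV\bw-\bbE[\bw^T\bV\bw\mid\cdot]$ splits (via the same decomposition of $\langle\ba_i,\bw\rangle$) into a term linear in the independent Gaussians $\{\ba_i-\langle\ba_i,\bar\bx\rangle\bar\bx\}$ and a term quadratic in them, carrying the bounded fixed weights $\tilde y_i=O(\|\bx\|_2)$; for each fixed $\bw$ both are $O(\|\bx\|_2/\sqrt m)$-sub-exponential around zero. I would cover $\calD\subseteq B_2^k(r)$ by an $(\varepsilon/\tilde L)$-net, push it through $\tilde G$ to obtain an $\varepsilon$-net $\calM\subseteq\tilde G(\calD)$ with $|\calM|\le(C\tilde L r/\varepsilon)^k$, and union-bound to get a uniform deviation of order $\|\bx\|_2\sqrt{k\log(\tilde L r/\varepsilon)/m}$ over $\calM$. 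To pass from $\calM$ to all of $\tilde G(\calD)$, I would use $|\bw_1^T\bV\bw_1-\bw_2^T\bV\bw_2|\le 2\|\bV\|_{2\to2}\|\bw_1-\bw_2\|_2$ with the crude bound $\|\bV\|_{2\to2}\le u\lambda\,\|\tfrac1m\sum_{i=1}^m\ba_i\ba_i^T\|_{2\to2}=O(\|\bx\|_2\,n/m)$ (valid for $m\lesssim n$), and choose $\varepsilon=\Theta(m/n)$, which makes the discretization error negligible while only replacing $\log(\tilde L r/\varepsilon)$ by $O(\log(\tilde L n r))$. Combining with the previous step, on an event of probability $1-e^{-\Omega(m)}$, $\sup_{\bw\in\tilde G(\calD)}\big|\bw^T\bV\bw-\|\bx\|_2(\gamma_0+\beta_0\langle\bw,\bar\bx\rangle^2)\big|\le\rho\|\bx\|_2$, where $\rho=O\big(c_0+c_1+\sqrt{k\log(\tilde L n r)/m}\big)$ can be made as small as desired by taking $m/(k\log(\tilde Lnr))$ large and $c_0,c_1$ small.

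\textbf{Conclusion, and the main obstacle.} Since $\bar\bx\in\tilde G(\calD)$ is feasible and $\hat\bx$ maximizes \eqref{eq:main_opt}, $\hat\bx^T\bV\hat\bx\ge\bar\bx^T\bV\bar\bx$; applying the last bound at both $\hat\bx$ and $\bar\bx$ and cancelling the common $\|\bx\|_2\gamma_0$ term yields $\|\bx\|_2\beta_0\big(1-\langle\hat\bx,\bar\bx\rangle^2\big)\le 2\rho\|\bx\|_2$, i.e.\ $1-\langle\hat\bx,\bar\bx\rangle^2\le 2\rho/\beta_0$. As $\min\{\|\hat\bx-\bar\bx\|_2,\|\hat\bx+\bar\bx\|_2\}^2=2(1-|\langle\hat\bx,\bar\bx\rangle|)\le 2(1-\langle\hat\bx,\bar\bx\rangle^2)\le 4\rho/\beta_0$, taking the implied constant in $m=\Omega(k\log(\tilde Lnr))$ large enough (and $c_0,c_1$ small, as assumed) forces the right-hand side below $(0.9c)^2$, giving \eqref{eq:normalized_bd}. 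For \eqref{eq:unnormalized_bd}, with $\lambda=(1+o(1))\|\bx\|_2$ we get $\min\{\|\lambda\hat\bx-\bx\|_2,\|\lambda\hat\bx+\bx\|_2\}\le\lambda\min\{\|\hat\bx-\bar\bx\|_2,\|\hat\bx+\bar\bx\|_2\}+|\lambda-\|\bx\|_2|\le(1+o(1))(0.9c)\|\bx\|_2+o(1)\|\bx\|_2<c\|\bx\|_2$. The hard part is Step 3: uniform concentration of the truncated quadratic form over $\tilde G(\calD)$ in the sample-deficient regime $m\ll n$. The difficulty is two-fold — the indicator $\mathbf{1}_{\{l\lambda<y_i<u\lambda\}}$ couples to the signal direction $\bar\bx$ and to the data-dependent scalar $\lambda$, which is what necessitates the conditioning reduction of Step 2 before any net argument; and $\|\bV\|_{2\to2}$ is only $O(n/m)$ rather than $O(1)$, so the net must be taken polynomially fine in $n/m$, which is precisely what produces the extra $\log n$ inside $m=\Omega(k\log(\tilde L n r))$.
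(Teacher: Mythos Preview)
Your proposal is correct and follows the same high-level strategy as the paper: establish that $\bw^T\bV\bw$ is uniformly close over $\bw\in\tilde G(\calD)$ to the population quadratic $\|\bx\|_2(\gamma_0+\beta_0\langle\bw,\bar\bx\rangle^2)$ via a net argument, then combine feasibility of $\bar\bx$ with optimality of $\hat\bx$ to force $\langle\hat\bx,\bar\bx\rangle^2\approx 1$. The technical route to decouple the data-dependent indicator, however, differs. The paper never conditions: on the event $\calE=\{|\lambda/\|\bx\|_2-1|<c\}$ and using $|\eta_i|\le c_1\|\bx\|_2$, it sandwiches $\mathbf{1}_{\{l\lambda<y_i<u\lambda\}}$ between the \emph{deterministic} indicators $\mathbf{1}_{\{l(1+c)<|\ba_i^T\bar\bx|<u(1-c)\}}$ and $\mathbf{1}_{\{l(1-c)<|\ba_i^T\bar\bx|<u(1+c)\}}$, and then works with four explicit matrices $\bU,\bW,\check\bU,\check\bW$ (signal/noise part $\times$ inner/outer threshold), bounding $\bar\bx^T\bV\bar\bx$ from below through $\bU,\check\bW$ and $\hat\bx^T\bV\hat\bx$ from above through $\bW,\check\bW$. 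Your conditioning on $\{\langle\ba_i,\bar\bx\rangle,\eta_i\}$ accomplishes the same decoupling more structurally, exposing directly that the only nontrivial fluctuation lives in the $\bar\bx^\perp$-component of $\ba_i$; this is cleaner but needs (as you implicitly note) that on $\calE$ the weights satisfy $\tilde y_i\le u\lambda=O(\|\bx\|_2)$ and $\tilde y_i|\langle\ba_i,\bar\bx\rangle|=O(\|\bx\|_2)$, so the conditional sub-exponential parameters are uniformly controlled. For the off-net error the paper bounds each summand pointwise via $\max_i\|\ba_i\|_2\le\sqrt{2n}$ together with the truncation, whereas you aggregate through $\|\bV\|_{2\to 2}\le u\lambda\,\|\tfrac1m\bA^T\bA\|_{2\to2}=O(\|\bx\|_2\,n/m)$; both force the net scale down to $\varepsilon\asymp c\,m/n$ and hence produce the same extra $\log n$ inside $m=\Omega(k\log(\tilde L nr))$. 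The endgame (from $1-\langle\hat\bx,\bar\bx\rangle^2\le 2\rho/\beta_0$ to \eqref{eq:normalized_bd}, and then to \eqref{eq:unnormalized_bd} via $|\lambda-\|\bx\|_2|\le c\|\bx\|_2$) is identical to the paper's.
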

The proof is given in Appendix~\ref{sec:main_proof}. Using well-established chaining arguments as those in~\cite{bora2017compressed,liu2020sample,liu2020generalized}, the sample complexity $\Omega(k \log (\tilde{L} n r))$ can be reduced to $\Omega(k \log (\tilde{L} r))$. However, since for a typical $d$-layer neural network, we have the Lipschitz constant $L = n^{O(d)}$ (and thus $\tilde{L}$ and $r$ can also be of order $n^{O(d)}$)~\cite{bora2017compressed}, such a reduction is typically of minor importance. 

\vspace*{1ex}
\section{Sparse Phase Retrieval}
\vspace*{1ex}

The proof of Theorem~\ref{thm:main_init_gen} relies on the fact that $\bar{\bx}$ lies in ${\rm Range}(\tilde{G}) \subseteq \mathcal{S}^{n-1}$, with the $\delta$-covering number being upper bounded by $\big(\frac{\tilde{L}r}{\delta}\big)^k$.  Letting $\Sigma_{s}^n$ be the set of all $s$-sparse unit vectors in $\bbR^n$, we know that the $\delta$-covering number of $\Sigma_s^n$ is upper bounded by $\binom{n}{s}\big(\frac{1}{\delta}\big)^s \le \big(\frac{e n}{\delta s}\big)^s$~\cite{baraniuk2008simple}. Based on this observation, we can readily adapt the proof of Theorem~\ref{thm:main_init_gen} to obtain the following theorem. 
\begin{theorem}\label{thm:main_init_sparse}
    Consider the model \eqref{eq:measurement_model} with i.i.d.~Gaussian measurements and noise satisfying \eqref{eq:bdd_noise}--\eqref{eq:bdd_noise0}, and suppose that $\bx \in \bbR^n$ is $s$-sparse.  Let $\lambda$ be defined as in~\eqref{eq:def_lambda}. Suppose that $u > l > 1+c_1$, where $c_1 \ge \frac{\|\bm{\eta}\|_\infty}{\|\bx\|_2} $ appears in~\eqref{eq:bdd_noise0}, and $\bV$ is defined in~\eqref{eq:def_bV}. Let $\hat{\bx}$ be defined as
    \begin{equation}\label{eq:spca_opt_bV}
        \hat{\bx} := \arg \max_{\bw \in \Sigma_{s}^n} \bw^T \bV \bw.
    \end{equation}
    Then, when $m = \Omega\big(s\log n\big)$, we have with probability $1-e^{-\Omega(m)}$ that 
    \begin{equation}\label{eq:normalized_bd_sparse}
        \min \{\|\hat{\bx}-\bar{\bx}\|_2,\|\hat{\bx}+\bar{\bx}\|_2\} < 0.9 c,
    \end{equation}
    where $c>0$ is a sufficiently small constant. In addition, we have with probability $1-e^{-\Omega(m)}$ that $\bx^{(0)} = \lambda \hat{\bx}$ satisfies 
    \begin{equation}\label{eq:unnormalized_bd_sparse}
        \min \{\|\bx^{(0)}-\bx\|_2,\|\bx^{(0)}+\bx\|_2\} < c \|\bx\|_2.
    \end{equation}
\end{theorem}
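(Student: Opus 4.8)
The plan is to follow the proof of Theorem~\ref{thm:main_init_gen} essentially verbatim, since that argument never uses any property of $\tilde{G}(\calD)$ beyond the fact that it is a subset of the unit sphere $\calS^{n-1}$ whose $\delta$-covering number is at most $(\tilde{L}r/\delta)^k$. The only change needed is to replace that covering-number estimate with the standard bound $\log \mathcal{N}(\Sigma_s^n,\delta) \le s\log\!\big(\tfrac{en}{\delta s}\big)$ for the set of $s$-sparse unit vectors~\cite{baraniuk2008simple}; a failure probability of $e^{-\Omega(m)}$ can then be afforded once $m = \Omega\big(s\log\tfrac{n}{\delta s}\big)$, and since $\delta$ may be taken to be a sufficiently small absolute constant (indeed even polynomially small in $n$), this reduces to $m = \Omega(s\log n)$.

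Concretely, I would carry out the following steps. First, use sub-Gaussian concentration (e.g.,~\cite[Proposition~5.10]{vershynin2010introduction}) to show that $\lambda$ in~\eqref{eq:def_lambda} satisfies $|\lambda - \|\bx\|_2| \le \epsilon_0\|\bx\|_2$ with probability $1-e^{-\Omega(m)}$ for an arbitrarily small constant $\epsilon_0$ once $m = \Omega(s\log n)$; this lets the data-dependent truncation window $(l\lambda,u\lambda)$ be sandwiched between deterministic windows $(l'\|\bx\|_2, u'\|\bx\|_2)$ and, using $\|\bm{\eta}\|_\infty \le c_1\|\bx\|_2$ together with the hypothesis $l > 1+c_1$, to control the discrepancy between $\mathbf{1}_{\{l\lambda<y_i<u\lambda\}}$ and its noiseless counterpart in terms of $|\langle\ba_i,\bx\rangle|$. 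Second --- and this is the technical heart --- I would prove a uniform concentration bound of the form
\[
  \sup_{\bw\in\Sigma_s^n}\bigl|\bw^T\bV\bw - \bw^T\bJ\bw\bigr| \le \epsilon\,\|\bx\|_2,
\]
with probability $1-e^{-\Omega(m)}$ whenever $m = \Omega(s\log n)$, where $\bJ = \|\bx\|_2(\gamma_0\bI_n + \beta_0\bar{\bx}\bar{\bx}^T)$ is the noiseless expectation from~\eqref{eq:intuition_main}. For a fixed $\bw$ this follows from a Bernstein-type bound on $\tfrac1m\sum_i y_i\langle\ba_i,\bw\rangle^2\mathbf{1}_{\{l\lambda<y_i<u\lambda\}}$, whose summands have their $y_i$-factor effectively bounded by $u\lambda = O(\|\bx\|_2)$ after truncation and only chi-square (sub-exponential) tails from $\langle\ba_i,\bw\rangle^2$; the noise contribution is absorbed using~\eqref{eq:bdd_noise}--\eqref{eq:bdd_noise0} and the first step, and the supremum over $\Sigma_s^n$ is handled by a union bound over a $\delta$-net of cardinality $(en/\delta s)^s$ together with a Lipschitz-in-$\bw$ argument. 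Third, since $\beta_0 = \bbE\big[|g|^3\mathbf{1}_{\{l<|g|<u\}}\big] - \bbE\big[|g|\mathbf{1}_{\{l<|g|<u\}}\big] > 0$ (because $l>1$ forces $|g|^3 > |g|$ on the truncation event), $\bar{\bx}$ is the unique (up to sign) maximizer of $\bw\mapsto\bw^T\bJ\bw$ on the unit sphere; combining $\hat{\bx}^T\bV\hat{\bx}\ge\bar{\bx}^T\bV\bar{\bx}$ with the uniform bound yields $\beta_0\big(1-\langle\hat{\bx},\bar{\bx}\rangle^2\big)\le 2\epsilon$, hence $\min\{\|\hat{\bx}-\bar{\bx}\|_2,\|\hat{\bx}+\bar{\bx}\|_2\}^2 \le 4\epsilon/\beta_0$, which is below $(0.9c)^2$ once the implied constant in $m=\Omega(s\log n)$ is large enough, establishing~\eqref{eq:normalized_bd_sparse}. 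Finally,~\eqref{eq:unnormalized_bd_sparse} follows from $\bx^{(0)}=\lambda\hat{\bx}$, $\bx=\|\bx\|_2\bar{\bx}$ and the triangle inequality $\|\lambda\hat{\bx}\mp\|\bx\|_2\bar{\bx}\|_2 \le |\lambda-\|\bx\|_2| + \|\bx\|_2\min\{\|\hat{\bx}-\bar{\bx}\|_2,\|\hat{\bx}+\bar{\bx}\|_2\}$, where the $0.1c$ slack in~\eqref{eq:normalized_bd_sparse} absorbs $|\lambda-\|\bx\|_2| = \epsilon_0\|\bx\|_2$.

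I expect the uniform concentration in the second step to be the main obstacle. The difficulty is threefold: the data-dependent threshold $\lambda$ couples all the summands, so a decoupling/sandwiching step with deterministic windows is required before per-sample concentration applies; the terms $\langle\ba_i,\bw\rangle^2$ are only sub-exponential, so a Bernstein rather than Hoeffding bound --- with a correctly computed variance proxy that remains $O(\|\bx\|_2^2)$ after truncation --- is needed to obtain the $e^{-\Omega(m)}$ rate; and the net over $\Sigma_s^n$ has cardinality exponential in $s\log(n/\delta)$, so the per-point deviation probability must decay fast enough in $m$ to survive the union bound at the target sample size $m=\Omega(s\log n)$. All remaining ingredients are minor adaptations of the corresponding steps in the proof of Theorem~\ref{thm:main_init_gen}.
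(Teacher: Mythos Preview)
Your proposal is correct and matches the paper's own approach: the paper does not give a separate proof of Theorem~\ref{thm:main_init_sparse} but states that it follows by adapting the proof of Theorem~\ref{thm:main_init_gen} with the sparse covering-number bound $\log\calN(\Sigma_s^n,\delta)\le s\log(en/\delta s)$ in place of $k\log(\tilde{L}r/\delta)$, which is precisely your plan. The only stylistic difference is that the paper's argument for Theorem~\ref{thm:main_init_gen} does not prove a single uniform bound $\sup_{\bw}|\bw^T\bV\bw-\bw^T\bJ\bw|\le\epsilon\|\bx\|_2$ as you outline in Step~2, but instead handles the data-dependent $\lambda$ via an \emph{asymmetric} sandwich (matrices $\bU,\check{\bU}$ with a narrower window for the lower bound at the fixed point $\bar{\bx}$, and $\bW,\check{\bW}$ with a wider window for the upper bound over the net), reconciling the slightly different constants $(\gamma,\beta)$ vs.\ $(\gamma',\beta')$ at the end; your formulation is cleaner but would need the same sandwiching internally, and the two are equivalent up to absorbed constants.
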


\vspace*{0.5ex}

{\bf Implications regarding SPCA and support recovery:} We note that \eqref{eq:spca_opt_bV} is essentially a problem of sparse principal component analysis (SPCA)~\cite{zou2006sparse,ma2015sum}. Existing spectral initialization methods for sparse phase retrieval typically start from estimating the support of $\bx$, with an accurate estimate of the support requiring a suboptimal $O(s^2 \log n)$ sample complexity~\cite{cai2016optimal,wang2017sparse,jagatap2019sample}.  This serves as the major bottleneck of the sample complexity upper bound of the whole procedure (i.e., the initialization step and the subsequent iterative step), and it is unknown whether there is a practical and sample-optimal approach for estimating the support. In fact, to our knowledge, even the following question has not been answered previously:

\underline{Key question}: \textit{Does there exist any spectral-based method (computationally efficient or otherwise) attaining an accurate initial vector with near-optimal sample complexity?}  

We provide a positive answer to this question, showing that $O(s\log n)$ samples suffice to ensure that any optimal solution of the SPCA problem~\eqref{eq:spca_opt_bV} is close to the underlying (normalized) signal. 

Motivated by the strong sample complexity guarantee provided in Theorem~\ref{thm:main_init_sparse}, we propose a spectral initialization method for sparse phase retrieval as described in Algorithm~\ref{algo:pri_spca}. In particular, after calculating $\bV$ from $\bA$ and $\by$, we use algorithms for SPCA to solve~\eqref{eq:spca_opt_bV}. We observe via numerical experiments in Section~\ref{sec:numerical} that our initialization method outperforms several popular spectral initialization methods used for sparse phase retrieval. 

\begin{remark}
 In various works on SPCA, it has been shown that there exists a statistical-computational gap for achieving consistency and optimal convergence rates uniformly over a parameter space ( e.g., see~\cite{berthet2013complexity,wang2016statistical,cai2016optimal}).  However, the models therein are all different from ours (e.g., the spiked covariance model \cite{gao2017sparse}), so it is unclear to what extent we should expect similar computational difficulties.
\end{remark}

\begin{remark} \label{rem:truncation}
    The truncation function $\bm{1}_{\{l\lambda < y_i < u\lambda\}}$ in~\eqref{eq:def_bV} plays an important role in our analysis.  Specifically, it allows us to apply sub-exponential concentration bounds to certain quantities of interest, where the non-truncated version would involve heavier-tailed random variables (e.g., sub-Weibull of order $\alpha < 1$ \cite[Theorem~3.1]{hao2019bootstrapping}).  We believe that this distinction could be even more important when seeking uniform recovery guarantees, though we do not do so in this work. 
    In Section \ref{sec:numerical}, we present numerical results for applying SPCA to the non-truncated weighted empirical matrix
    \begin{equation}\label{eq:def_tilde_bV}
        \tilde{\bV} = \frac{1}{m}\sum_{i=1}^m y_i \ba_i\ba_i^T.
    \end{equation}
    We will see in our experiments that the method corresponding to the truncated empirical matrix $\bV$ outperforms the method corresponding to the non-truncated empirical matrix $\tilde{\bV}$, although the expectations of these two matrices have a very similar structure. 
    Thus, we believe that truncation may potentially help in overcoming the computational challenges discussed above.
\end{remark}

We note that~\cite{neykov2016agnostic,tan2017sparse} consider solving sparse phase retrieval via practical SPCA methods, in a more general setting considering an unknown link function. They consider a weighted empirical covariance matrix with no truncation, and the derived sample complexity has a quadratic dependence on $s$. 

\newpage
\section{Numerical Experiments}
\label{sec:numerical}

\begin{algorithm}[t]
\caption{A spectral initialization method for sparse phase retrieval based on SPCA (\texttt{PRI-SPCA})}
\label{algo:pri_spca}
\begin{algorithmic} 
\State {\bf Input}: $\bA$, $\by$, $l$, $u$
\State {\bf Output}: An initial vector $\bx^{(0)}$
\State 1) Calculate $\lambda$ from~\eqref{eq:def_lambda} and then calculate $\bV$ from~\eqref{eq:def_bV}.
\State 2) Use an SPCA algorithm to compute $\hat{\bx}$ according to~\eqref{eq:spca_opt_bV}.
\State 3) Let $\bx^{(0)} = \lambda \hat{\bx}$.
\end{algorithmic}
\end{algorithm} 
\vspace{1ex}

In this section, we perform synthetic experiments to complement our theoretical results.  We focus only on the sparse setting, in part because numerous existing algorithms are known for sparse priors but not for generative priors.  Having said this, generative priors come with their own unique practical challenges (e.g., approximate projection methods), and we believe that their empirical study and associated practical variations would also be of significant interest for future work.

We compare our method (\texttt{PRI-SPCA}; see Algorithm~\ref{algo:pri_spca}) with spectral initialization methods used in several popular algorithms for sparse phase retrieval: Thresholded Wirtinger flow (\texttt{ThWF})~\cite{cai2016optimal}, sparse truncated amplitude flow (\texttt{SPARTA})~\cite{wang2017sparse}, and compressive phase retrieval with alternating minimization (\texttt{CoPRAM})~\cite{jagatap2019sample}.\footnote{We use the MATLAB package shared by the authors of~\cite{jagatap2019sample} at~\url{https://github.com/GauriJagatap/model-copram}. The package not only contains the codes for \texttt{CoPRAM}, but also for \texttt{ThWF} and \texttt{SPARTA}.} We also compare with the approach of solving an SPCA problem with respect to $\tilde{\bV}$ ({\em cf.}~\eqref{eq:def_tilde_bV}), which is a non-truncated version of $\bV$. The corresponding method is termed \texttt{PRI-SPCA-NT}, with `NT' standing for no truncation. For~\texttt{PRI-SPCA}, we set $l =1$ and $u = 5$, as previously suggested in~\cite{zhang2017nonconvex}.  

Various algorithms have been developed to tackle the SPCA problem, see, e.g.,~\cite{moghaddam2006spectral,journee2010generalized,ma2013sparse}. In our experiments, we present the numerical results using the generalized Rayleigh quotient iteration (\texttt{GRQI}) method ~\cite{kuleshov2013fast}, but we observed that other popular SPCA methods such as the truncated power method (\texttt{TPower})~\cite{yuan2013truncated} give similar averaged relative errors.\footnote{Using \texttt{TPower} also results in small relative error for our \texttt{PRI-SPCA} method, but in general, we found it to be less stable than \texttt{GRQI} in the absence of a careful design of its starting point.} For \texttt{GRQI}, we set the deflation parameter to be $0.2$, and the total number of iterations to be $100$. Since the sparsity level $s$ is typically assumed to be known a priori for \texttt{SPARTA} and \texttt{CoPRAM} (\texttt{ThWF} assumes the knowledge of a parameter that plays the similar role as $s$), for \texttt{GRQI}, we set the parameter about the maximum number of non-zero indices as $s$. Note that the time complexity of each iteration of \texttt{GRQI} is $O(ns + s^3)$, which scales mildly compared to the subsequent $O(s^2 n \log n)$ time complexity of the iterative methods used in the three approaches (for $\texttt{ThWF}$, the corresponding time complexity is $O(n^2 \log n)$. See~\cite[Table~I]{jagatap2019sample}). The number of power method steps used in all the methods is fixed to be $100$. 

The authors of~\cite{kuleshov2013fast} suggest using the column of maximal norm as a starting point of~\texttt{GRQI}. Because this policy has a time complexity of $O(n^2)$, we instead choose the column corresponding to the largest diagonal entry as the starting point. Due to the structure of $\bV$ ({\em cf.}~\eqref{eq:intuition_main}), this modified strategy is very similar to the suggested policy, but with a smaller time complexity requirement. For a fair comparison, all power methods used in \texttt{ThWF},~\texttt{SPARTA}, and~\texttt{CoPRAM} are similarly initialized. 
For the initialization method used in~\texttt{ThWF}, we set the tuning parameter as $\alpha = 0.1$, as suggested in \cite{cai2016optimal}. Similarly, we set $|\bar{\calI}| = \lceil \frac{1}{6} m \rceil$ for the initialization method of~\texttt{SPARTA}.  

The support of the signal vector $\bx$ is uniformly random, and the non-zero entries are generated from i.i.d.~standard normal distributions. The noise vector $\bm{\eta}$ is generated from the distribution $\calN(\bm{0},\sigma^2 \|\bx\|_2^2 \bI_m)$, where $\sigma > 0$ represents the noise level. The initial relative error is defined as 
\begin{equation}
 \frac{\min \{\|\bx-\bx^{(0)}\|_2,\|\bx+\bx^{(0)}\|_2\}}{\|\bx\|_2},
\end{equation}
where $\bx^{(0)}$ is an initial vector produced by each of the above mentioned initialization methods. The reported relative error is averaged over $50$ random trials, and error bars indicate a single standard deviation over these trials. All numerical experiments are conducted using MATLAB R2014b on a machine with an Intel Core i5 CPU at 1.8 GHz and 8GB RAM.

\vspace*{0.5ex}

\begin{enumerate}[leftmargin=5ex,itemsep=0ex,topsep=0.25ex]
 \item \textbf{Sample size effect:} In our first experiment, similarly to~\cite{wang2017sparse}, we consider noiseless measurements, and we fix the signal dimension as $n = 1000$. We consider $s$ equaling both $10$ and $20$. The number of measurements $m$ takes values in $\{100,200,\dotsc,3000\}$. From Figure~\ref{fig:exp1_m}, we observe that our \texttt{PRI-SPCA} method almost always achieves the smallest initial relative error, thus outperforming all other methods, including the \texttt{PRI-SPCA-NT} method, which uses the non-truncated empirical matrix $\tilde{\bV}$.

\vspace*{0.5ex}

\item \textbf{Sparsity effect}: We consider noiseless measurements and fix $n =1000$, and set $m = 1000$ or $2000$, and vary $s$ in $\{5,10,\dotsc,50\}$. We observe from Figure~\ref{fig:exp2_s} that for this experiment, our~\texttt{PRI-SPCA} method also gives the best empirical performance.

\vspace*{0.5ex}
 
\item \textbf{Noise effect}: To demonstrate that \texttt{PRI-SPCA} also performs well in the noisy case, we vary $\sigma$ in $\{0.1,0.2,\dotsc,1\}$. In addition, similarly to~\cite{wang2017sparse}, we fix $n =1000$, $m = 3000$, and consider $s$ equaling both $10$ and $20$. For this experiment, similarly to that in~\cite{jagatap2019sample}, we do not compare with \texttt{ThWF}, as it uses quadratic measurements. From Figure~\ref{fig:exp3_sigma}, we observe that~\texttt{PRI-SPCA} outperforms~\texttt{SPARTA} and~\texttt{CoPRAM}, and it is better than~\texttt{PRI-SPCA-NT} at low noise levels.
\end{enumerate}

\vspace*{0.5ex}

In Appendix \ref{sec:extra_numerical}, we additionally compare with other initialization methods in terms of the effectiveness when combining with the subsequent iterative algorithm of~\texttt{CoPRAM}. In particular, we compare the relative error and empirical success rate in the noiseless case, whereas for the noisy case, we compare the relative error when using approximately the same running time. We observe that for most cases, our~\texttt{PRI-SPCA} method is superior to all other methods we compare with. This again suggests that \texttt{PRI-SPCA} can serve as a good initialization method in practice. 

\begin{figure}[p]
\centering

 \subfloat{\includegraphics[width=.5\columnwidth]{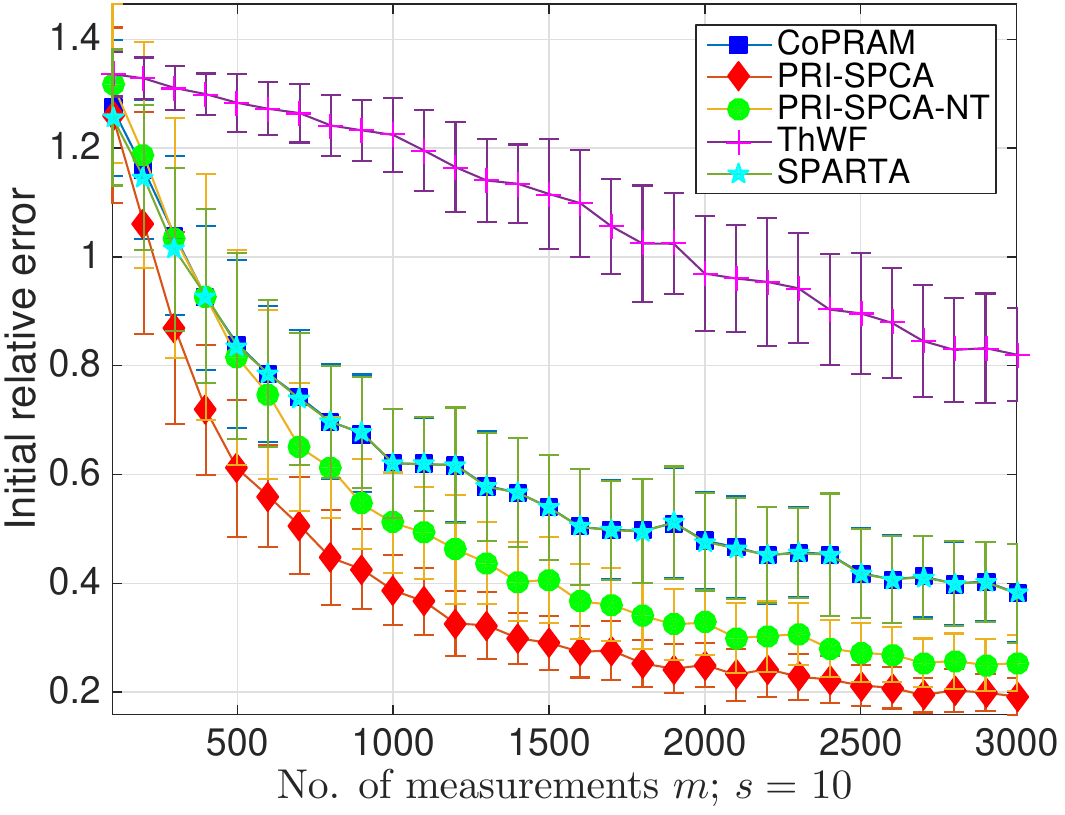}}\hfill
\subfloat{\includegraphics[width=.475\columnwidth]{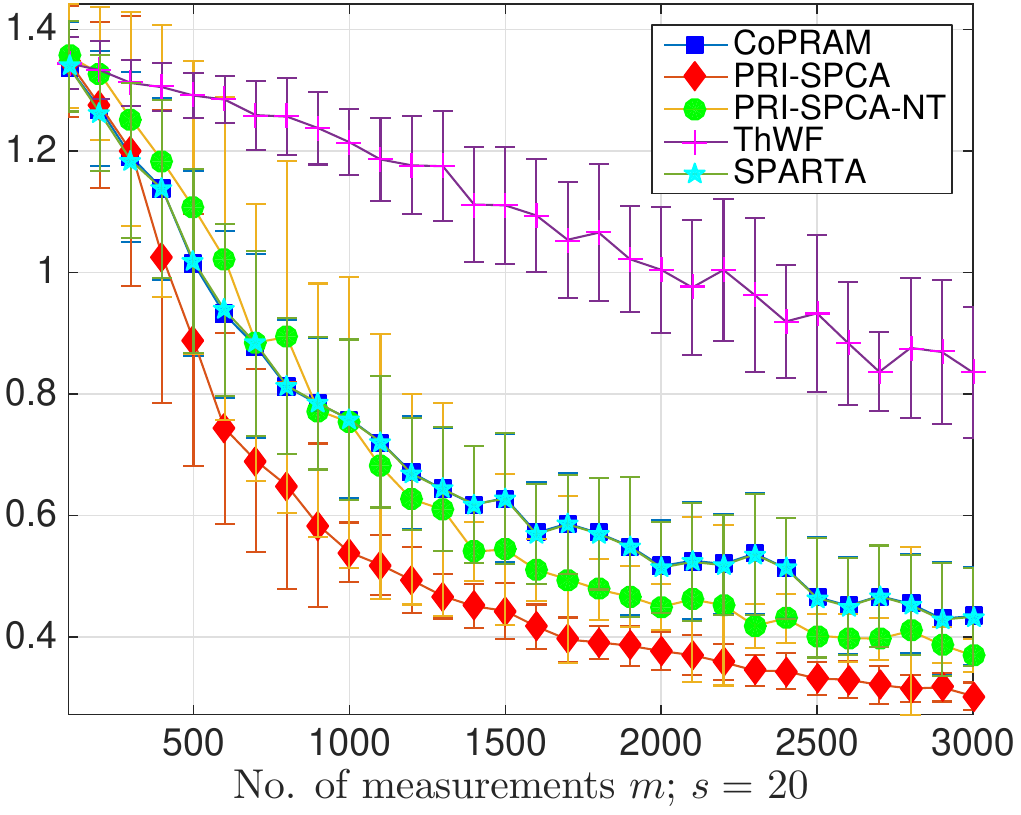}}
\caption{Average relative error vs.~number of measurements $m$ with $s=10$ (Left), $s=20$ (Right).}\label{fig:exp1_m}

 \subfloat{\includegraphics[width=.5\columnwidth]{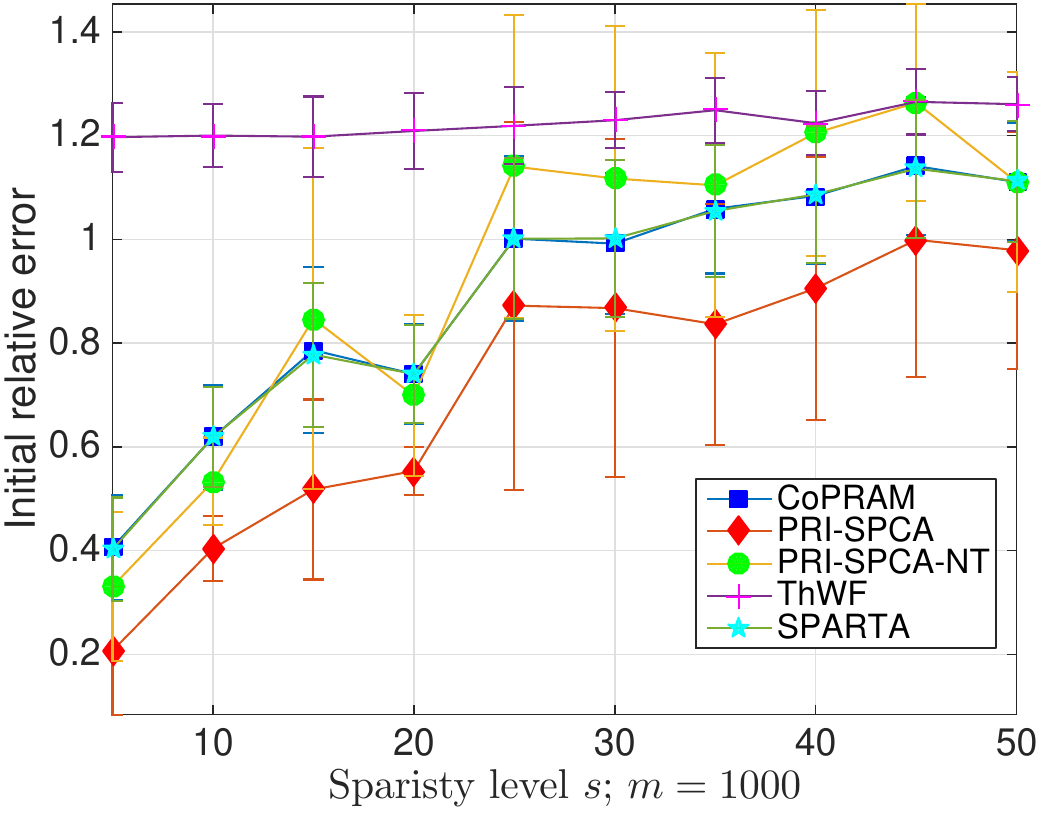}}\hfill
\subfloat{\includegraphics[width=.475\columnwidth]{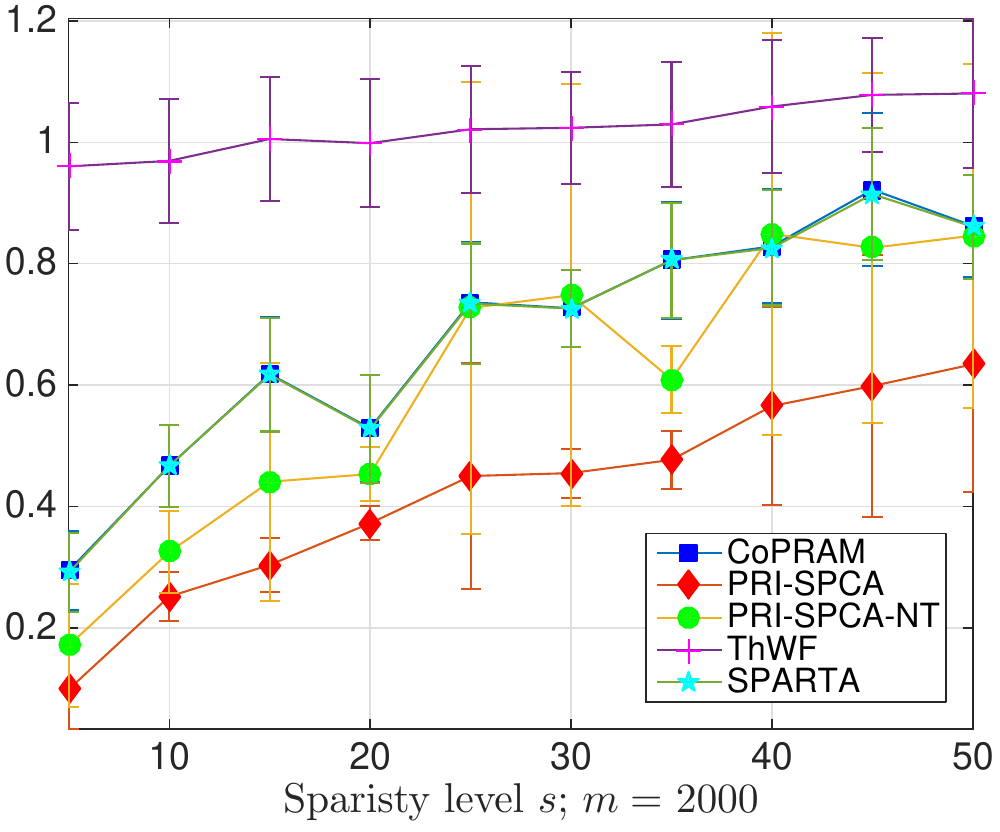}}
\caption{Average relative error vs.~sparsity level $s$ with $m=1000$ (Left), $m=2000$ (Right).}\label{fig:exp2_s} 

 \subfloat{\includegraphics[width=.5\columnwidth]{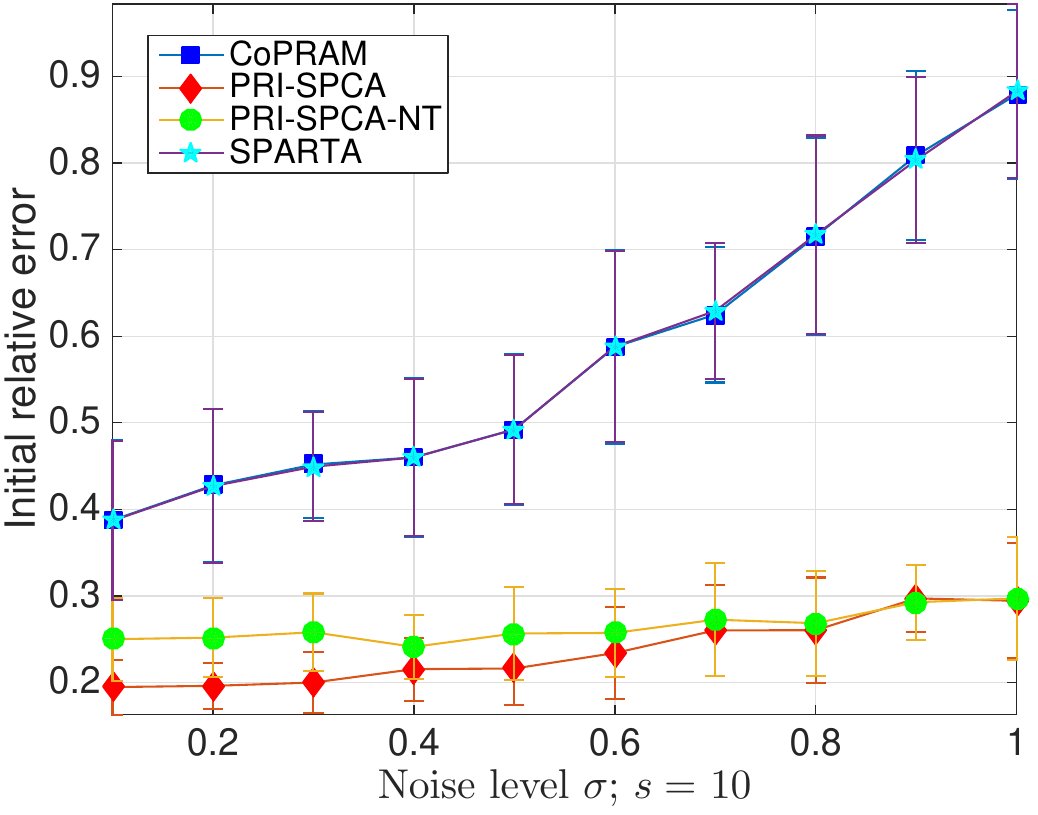}}\hfill
\subfloat{\includegraphics[width=.475\columnwidth]{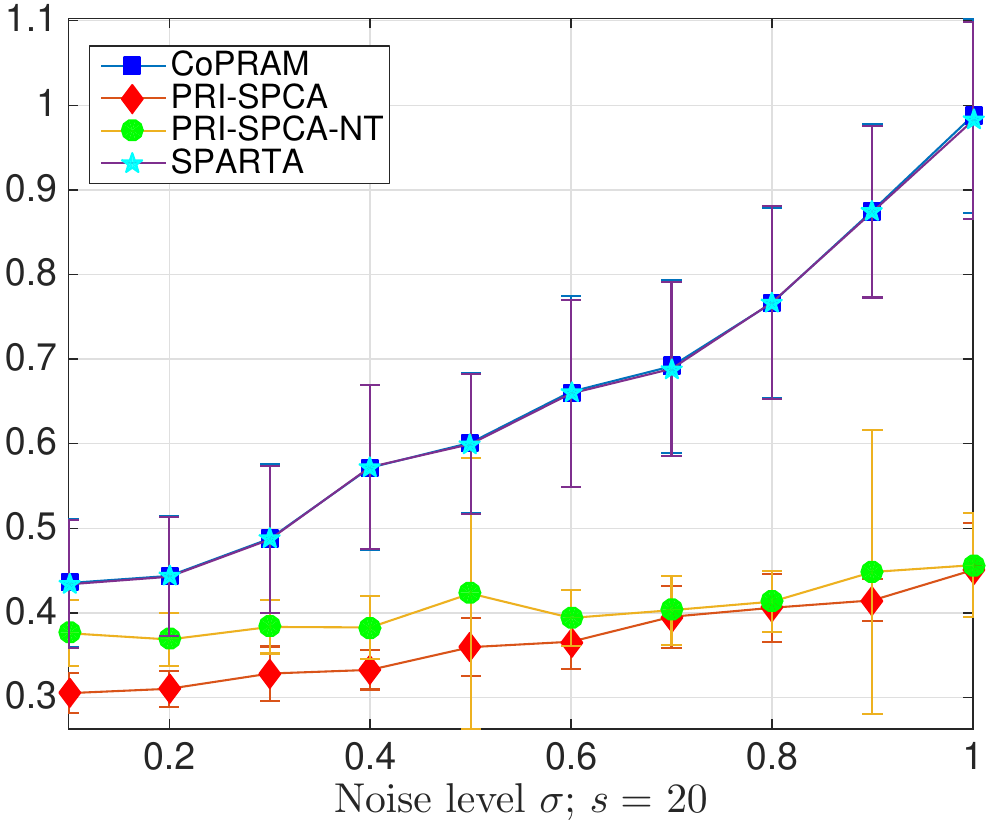}}
\caption{Average relative error vs.~noise level $\sigma$  with $s=10$ (Left), $s=20$ (Right).}\label{fig:exp3_sigma}
\end{figure}

\vspace*{1ex}
\section{Conclusion and Future Work}
\label{sec:conclusion}
\vspace*{1ex}

We have provided a near-optimal recovery guarantee for amplitude-based loss minimization for phase retrieval with generative priors. In addition, motivated by the bottleneck of the spectral initialization for compressive phase retrieval, we have provided near-optimal recovery guarantees with respect to the optimal solutions of optimization problems designed for spectral initialization. 
An immediate future research direction is to design provably sample-optimal {\em and computationally efficient} spectral initialization methods that build on our theoretical results. In addition, extensions to complex-valued phase retrieval~\cite{krahmer2020complex,pang2020phase} would be of significant interest.  

\smallskip
{\bf Acknowledgment.} This work was supported by the Singapore National Research Foundation (NRF) under grant R-252-000-A74-281, and the Singapore Ministry of Education (MoE) under grants R-146-000-250-133 and R-146-000-312-114.

\newpage
\bibliographystyle{myIEEEtran}
\bibliography{writeups}

\newpage
\appendix

{\centering
    {\huge \bf Supplementary Material}
    
    {\Large \bf Towards Sample-Optimal Compressive Phase Retrieval with Sparse and Generative Priors (NeurIPS 2021)
        
    \large \normalfont Zhaoqiang Liu, Subhroshekhar Ghosh, and Jonathan Scarlett \par }  
 
}

\vspace*{-1ex}
\section{General Auxiliary Results}
\label{sec:general_aux}

In this section, we provide some useful auxiliary results that will be used throughout our analysis. First, we have the following basic concentration inequality.

\begin{lemma}\label{lem:norm_pres}
    {\em (\hspace{1sp}\cite[Lemma~1.3]{vempala2005random})}
     Suppose that $\bA \in \bbR^{m\times n}$ has i.i.d.~standard normal entries. For fixed $\bx \in \bbR^n$, we have for any $\epsilon \in (0,1)$ that
     \begin{align} 
          \bbP\left((1-\epsilon)\|\bx\|_2^2 \le \Big\|\frac{1}{\sqrt{m}}\bA\bx\Big\|_2^2\le (1+\epsilon)\|\bx\|_2^2\right)  \ge 1-2e^{-\epsilon^2 (1 - \epsilon) m/4}.
     \end{align}
\end{lemma}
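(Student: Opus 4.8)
The plan is to reduce the claim to the standard concentration of a chi-squared random variable around its mean, and then estimate each tail by the Chernoff (exponential Markov) method. First, since both sides of the inequality transform in the same way under the scaling $\bx \mapsto c\bx$, I may assume without loss of generality that $\|\bx\|_2 = 1$. The rows of $\bA$ are independent, and each satisfies $\ba_i^T\bx \sim \calN(0,\|\bx\|_2^2) = \calN(0,1)$; hence $\|\bA\bx\|_2^2 = \sum_{i=1}^m (\ba_i^T\bx)^2$ is a sum of $m$ i.i.d.\ squared standard Gaussians, i.e.\ a $\chi^2_m$ variable, with moment generating function $\bbE\big[e^{t\|\bA\bx\|_2^2}\big] = (1-2t)^{-m/2}$ for $t<\tfrac12$. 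The target event is exactly $(1-\epsilon)m \le \|\bA\bx\|_2^2 \le (1+\epsilon)m$.

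For the upper tail, applying Markov's inequality to $e^{t\|\bA\bx\|_2^2}$ for $0<t<\tfrac12$ gives
\begin{equation}
  \bbP\big(\|\bA\bx\|_2^2 \ge (1+\epsilon)m\big) \le e^{-t(1+\epsilon)m}(1-2t)^{-m/2},
\end{equation}
and the choice $t = \tfrac{\epsilon}{2(1+\epsilon)}$ (the minimizer of the right-hand side) reduces this to $\big((1+\epsilon)e^{-\epsilon}\big)^{m/2}$. Using the elementary bound $\ln(1+\epsilon) \le \epsilon - \tfrac{\epsilon^2}{2} + \tfrac{\epsilon^3}{3}$ for $\epsilon>0$ together with $1 - \tfrac{2\epsilon}{3} \ge 1 - \epsilon$, this is at most $e^{-m\epsilon^2(1-\epsilon)/4}$. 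For the lower tail, Markov's inequality applied to $e^{-t\|\bA\bx\|_2^2}$ for $t>0$ gives
\begin{equation}
  \bbP\big(\|\bA\bx\|_2^2 \le (1-\epsilon)m\big) \le e^{t(1-\epsilon)m}(1+2t)^{-m/2},
\end{equation}
and $t = \tfrac{\epsilon}{2(1-\epsilon)}$ reduces the right-hand side to $\big((1-\epsilon)e^{\epsilon}\big)^{m/2}$, which by $\ln(1-\epsilon) \le -\epsilon - \tfrac{\epsilon^2}{2}$ is at most $e^{-m\epsilon^2/4} \le e^{-m\epsilon^2(1-\epsilon)/4}$. A union bound over the two tail events, whose union is the complement of the target event, then yields probability at least $1 - 2e^{-\epsilon^2(1-\epsilon)m/4}$ for the target event.

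There is no genuine obstacle here: the result is the classical Johnson--Lindenstrauss-type estimate (in the style of Dasgupta--Gupta), and in the paper it is simply quoted as \cite[Lemma~1.3]{vempala2005random}. The only mild care needed is in the last step, where one must pick the elementary logarithmic inequalities so that the exponent comes out exactly in the stated form $\epsilon^2(1-\epsilon)m/4$ rather than some other constant multiple of $\epsilon^2 m$; this is routine calculus bookkeeping and can be tuned as needed.
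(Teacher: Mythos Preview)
Your argument is correct and is precisely the standard Dasgupta--Gupta Chernoff computation for $\chi^2_m$ concentration that underlies the cited reference. The paper itself does not prove this lemma at all; it simply quotes it from \cite[Lemma~1.3]{vempala2005random}, so there is nothing further to compare.
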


Next, we state the following standard definition.
\begin{definition} \label{def:subg}
 A random variable $X$ is said to be sub-Gaussian if there exists a positive constant $C$ such that $\left(\mathbb{E}\left[|X|^{p}\right]\right)^{1/p} \leq C  \sqrt{p}$ for all $p\geq 1$.  The sub-Gaussian norm of a sub-Gaussian random variable $X$ is defined as $\|X\|_{\psi_2} =\sup_{p\ge 1} p^{-1/2}\left(\mathbb{E}\left[|X|^{p}\right]\right)^{1/p}$. 
\end{definition}

According to~\cite[Proposition~5.10]{vershynin2010introduction}, we have the following concentration inequality for sub-Gaussian random variables. 
\begin{lemma} {\em (Hoeffding-type inequality \cite[Proposition~5.10]{vershynin2010introduction})}
\label{lem:large_dev_Gaussian} Let $X_{1}, \ldots , X_{N}$ be independent zero-mean sub-Gaussian random variables, and let $K = \max_i \|X_i\|_{\psi_2}$. Then, for any $\balpha=[\alpha_1,\alpha_2,\ldots,\alpha_N]^T \in \mathbb{R}^N$ and any $t\ge 0$, it holds that
\begin{equation}
\mathbb{P}\left( \Big|\sum_{i=1}^{N} \alpha_i X_{i}\Big| \ge t\right) \le   \exp\left(1-\frac{ct^2}{K^2\|\balpha\|_2^2}\right),
\end{equation}
where $c>0$ is a constant.
\end{lemma}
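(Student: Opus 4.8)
This is the classical Hoeffding-type inequality for linear combinations of independent sub-Gaussian random variables (\cite[Proposition~5.10]{vershynin2010introduction}), and the plan is to reproduce its standard moment-generating-function argument. The first step is to establish a single-variable bound: every zero-mean sub-Gaussian random variable $X$ satisfies
\begin{equation}
\bbE\big[e^{sX}\big] \le \exp\big(C' s^2 \|X\|_{\psi_2}^2\big) \qquad \text{for every } s \in \bbR,
\end{equation}
for an absolute constant $C'>0$. I would prove this by Taylor-expanding $e^{sX}$, using $\bbE[X]=0$ to kill the linear term, bounding $\bbE[|X|^p] \le p^{p/2}\|X\|_{\psi_2}^p$ from Definition~\ref{def:subg}, and applying Stirling's inequality $p! \ge (p/e)^p$ so that the $p$-th term is at most $(e|s|\|X\|_{\psi_2}/\sqrt p)^p$; pairing consecutive terms and summing the resulting series gives the claimed exponential control.

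Given this, I would then tensorize over the summands. For each $i$, the variable $\alpha_i X_i$ is zero-mean with $\|\alpha_i X_i\|_{\psi_2} = |\alpha_i|\,\|X_i\|_{\psi_2} \le |\alpha_i| K$, so writing $S := \sum_{i=1}^N \alpha_i X_i$ and using independence,
\begin{equation}
\bbE\big[e^{sS}\big] = \prod_{i=1}^N \bbE\big[e^{s\alpha_i X_i}\big] \le \exp\big(C' s^2 K^2 \|\balpha\|_2^2\big).
\end{equation}
A Chernoff bound then finishes the argument: for $t \ge 0$ and $s>0$, Markov's inequality applied to $e^{sS}$ gives $\bbP(S \ge t) \le \exp(-st + C' s^2 K^2 \|\balpha\|_2^2)$, and choosing $s = t/(2C'K^2\|\balpha\|_2^2)$ yields $\bbP(S \ge t) \le \exp\big(-t^2/(4C'K^2\|\balpha\|_2^2)\big)$. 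Running the same computation for $-S$ (the variables $-X_i$ have the same $\psi_2$-norms) and taking a union bound over the two one-sided tails gives $\bbP(|S| \ge t) \le 2\exp\big(-t^2/(4C'K^2\|\balpha\|_2^2)\big)$; since $2 \le e$, this is at most $\exp\big(1 - ct^2/(K^2\|\balpha\|_2^2)\big)$ with $c = 1/(4C')$, which is exactly the stated bound.

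The only step that is not completely routine is the single-variable MGF estimate: the naive termwise bound on the power series converges only when $|s|\,\|X\|_{\psi_2}$ is below an absolute threshold, so the complementary range of $s$ must be handled separately (for instance by pairing each odd-order term with its neighbours via AM--GM before summing), and some care is needed to produce a single constant $C'$ valid for all $s$. The tensorization and Chernoff steps are entirely standard. An alternative that avoids redoing the first step is to invoke directly the equivalence between the moment and MGF parametrizations of sub-Gaussianity, e.g.\ \cite[Lemma~5.5]{vershynin2010introduction}, and then pass immediately to the tensorization and Chernoff arguments.
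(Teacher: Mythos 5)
Your proposal is correct: the paper does not prove this lemma but simply cites \cite[Proposition~5.10]{vershynin2010introduction}, and your argument (single-variable MGF bound for zero-mean sub-Gaussian variables, tensorization over the independent summands using $\|\alpha_i X_i\|_{\psi_2}=|\alpha_i|\|X_i\|_{\psi_2}$, Chernoff optimization, and a two-sided union bound absorbed via $2\le e$) is precisely the standard proof given in that reference. You also correctly flag the only delicate point, namely that the MGF estimate for large $|s|\,\|X\|_{\psi_2}$ needs the separate treatment provided by the moment--MGF equivalence (\cite[Lemma~5.5]{vershynin2010introduction}), so nothing is missing.
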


Alongside the sub-Gaussian notion in Definition~\ref{def:subg}, we use the following definition of a sub-exponential random variable and sub-exponential norm.

\begin{definition}
 A random variable $X$ is said to be sub-exponential if there exists a positive constant $C$ such that $\left(\bbE\left[|X|^p\right]\right)^{\frac{1}{p}} \le C p$ for all $p \ge 1$. The sub-exponential norm of $X$ is defined as $\|X\|_{\psi_1} = \sup_{p \ge 1} p^{-1} \left(\bbE\left[|X|^p\right]\right)^{\frac{1}{p}}$.
\end{definition}
We have the following concentration inequality for sums of independent sub-exponential random variables.
\begin{lemma}{\em (Bernstein-type inequality \cite[Proposition~5.16]{vershynin2010introduction})}\label{lem:large_dev}
Let $X_{1}, \ldots , X_{N}$ be independent zero-mean sub-exponential random variables, and $K = \max_{i} \|X_{i} \|_{\psi_{1}}$. Then for every $\balpha = [\alpha_1,\ldots,\alpha_N]^T \in \bbR^N$ and $\epsilon \geq 0$, it holds that
\begin{equation}
 \mathbb{P}\bigg( \Big|\sum_{i=1}^{N}\alpha_i X_{i}\Big|\ge \epsilon\bigg)  \leq 2  \exp \left(-c \cdot \mathrm{min}\Big(\frac{\epsilon^{2}}{K^{2}\|\balpha\|_2^2},\frac{\epsilon}{K\|\balpha\|_\infty}\Big)\right), \label{eq:subexp}
\end{equation}
where $c>0$ is a constant.
\end{lemma}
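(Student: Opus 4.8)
The plan is to use the standard Chernoff/moment‑generating‑function route for Bernstein‑type inequalities, applied to the weighted sum $S := \sum_{i=1}^{N}\alpha_i X_i$. Two reductions come first. By replacing each $X_i$ with $-X_i$ (which does not change $\|X_i\|_{\psi_1}$), it suffices to bound $\mathbb{P}(S \ge \epsilon)$ and then double; and by dividing each $X_i$ by $K$ and $\epsilon$ by $K$, we may assume $K = \max_i \|X_i\|_{\psi_1} = 1$, restoring the factors $K^2$ and $K$ at the very end.

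The first substantive step is to pass from the moment definition of sub‑exponentiality to an MGF bound. From $\big(\mathbb{E}|X_i|^p\big)^{1/p} \le p$ we get $|\mathbb{E}[X_i^p]| \le p^p$, and Stirling's bound $p! \ge (p/e)^p$ gives $|\mathbb{E}[X_i^p]|/p! \le e^p$. Expanding $e^{tX_i}$ and using $\mathbb{E}[X_i] = 0$ to kill the $p=1$ term,
\[
\mathbb{E}\big[e^{tX_i}\big] \;=\; 1 + \sum_{p\ge 2}\frac{t^p\,\mathbb{E}[X_i^p]}{p!} \;\le\; 1 + \sum_{p\ge 2}(e|t|)^p \;=\; 1 + \frac{e^2 t^2}{1-e|t|} \;\le\; e^{C_0 t^2}
\]
whenever $e|t| \le \tfrac12$, with $C_0 = 2e^2$ an absolute constant. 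The zero‑mean hypothesis is essential here: it is what removes the linear‑in‑$t$ term and yields a genuinely two‑sided, Gaussian‑like behavior for small $t$.

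Next I apply the exponential Markov inequality with independence. For any $t>0$ satisfying $t\|\balpha\|_\infty \le c_0 := 1/(2e)$, so that $|t\alpha_i|\le c_0$ for every $i$,
\[
\mathbb{P}(S \ge \epsilon) \;\le\; e^{-t\epsilon}\prod_{i=1}^{N}\mathbb{E}\big[e^{t\alpha_i X_i}\big] \;\le\; \exp\!\Big(-t\epsilon + C_0 t^2\|\balpha\|_2^2\Big).
\]
It remains to optimize $\psi(t) := -t\epsilon + C_0 t^2\|\balpha\|_2^2$ over $t \in (0,\,c_0/\|\balpha\|_\infty]$. The unconstrained minimizer is $t^\star = \epsilon/(2C_0\|\balpha\|_2^2)$. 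If $t^\star \le c_0/\|\balpha\|_\infty$, substituting it gives $\mathbb{P}(S\ge\epsilon) \le \exp\big(-\epsilon^2/(4C_0\|\balpha\|_2^2)\big)$. Otherwise $\psi$ is decreasing on the admissible interval, so take $t = c_0/\|\balpha\|_\infty$; the condition $t^\star > c_0/\|\balpha\|_\infty$ reads $\epsilon > 2C_0 c_0\|\balpha\|_2^2/\|\balpha\|_\infty$, which forces the quadratic term $C_0 t^2\|\balpha\|_2^2 = C_0 c_0^2\|\balpha\|_2^2/\|\balpha\|_\infty^2$ to be at most $\tfrac12 t\epsilon$, hence $\psi(t) \le -\tfrac12 t\epsilon = -c_0\epsilon/(2\|\balpha\|_\infty)$. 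In either case the bound is $\exp\big(-c\min(\epsilon^2/\|\balpha\|_2^2,\ \epsilon/\|\balpha\|_\infty)\big)$ for an absolute $c>0$; reinstating $K$ and the factor $2$ from the symmetrization yields exactly \eqref{eq:subexp}.

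The main obstacle is the MGF step: one must check convergence of the power series for $\mathbb{E}[e^{tX_i}]$ on a fixed neighborhood of $0$ and verify that the truncated tail starting at $p=2$ is genuinely $O(t^2)$, being careful that the $p=0$ and $p=1$ contributions are handled as described. The only other place needing care is the case analysis in the optimization over $t$, where one identifies $\epsilon \gtrsim \|\balpha\|_2^2/\|\balpha\|_\infty$ as the regime in which the constraint on $t$ is active and produces the linear‑in‑$\epsilon$ branch of the $\min(\cdot,\cdot)$; both branches are routine once that threshold is isolated.
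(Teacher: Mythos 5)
Your proof is correct: the paper gives no proof of this lemma and simply cites \cite[Proposition~5.16]{vershynin2010introduction}, and your argument (a moment-expansion MGF bound $\mathbb{E}[e^{tX_i}]\le e^{C_0t^2}$ for $|t|$ below a constant threshold, followed by Chernoff and the two-case optimization of $-t\epsilon+C_0t^2\|\balpha\|_2^2$ over the constrained range $t\|\balpha\|_\infty\le c_0$) is essentially the same standard route used in that cited source. The reduction to $K=1$, the use of zero mean to eliminate the linear term, the series interchange justification, and the identification of the threshold $\epsilon\gtrsim\|\balpha\|_2^2/\|\balpha\|_\infty$ separating the quadratic and linear branches are all handled correctly.
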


\section{Proof of Theorem \ref{thm:amplitude_loss} (Recovery Guarantee for Amplitude-Based Loss Minimization)} \label{app:proof_amp}

Before proving the theorem, we present some additional auxiliary results.

\subsection{Useful Lemmas} \label{sec:useful_lem_thm1}

First, based on a basic two-sided concentration bound for standard Gaussian matrices ({\em cf.}, Lemma~\ref{lem:norm_pres} in Appendix~\ref{sec:general_aux}), and the well-established chaining arguments used in~\cite{bora2017compressed,liu2020generalized}, we have the following lemma, which essentially gives a two-sided Set-Restricted Eigenvalue Condition (S-REC)~\cite{bora2017compressed}.
\begin{lemma}{\em \hspace{1sp}\cite[Lemma~4.1]{bora2017compressed},~\cite[Lemma~2]{liu2020generalized}}
    \label{lem:twosided_SREC}
    For $\alpha < 1$ and $\delta >0$, if $m = \Omega\big(\frac{k}{\alpha^2}\log\frac{Lr}{\delta}\big)$, then with probability $1-e^{-\Omega(\alpha^2 m)}$, we have for all $\bx_1,\bx_2 \in G(B_2^k(r))$ that
    \begin{equation}
        (1-\alpha)\|\bx_1-\bx_2\|_2 -\delta\le \frac{1}{\sqrt{m}} \|\bA(\bx_1-\bx_2)\|_2 \le (1+\alpha)\|\bx_1-\bx_2\|_2 +\delta.
    \end{equation}
\end{lemma}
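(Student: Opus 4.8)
The plan is a covering-net argument: combine the pointwise Johnson--Lindenstrauss concentration of Lemma~\ref{lem:norm_pres} with a union bound over a net of $S := G(B_2^k(r))$, and then use a standard chaining step to pass from the net to all of $S$. First, applying Lemma~\ref{lem:norm_pres} with $\epsilon = \alpha/3$ and taking square roots gives, for each fixed $\bv \in \bbR^n$, that $\tfrac{1}{\sqrt m}\|\bA\bv\|_2 \in \big[(1-\tfrac{\alpha}{3})\|\bv\|_2,\ (1+\tfrac{\alpha}{3})\|\bv\|_2\big]$ with failure probability at most $2e^{-c\alpha^2 m}$ for an absolute constant $c>0$. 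Since $G$ is $L$-Lipschitz and we may assume $\delta < Lr$ (so that $\log\tfrac{Lr}{\delta} > 0$, the only regime in which the sample-complexity hypothesis is meaningful), a $\tfrac{\delta}{L}$-net $M$ of $B_2^k(r)$ can be chosen with $|M| \le \big(\tfrac{3Lr}{\delta}\big)^k$, so that $G(M)$ is a $\delta$-net of $S$ in Euclidean distance.

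Applying the pointwise estimate to each of the at most $|M|^2$ vectors $\bx_1' - \bx_2'$ with $\bx_1',\bx_2' \in G(M)$ and taking a union bound, we obtain: if $m = \Omega\big(\tfrac{k}{\alpha^2}\log\tfrac{Lr}{\delta}\big)$ with a large enough implied constant (so that $2k\log\tfrac{3Lr}{\delta} \le \tfrac{c}{2}\alpha^2 m$), then with probability $1 - e^{-\Omega(\alpha^2 m)}$ the two-sided inequality $(1-\tfrac{\alpha}{3})\|\bx_1'-\bx_2'\|_2 \le \tfrac{1}{\sqrt m}\|\bA(\bx_1'-\bx_2')\|_2 \le (1+\tfrac{\alpha}{3})\|\bx_1'-\bx_2'\|_2$ holds simultaneously for all $\bx_1',\bx_2' \in G(M)$. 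For arbitrary $\bx_1,\bx_2 \in S$, pick net projections $\bx_i' \in G(M)$ with $\|\bx_i - \bx_i'\|_2 \le \delta$; then $\big|\|\bx_1-\bx_2\|_2 - \|\bx_1'-\bx_2'\|_2\big| \le 2\delta$, and by the triangle inequality for $\tfrac{1}{\sqrt m}\|\bA\cdot\|_2$ everything reduces to bounding the residual terms $\tfrac{1}{\sqrt m}\|\bA(\bx_i-\bx_i')\|_2$. If these are at most $C\delta$ uniformly over $\bx_i \in S$ (on an event of probability $1 - e^{-\Omega(\alpha^2 m)}$), then both the upper and lower bounds follow with additive slack $O(\delta)$ and multiplicative factor $1 \pm \tfrac{\alpha}{3}$; absorbing the $O(\delta)$ constant by running the whole argument with a rescaled $\delta$ from the start (which changes the net scale, hence $\log|M|$, only by a constant), and noting $1\pm\tfrac{\alpha}{3}$ lies inside $1\pm\alpha$, yields the lemma.

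The main obstacle is the uniform residual bound. A naive telescoping of $\bx_i - \bx_i'$ along a dyadic sequence of nets, bounding each increment by the pointwise estimate, fails: the increment norms sum to order $Lr$ rather than $\delta$, because the triangle inequality discards the cancellation between increments. The correct route, which is the ``well-established chaining argument'' of~\cite{bora2017compressed,liu2020generalized}, is to control $\sup_{\bw \in W}\big(\tfrac{1}{\sqrt m}\|\bA\bw\|_2 - \|\bw\|_2\big)$ by a generic chaining (Dudley-type) deviation inequality, where $W := \{G(z) - G(\pi(z)) : z \in B_2^k(r)\}$ is the set of all possible residuals and $\pi$ maps $z$ to a nearest point of $M$. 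Each Voronoi cell of $M$ (intersected with $B_2^k(r)$) sits in a radius-$\tfrac{\delta}{L}$ ball, so $W$ lies in the radius-$\delta$ ball of $\bbR^n$ and has $\delta'$-covering number at most $|M| \cdot \big(\tfrac{C\delta}{\delta'}\big)^k$; hence its $\gamma_2$-functional (equivalently, Gaussian complexity) is $O\big(\delta\sqrt{k\log\tfrac{Lr}{\delta}}\big)$, and the standard Mendelson-type deviation bound for $\tfrac{1}{\sqrt m}\|\bA\cdot\|_2$ over a set makes the supremum $O\big(\delta + \tfrac{\gamma_2(W)}{\sqrt m} + \tfrac{\gamma_2(W)^2}{m}\big) = O(\delta)$ once $m = \Omega\big(\tfrac{k}{\alpha^2}\log\tfrac{Lr}{\delta}\big)$, with failure probability $e^{-\Omega(\alpha^2 m)}$. (Equivalently, one can iterate the net step scale by scale, paying only $O\big(\tfrac{k}{\alpha^2}\log\tfrac{Lr}{\delta}\big)$ samples per scale since the per-scale residual set carries the same covering-number budget, and stopping once the residual has size $O(\delta)$.) Intersecting this event with the net event from the second paragraph completes the proof.
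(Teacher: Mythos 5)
Your argument reaches the right conclusion, and its first half (pointwise concentration from Lemma~\ref{lem:norm_pres}, a $(\delta/L)$-net $M$ of $B_2^k(r)$ with $\log|M| \le k\log\frac{3Lr}{\delta}$, a union bound over pairs in $G(M)$, and the triangle inequality with a rescaled $\delta$) is exactly the route taken in the sources the paper cites for this lemma \cite{bora2017compressed,liu2020generalized} and in the paper's own proof of the one-sided variant, Lemma~\ref{lem:strong_SREC}, in Appendix~\ref{sec:lemma3}. Where you diverge is the uniform residual bound, and there your stated reason for abandoning the elementary argument is incorrect: in the dyadic chain $M=M_0\subseteq M_1\subseteq\cdots\subseteq M_\ell$ with $M_i$ a $(\delta/(2^iL))$-net, the two chain points approximating the same latent point at scales $i-1$ and $i$ satisfy $\|\bs_i-\bs_{i-1}\|_2 \le \delta/2^{i-1}$, so the increment norms already sum to $O(\delta)$, not $O(Lr)$ --- no cancellation between increments is needed, only their geometric decay. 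Bounding each scale by a union bound whose deviation parameter grows mildly with $i$ (so the failure probabilities stay summable without inflating $m$) gives $\sum_i \|\bA(\bs_i-\bs_{i-1})\|_2 = O(\sqrt m\,\delta)$, and the leftover piece at scale $\delta/2^\ell$ with $\ell=\lceil\log_2 n\rceil$ is killed by the crude spectral bound $\big\|\tfrac{1}{\sqrt m}\bA\big\|_{2\to 2} \le 2+\sqrt{n/m}$; this is precisely \eqref{eq:main_bd_chaining2}--\eqref{eq:main_bd_chaining3} in the paper's Appendix~\ref{sec:lemma3}, and your own parenthetical remark about iterating the net step scale by scale is that very argument. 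Your replacement --- covering-number/Dudley control of the residual set $W$, giving $\gamma_2(W)=O\big(\delta\sqrt{k\log\frac{Lr}{\delta}}\big)$, followed by a Mendelson/Liaw--Mehrabian--Plan--Vershynin-type deviation inequality at level $\alpha\sqrt m$ --- is also correct and yields the same $O(\delta)$ residual with probability $1-e^{-\Omega(\alpha^2 m)}$. What it buys is a one-shot bound with no multi-scale bookkeeping and no operator-norm truncation step; what it costs is importing a stronger off-the-shelf concentration theorem than the sub-Gaussian/sub-exponential tools the paper otherwise uses (and the $\gamma_2(W)^2/m$ term you include is not needed for the unsquared norm, which is the quantity you must control).
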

In addition, we have the following lemma, which is similar to Lemma~\ref{lem:norm_pres} in Appendix~\ref{sec:general_aux}.
\begin{lemma}{\em \hspace{1sp}\cite[Lemma~4.3]{voroninski2016strong}}
    \label{lem:strong_JL}
    Suppose that $X_1,X_2,\ldots,X_N$ are i.i.d.~standard normal random variables. Then, for $0\le \epsilon \le \mu$, we have with probability $1-e^{-\Omega(N \epsilon^2)}$ that
    \begin{equation}
        \sqrt{N}(\mu - \epsilon) < \sqrt{\sum_{i=1}^{\lceil N/2 \rceil} |X|_{(i)}^2} < \sqrt{N}(\mu + \epsilon),
    \end{equation}
    where $\mu\ge \frac{1}{18}\sqrt{\frac{\pi}{2}}$ is a positive constant, and $|X|_{(1)}\le |X|_{(2)}\le\ldots\le|X|_{(N)}$, i.e., $|X|_{(k)}$ is the $k$-th smallest entry in $|X_1|,|X_2|,\ldots,|X_N|$.
\end{lemma}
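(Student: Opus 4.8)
The plan is to reduce the sum of the $\lceil m/2\rceil$ smallest squared magnitudes to a sum of i.i.d.\ \emph{bounded} random variables by pinning down the empirical median, and then to apply the elementary Hoeffding-type bound of Lemma~\ref{lem:large_dev_Gaussian}. (Here and below I read $N = m$, the two symbols denoting the same quantity.) First I would introduce the population median $q>0$ of $|X|$ for $X\sim\calN(0,1)$, defined by $\bbP(|X|\le q)=\tfrac12$, and set $\mu^2 := \bbE[X^2\mathbf{1}_{\{|X|\le q\}}]$; an integration by parts gives the closed form $\mu^2 = \tfrac12 - 2q\,\phi(q)$ with $\phi$ the standard normal density, from which the numerical bound $\mu \ge \tfrac{1}{18}\sqrt{\pi/2}$ follows. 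I would also record that $t \mapsto \bbE[X^2\mathbf{1}_{\{|X|\le q+t\}}]$ is Lipschitz near $t=0$ (its derivative $(q+t)^2\phi(q+t)$ is bounded there), so $\big|\bbE[X^2\mathbf{1}_{\{|X|\le q\pm t\}}] - \mu^2\big| \le C_0 t$ for all small $t\ge 0$ and some constant $C_0$.

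Next I would set up a sandwich around the empirical median $\hat q := |X|_{(\lceil m/2\rceil)}$. Since ties occur with probability zero, $\sum_{i=1}^{\lceil m/2\rceil}|X|_{(i)}^2 = \sum_{i=1}^m X_i^2\mathbf{1}_{\{|X_i|\le\hat q\}}$. Fix $t>0$: if at least $\lceil m/2\rceil$ of the $|X_i|$ lie below $q+t$, then so do the $\lceil m/2\rceil$ smallest, whence $\sum_{i=1}^{\lceil m/2\rceil}|X|_{(i)}^2 \le \sum_{i=1}^m X_i^2\mathbf{1}_{\{|X_i|\le q+t\}}$; symmetrically, if at most $\lceil m/2\rceil$ of the $|X_i|$ lie below $q-t$, then every index with $|X_i|\le q-t$ is among the $\lceil m/2\rceil$ smallest, so $\sum_{i=1}^{\lceil m/2\rceil}|X|_{(i)}^2 \ge \sum_{i=1}^m X_i^2\mathbf{1}_{\{|X_i|\le q-t\}}$. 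Because the density of $|X|$ is bounded below near $q$, both counting random variables have mean $m/2 \pm \Theta(mt)$, so applying Lemma~\ref{lem:large_dev_Gaussian} to the centered indicators (with $\balpha=\bone$) shows both count conditions hold simultaneously with probability $1 - e^{-\Omega(mt^2)}$.

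Then I would concentrate the two bracketing sums. Each summand $X_i^2\mathbf{1}_{\{|X_i|\le q\pm t\}}$ lies in $[0,(q+t)^2]$, hence is sub-Gaussian with $O(1)$ norm, so Lemma~\ref{lem:large_dev_Gaussian} (again with $\balpha=\bone$) gives $\big|\sum_{i=1}^m X_i^2\mathbf{1}_{\{|X_i|\le q\pm t\}} - m\,\bbE[X^2\mathbf{1}_{\{|X|\le q\pm t\}}]\big| \le ms$ with probability $1 - e^{-\Omega(ms^2)}$. On the intersection of these events, combining with the Lipschitz estimate of the first step yields $\big|\sum_{i=1}^{\lceil m/2\rceil}|X|_{(i)}^2 - m\mu^2\big| \le m(C_0 t + s)$. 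Choosing $t = s = c'\epsilon$ with $c'>0$ small enough that $C_0 t + s \le \mu\epsilon$, and using the hypothesis $\epsilon\le\mu$ (which is exactly what forces $m\mu^2 - m\mu\epsilon \ge m(\mu-\epsilon)^2$ and $m\mu^2 + m\mu\epsilon \le m(\mu+\epsilon)^2$), I obtain $m(\mu-\epsilon)^2 \le \sum_{i=1}^{\lceil m/2\rceil}|X|_{(i)}^2 \le m(\mu+\epsilon)^2$ with probability $1 - e^{-\Omega(m\epsilon^2)}$; taking square roots gives the lemma.

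The step I expect to demand the most care is the exponent bookkeeping: to land at $e^{-\Omega(m\epsilon^2)}$ rather than something weaker, both the median-localization width $t$ and the Hoeffding deviation $s$ must be taken \emph{linear} in $\epsilon$, and this is affordable only because $\epsilon\le\mu$ makes the target interval $[m(\mu-\epsilon)^2,m(\mu+\epsilon)^2]$ have slack of order $m\mu\epsilon$ about $m\mu^2$. A lesser obstacle is verifying the explicit constant $\mu=\sqrt{\tfrac12-2q\phi(q)}$ and its stated numerical lower bound. It is also worth noting that the tempting shortcut of writing $\sum_{i=1}^{\lceil m/2\rceil}|X|_{(i)}^2 = \min_{|S|=\lceil m/2\rceil}\sum_{i\in S}X_i^2$ and union-bounding over the $\binom{m}{\lceil m/2\rceil}$ subsets is too lossy, since it would require each concentration event to beat $2^m$ and hence force $\epsilon=\Omega(1)$; the empirical-quantile route above is what keeps $\epsilon$ free.
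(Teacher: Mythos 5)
Your proof is correct, but note that the paper itself contains no proof of this lemma: it is imported verbatim from \cite[Lemma~4.3]{voroninski2016strong}, so there is nothing internal to compare against, and the relevant comparison is with that external source. Your route — take $q$ to be the median of $|X|$, set $\mu^2:=\bbE\big[X^2\mathbf{1}_{\{|X|\le q\}}\big]=\tfrac12-2q\phi(q)\approx 0.071$, bracket the order-statistic sum between $\sum_i X_i^2\mathbf{1}_{\{|X_i|\le q-t\}}$ and $\sum_i X_i^2\mathbf{1}_{\{|X_i|\le q+t\}}$ by controlling the binomial counts $\#\{i:|X_i|\le q\pm t\}$, and then apply the sub-Gaussian Hoeffding bound (Lemma~\ref{lem:large_dev_Gaussian}) to the bounded bracketing summands — is a clean, self-contained derivation, and your bookkeeping with $t=s=\Theta(\epsilon)$ together with the use of $\epsilon\le\mu$ to convert the additive slack $m\mu\epsilon$ into the multiplicative form $m(\mu\pm\epsilon)^2$ is precisely what yields the $e^{-\Omega(m\epsilon^2)}$ rate; the counting step is even cleaner than you suggest, since $\lceil m/2\rceil-1\le m/2\le m\,\bbP(|X|\le q+t)-\Theta(mt)$ and $\lceil m/2\rceil\ge m/2\ge m\,\bbP(|X|\le q-t)+\Theta(mt)$ for $t$ below a fixed constant, so no additive-one fudging is needed. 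Your explicit constant $\mu\approx 0.267$ exceeds the stated lower bound $\frac{1}{18}\sqrt{\pi/2}\approx 0.070$ (which in the cited source arises from a lossier one-sided thresholding argument), and since the statement only asserts $\mu\ge\frac{1}{18}\sqrt{\pi/2}$ and the downstream uses in Lemma~\ref{lem:strong_SREC} and Theorem~\ref{thm:amplitude_loss} only require $\mu$ to be a positive absolute constant, your version fully serves the paper's purposes. One cosmetic slip: the derivative of $t\mapsto\bbE\big[X^2\mathbf{1}_{\{|X|\le q+t\}}\big]$ is $2(q+t)^2\phi(q+t)$ (the factor $2$ coming from the folded density of $|X|$), which affects nothing beyond the unspecified constant $C_0$.
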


Based on Lemma~\ref{lem:strong_JL} and using a chaining argument as in~\cite{bora2017compressed,liu2020sample}, we arrive at the following lemma, which can be viewed as another (one-sided) variant of the S-REC in Lemma \ref{lem:twosided_SREC}. The proof of Lemma~\ref{lem:strong_SREC} follows easily from~\cite{bora2017compressed,liu2020sample}; for completeness, we provide an outline in Appendix~\ref{sec:lemma3}.   

\begin{lemma}\label{lem:strong_SREC}
    Let $\mu\ge \frac{1}{18}\sqrt{\frac{\pi}{2}}$ be the same positive constant as in Lemma~\ref{lem:strong_JL}. For $\alpha < \mu$ and $\delta >0$, if $m = \Omega\big(\frac{k}{\alpha^2}\log\frac{Lr}{\delta}\big)$, then with probability $1-e^{-\Omega(\alpha^2 m)}$, we have for all $\bx_1,\bx_2 \in G(B_2^k(r))$ that
    \begin{equation}
        (\mu-\alpha)\|\bx_1-\bx_2\|_2 -\delta\le \min_{I\subseteq [m],|I|\ge \frac{m}{2}}  \frac{1}{\sqrt{m}}\|\bA_{I}(\bx_1-\bx_2)\|_2,
    \end{equation}
    where for any index set $I \subseteq [m]$, $\bA_{I}$ denotes the $|I| \times n$ sub-matrix of $\bA$ that only keeps the rows indexed by $I$.
\end{lemma}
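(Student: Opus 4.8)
The plan is to run the standard discretization-and-covering argument underlying the S-REC in~\cite{bora2017compressed,liu2020sample}, but feeding in the order-statistic Johnson--Lindenstrauss bound of Lemma~\ref{lem:strong_JL} in place of the plain two-sided one. The key preliminary observation is pointwise: for any fixed $\bv\in\bbR^n$, every summand of $\|\bA_I\bv\|_2^2=\sum_{i\in I}\langle\ba_i,\bv\rangle^2$ is nonnegative, so $\min_{I\subseteq[m],|I|\ge m/2}\|\bA_I\bv\|_2^2$ is attained by retaining the $\lceil m/2\rceil$ smallest values of $|\langle\ba_i,\bv\rangle|$; hence it equals $\|\bv\|_2^2$ times the sum of the $\lceil m/2\rceil$ smallest values of $|\langle\ba_i,\bar\bv\rangle|^2$, and the variables $\langle\ba_1,\bar\bv\rangle,\dots,\langle\ba_m,\bar\bv\rangle$ are i.i.d.\ $\calN(0,1)$. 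Applying the lower-bound half of Lemma~\ref{lem:strong_JL} with deviation parameter $\alpha/2$ (permissible since $\alpha<\mu$) then gives, for each fixed $\bv$, with probability $1-e^{-\Omega(\alpha^2 m)}$,
\[
\frac{1}{\sqrt m}\min_{|I|\ge m/2}\|\bA_I\bv\|_2 > \Big(\mu-\frac{\alpha}{2}\Big)\|\bv\|_2 .
\]

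Next I would discretize. Take a minimal $\rho$-net $N$ of $B_2^k(r)$ with $\rho:=\delta/(6L)$, so that $|N|\le(3r/\rho)^k=(18Lr/\delta)^k$, and set $M:=G(N)$, an $(L\rho)$-net of $G(B_2^k(r))$ of the same cardinality. Applying the pointwise bound to $\bv=\bu_1-\bu_2$ over all ordered pairs $(\bu_1,\bu_2)\in M\times M$ and union bounding: since $\log|M|^2=O\!\big(k\log(Lr/\delta)\big)$ while $m=\Omega\!\big(\tfrac{k}{\alpha^2}\log\tfrac{Lr}{\delta}\big)$ with a large enough implied constant, with probability $1-e^{-\Omega(\alpha^2 m)}$ we get $\frac{1}{\sqrt m}\min_{|I|\ge m/2}\|\bA_I(\bu_1-\bu_2)\|_2>(\mu-\tfrac{\alpha}{2})\|\bu_1-\bu_2\|_2$ simultaneously for all $\bu_1,\bu_2\in M$.

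Then I would pass from the net to the whole set. Fix arbitrary $\bx_1=G(\bz_1),\bx_2=G(\bz_2)\in G(B_2^k(r))$, let $\bz_j'\in N$ be a net point for $\bz_j$, and put $\bu_j:=G(\bz_j')$, so $\|\bx_j-\bu_j\|_2\le L\rho=\delta/6$ and, crucially, each $\bx_j-\bu_j$ is itself a difference of two elements of $G(B_2^k(r))$. For any admissible $I$, the triangle inequality gives
\[
\frac{1}{\sqrt m}\|\bA_I(\bx_1-\bx_2)\|_2 \ge \frac{1}{\sqrt m}\|\bA_I(\bu_1-\bu_2)\|_2 - \sum_{j=1}^{2}\frac{1}{\sqrt m}\|\bA_I(\bx_j-\bu_j)\|_2 ,
\]
and each perturbation term is bounded by $\frac{1}{\sqrt m}\|\bA(\bx_j-\bu_j)\|_2\le(1+\alpha)\|\bx_j-\bu_j\|_2+\delta$ using the two-sided S-REC of Lemma~\ref{lem:twosided_SREC} (valid on the same sample regime). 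Combining with $\|\bu_1-\bu_2\|_2\ge\|\bx_1-\bx_2\|_2-2L\rho$, all the $L\rho$- and $\delta$-contributions collapse into a constant multiple of $\delta$, so $\frac{1}{\sqrt m}\min_{|I|\ge m/2}\|\bA_I(\bx_1-\bx_2)\|_2\ge(\mu-\tfrac{\alpha}{2})\|\bx_1-\bx_2\|_2-C\delta\ge(\mu-\alpha)\|\bx_1-\bx_2\|_2-C\delta$; rerunning the argument with $\delta$ replaced by $\delta/C$ (which only changes the implied constant in the sample complexity) yields the stated inequality.

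The one genuinely new ingredient relative to the standard S-REC proof is the identification of $\min_{|I|\ge m/2}\|\bA_I\bv\|_2^2$ with a partial sum of order statistics, which is what makes Lemma~\ref{lem:strong_JL} directly applicable and also what forces the ``$|I|\ge m/2$'' form of the statement. The other delicate point I expect is routing the off-net error through Lemma~\ref{lem:twosided_SREC} rather than through $\|\bA\|_{2\to2}$: using the operator norm would inject a spurious $\sqrt{n}$ (equivalently $\log n$) factor that is absent from the claimed sample complexity. Everything else is routine covering-number bookkeeping.
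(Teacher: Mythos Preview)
Your argument is correct, but it takes a different route from the paper's proof. The paper runs an explicit multi-scale chaining argument: it builds a chain of nets $M_0\subseteq M_1\subseteq\cdots\subseteq M_\ell$ at geometrically decreasing scales $\delta_i=\delta/2^i$, applies Lemma~\ref{lem:strong_JL} plus a union bound only on the coarsest net $M_0$, controls the telescoping increments $\|\bA(\bs_i-\bs_{i-1})\|_2$ via standard Gaussian concentration, and finally absorbs the residual $\|\bA(\bx_j-\bs_\ell)\|_2$ using the operator-norm bound $\|\bA\|_{2\to 2}/\sqrt{m}\le 2+\sqrt{n/m}$ together with the choice $\ell=\lceil\log_2 n\rceil$ (so that $\delta/2^\ell$ kills the $\sqrt{n}$). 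You instead use a single net and route the entire off-net error through the already-proved two-sided S-REC (Lemma~\ref{lem:twosided_SREC}), which is legitimate precisely because, as you note, each $\bx_j-\bu_j$ is a difference of two points in $G(B_2^k(r))$. This is more modular---you let Lemma~\ref{lem:twosided_SREC} do the chaining for you---while the paper's version is more self-contained. Your closing remark about the operator norm is correct for a \emph{single-scale} net but slightly misreads the paper: the paper does use $\|\bA\|_{2\to 2}$, but only after chaining to a scale fine enough that the $\sqrt{n}$ factor is harmless.
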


\subsection{Proof of Theorem \ref{thm:amplitude_loss}}

Since $\bp \in \mathrm{Range}(G)$, we have 
\begin{align}
    \|\by - |\bA\bp|\|_2^2 + \tau &\ge  \min_{\bw \in \mathrm{Range}(G)}\|\by - |\bA\bw|\|_2^2 + \tau \\
    & \ge \|\by - |\bA\bq|\|_2^2 \label{eq:explain_thm1_1}\\
    & = \|(\by - |\bA\bp|) - (|\bA\bq| - |\bA\bp|)\|_2^2\label{eq:explain_thm1_2},
\end{align}
where~\eqref{eq:explain_thm1_1} follows from~\eqref{eq:opt_noise}. Let  $S,T$ be the following index sets:
\begin{align}
    & S := \{i \in [m]\,:\, \sign(\ba_i^T\bp) = \sign(\ba_i^T\bq)  \}, \\
    & T := \{i \in [m]\,:\, \sign(\ba_i^T\bp) \ne \sign(\ba_i^T\bq) \}. 
\end{align}
Without loss of generality, we assume that $|S| \ge \frac{m}{2}$ (otherwise, we have $|T|\ge \frac{m}{2}$ and we can derive an upper bound for $\|\bq + \bx\|_2$ instead of for $\|\bq - \bx\|_2$).
Expanding the squares in~\eqref{eq:explain_thm1_2} and rearranging, we obtain
\begin{align}
    \||\bA\bp| - |\bA\bq|\|_2^2 & \le 2\langle \by -|\bA\bp|,|\bA\bq|-|\bA\bp| \rangle + \tau \label{eq:explain_thm1_3}\\
    & \le 2\|\by -|\bA\bp|\|_2 \||\bA\bq|-|\bA\bp|\|_2 + \tau \label{eq:explain_thm1_4}\\
    & \le 2\|\by -|\bA\bp|\|_2 \|\bA(\bq-\bp)\|_2 + \tau \label{eq:explain_thm1_5}\\
    & \le 2 (\|\bm{\eta}\|_2 + \|\bA(\bp-\bx)\|_2)\|\bA(\bq-\bp)\|_2 + \tau \label{eq:explain_thm1_6},
\end{align}
where~\eqref{eq:explain_thm1_5} follows from the inequality $||a|-|b|| \le \min \{|a-b|,|a+b|\}$ for $a,b \in \bbR$, and~\eqref{eq:explain_thm1_6} follows from \eqref{eq:measurement_model} and the triangle inequality. Note that $\bp -\bx$ is a fixed vector. From a basic concentration bound for standard Gaussian matrices ({\em cf.}, Lemma~\ref{lem:norm_pres} in Appendix~\ref{sec:general_aux}), we have with probability $1-e^{-\Omega(m)}$ that
\begin{equation}\label{eq:bd_fixedVec_thm1}
    \|\bA(\bp-\bx)\|_2 \le 2\sqrt{m}\|\bp -\bx\|_2.
\end{equation}
In addition, setting $\alpha = \frac{1}{2}$ in Lemma~\ref{lem:twosided_SREC}, we obtain that if $m = \Omega\big(k \log \frac{Lr}{\delta}\big)$, then with probability $1-e^{-\Omega(m)}$, it holds that
\begin{equation}\label{eq:explain_thm1_7}
    \|\bA(\bq-\bp)\|_2 \le \sqrt{m}\left(\frac{3}{2}\|\bq-\bp\|_2 + \delta\right).
\end{equation}
Moreover, setting $\alpha = \frac{\mu}{2}$ in Lemma~\ref{lem:strong_SREC}, we have that if $m = \Omega\big(k \log \frac{Lr}{\delta}\big)$, with probability $1-e^{-\Omega(m)}$, it holds that
\begin{align}
    \||\bA\bp| - |\bA\bq|\|_2^2 & = \|\bA_S(\bp -\bq)\|_2^2 + \|\bA_T (\bp +\bq)\|_2^2 \label{eq:explain_thm1_8}\\
    &\ge \|\bA_S(\bp -\bq)\|_2^2 \\
    &\ge m\left(\frac{\mu}{2}\|\bp -\bq\|_2 - \delta\right)^2 \label{eq:explain_thm1_9}\\
    &\ge m\left(\frac{\mu^2}{4}\|\bp -\bq\|_2^2  - \mu \delta\|\bp -\bq\|_2\right),\label{eq:explain_thm1_10}
\end{align}
where~\eqref{eq:explain_thm1_8} follows from the definitions of $S$ and $T$, and~\eqref{eq:explain_thm1_9} follows from Lemma~\ref{lem:strong_SREC} and our assumption $|S|\ge \frac{m}{2}$. Combining~\eqref{eq:explain_thm1_6},~\eqref{eq:bd_fixedVec_thm1},~\eqref{eq:explain_thm1_7}, and~\eqref{eq:explain_thm1_10}, we obtain
\begin{equation}\label{eq:explain_thm1_11}
    \frac{\mu^2}{4}\|\bp -\bq\|_2^2 - \mu \delta \|\bp-\bq\|_2 \le 2\left(\frac{\|\bm{\eta}\|_2}{\sqrt{m}} + 2\|\bp -\bx\|_2\right)\left(\frac{3}{2}\|\bp -\bq\|_2+\delta\right) +\frac{\tau}{m},
\end{equation}
which implies
\begin{equation}\label{eq:explain_thm1_12}
 \|\bp -\bq\|_2^2 \le O\left(\|\bp -\bx\|_2 +\frac{\|\bm{\eta}\|_2}{\sqrt{m}} +\delta \right)\|\bp-\bq\|_2 + \left(\|\bp -\bx\|_2 +\frac{\|\bm{\eta}\|_2}{\sqrt{m}}\right)\delta + O\left(\frac{\tau}{m}\right).
\end{equation}

Simplifying terms in~\eqref{eq:explain_thm1_12}, we obtain
\begin{equation}
  \|\bp -\bq\|_2 \le O\left(\|\bp -\bx\|_2 +\frac{\|\bm{\eta}\|_2 + \sqrt{\tau}}{\sqrt{m}} +\delta\right).
\end{equation}
From the triangle inequality $\|\bq-\bx\|_2 \le \|\bp -\bq\|_2 + \|\bp -\bx\|_2$, we obtain the desired result. 
    
\section{Proof of Theorem~\ref{thm:main_init_gen} (Spectral Initializations with Generative Priors)}\label{sec:main_proof}
\vspace*{-1ex}

Before proving Theorem \ref{thm:main_init_gen}, we provide a simplified outline and some useful auxiliary results.  In this appendix, it is particularly important to remember that $c$ and $c'$ represent small constants whose values may differ from line to line.

\subsection{Simplified Outline of the Proof}

Since the full proof of Theorem~\ref{thm:main_init_gen} is rather lengthy, we first provide an outline with certain simplifying assumptions (which are non-rigorous but will be dropped subsequently).  Specifically, we consider an idealized scenario in which $\bm{\eta} = \bm{0}$ (i.e, the noiseless setting), and $\lambda$ exactly equals its expectation $\|\bx\|_2$. Then, we have $y_i = |\langle \ba_i,\bx\rangle|$, and $\bV$ simplifies to 
\begin{equation}\label{eq:bV0}
    \bV_0 = \frac{1}{m}\sum_{i=1}^m y_i \ba_i \ba_i^T \bm{1}_{\{l < |\ba_i^T\bar{\bx}| < u\}}.
\end{equation}
In this simplified scenario, the mean is given by~({\em cf.}~\eqref{eq:intuition_main})
\begin{equation}
    \bJ = \bbE[\bV_0] = \|\bx\|_2 (\gamma_0 \bI_n +\beta_0 \bar{\bx}\bar{\bx}^T), \label{eq:J_appendix}
\end{equation}
and $\hat{\bx} \in \calS^{n-1}$ is the vector that satisfies
\begin{equation}
    \hat{\bx} = \arg \max_{\bw \in \tilde{G}(\calD)} \bw^T \bV_0 \bw.
\end{equation}

On the one hand, since $\bar{\bx} \in \tilde{G}(\calD)$, we have
\begin{align}
    \hat{\bx}^T \bV_0 \hat{\bx} & \ge \bar{\bx}^T \bV_0 \bar{\bx} \\
    & = \bar{\bx}^T (\bV_0-\bJ) \bar{\bx} + \bar{\bx}^T \bJ \bar{\bx} \\
    & = \bar{\bx}^T (\bV_0-\bJ) \bar{\bx} + \|\bx\|_2 (\gamma_0 + \beta_0), \label{eq:main_lb_intuition}
\end{align}
where \eqref{eq:main_lb_intuition} uses \eqref{eq:J_appendix}.  
With $y_i = |\ba_i^T \bx|$, we have that $y_i (\ba_i^T\bar{\bx})^2 \bm{1}_{\{l < |\ba_i^T\bar{\bx}| < u\}}$ is upper bounded by $\|\bx\|_2 u^3$. From~\eqref{eq:main_lb_intuition} we can use the concentration inequality for the sum of sub-Gaussian random variables ({\em cf.} Lemma~\ref{lem:large_dev_Gaussian}) to derive a lower bound of $\hat{\bx}^T \bV_0 \hat{\bx}$. 

On the other hand, we have 
\begin{align}
    \hat{\bx}^T \bV_0 \hat{\bx} & = \hat{\bx}^T (\bV_0-\bJ) \hat{\bx} + \hat{\bx}^T \bJ \hat{\bx} \\
    & = \hat{\bx}^T (\bV_0-\bJ) \hat{\bx} + \|\bx\|_2 \left(\gamma_0 + \beta_0 (\hat{\bx}^T \bar{\bx})^2\right),\label{eq:main_ub_intuition}
\end{align}
where we again used \eqref{eq:J_appendix}.  
Note that $\hat{\bx} $ is dependent on $\bV_0$. To upper bound~\eqref{eq:main_ub_intuition}, we construct a $\delta$-net, and write $\hat{\bx}$ as 
\begin{equation}
    \hat{\bx} = (\hat{\bx} - \tilde{\bx}) + \tilde{\bx},
\end{equation}
where $\tilde{\bx}$ is in the $\delta$-net satisfying $\|\tilde{\bx}-\hat{\bx}\|_2 < \delta$. For any $\bs \in \bbR^n$, we have that $y_i (\ba_i^T \bs)^2 \bm{1}_{\{l < |\ba_i^T\bar{\bx}| < u\}}$ is sub-exponential, with the sub-exponential norm being upper bounded by $\|\bx\|_2 u C \|\bs\|_2^2$, where $C$ is an absolute constant. Using a concentration inequality for the sum of sub-exponential random variables ({\em cf.} Lemma~\ref{lem:large_dev}) and taking a union bound over the $\delta$-net, we obtain an upper bound for $\tilde{\bx}^T (\bV_0-\bJ) \tilde{\bx}$. Using a high-probability upper bound on $\|\ba_i\|_2$ and the fact that $\|\tilde{\bx}-\hat{\bx}\|_2 < \delta$,  we can control the terms $(\hat{\bx}-\tilde{\bx})^T (\bV_0-\bJ) \hat{\bx}$ and $(\hat{\bx}-\tilde{\bx})^T (\bV_0-\bJ) (\hat{\bx}-\tilde{\bx})$. Then, from~\eqref{eq:main_ub_intuition}, we obtain an upper bound on $\hat{\bx}^T \bV_0 \hat{\bx}$.  Combining the upper and lower bounds on $\hat{\bx}^T \bV_0 \hat{\bx}$, we can derive the desired result. 

In the full analysis in Section \ref{sec:complete_proof}, we additionally carefully deal with the noise terms and the case that $\lambda$ approximately equals $\|\bx\|_2$. 

\subsection{Additional Lemmas}\label{sec:add_lemmas}

Based on Lemma~\ref{lem:large_dev_Gaussian}, we present the following simple lemma showing that $\lambda$ approximates $\|\bx\|_2$. 
\begin{lemma}\label{lem:lambda}
 For any fixed $c \in (0,1)$, with probability $1-e^{-\Omega(m)}$, we have that $\lambda$ defined in~\eqref{eq:def_lambda} satisfies
 \begin{equation}
  1 - c < \frac{\lambda}{\|\bx\|_2} < 1+c.
 \end{equation}
\end{lemma}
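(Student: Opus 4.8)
The plan is to exploit that, under the model~\eqref{eq:measurement_model}, $\lambda$ is (up to the deterministic prefactor $\sqrt{\pi/2}$) an empirical average of i.i.d.\ sub-Gaussian quantities plus a small noise term. First I would decompose
\[
\lambda \;=\; \sqrt{\tfrac{\pi}{2}}\cdot\frac{1}{m}\sum_{i=1}^m \big|\langle\ba_i,\bx\rangle\big| \;+\; \sqrt{\tfrac{\pi}{2}}\cdot\frac{1}{m}\sum_{i=1}^m \eta_i,
\]
and treat the two summands separately: the first by one-dimensional sub-Gaussian concentration (Lemma~\ref{lem:large_dev_Gaussian}), the second by a deterministic estimate using Cauchy--Schwarz and the SNR assumption~\eqref{eq:bdd_noise}.

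For the signal term, since $\ba_i$ has i.i.d.\ $\calN(0,1)$ entries we have $\langle\ba_i,\bx\rangle\sim\calN(0,\|\bx\|_2^2)$, so each $\big|\langle\ba_i,\bx\rangle\big|$ has the law of $\|\bx\|_2\,|g|$ with $g\sim\calN(0,1)$, and $\bbE\big[\sqrt{\pi/2}\,|g|\big]=\sqrt{\pi/2}\cdot\sqrt{2/\pi}=1$. Hence the centered variables $Z_i := \sqrt{\pi/2}\,\big|\langle\ba_i,\bx\rangle\big| - \|\bx\|_2$ are i.i.d., zero-mean, and sub-Gaussian with $\|Z_i\|_{\psi_2}\le C\|\bx\|_2$ for an absolute constant $C$ (the absolute value of a standard normal is sub-Gaussian, and shifting/scaling changes the $\psi_2$-norm only up to constants). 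I would then apply Lemma~\ref{lem:large_dev_Gaussian} with $X_i=Z_i$, weights $\alpha_i=1/m$ (so $\|\balpha\|_2^2=1/m$), and deviation level $t=\tfrac{c}{3}\|\bx\|_2$, obtaining
\[
\bbP\!\left(\left|\sqrt{\tfrac{\pi}{2}}\cdot\frac{1}{m}\sum_{i=1}^m\big|\langle\ba_i,\bx\rangle\big| - \|\bx\|_2\right|\ge \tfrac{c}{3}\|\bx\|_2\right)\;\le\; e^{-\Omega(m)},
\]
where the implied constant depends only on the fixed accuracy $c$ (and the absolute constant $C$).

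For the noise term, Cauchy--Schwarz gives $\big|\tfrac1m\sum_{i=1}^m\eta_i\big|=\tfrac1m\big|\langle\bm{\eta},\bone\rangle\big|\le \tfrac{1}{\sqrt m}\|\bm{\eta}\|_2\le c_0\|\bx\|_2$ by~\eqref{eq:bdd_noise}; since $c_0$ is a sufficiently small constant we may take $\sqrt{\pi/2}\,c_0\le c/3$, whence $\big|\sqrt{\pi/2}\cdot\tfrac1m\sum_i\eta_i\big|\le\tfrac{c}{3}\|\bx\|_2$. A triangle inequality then gives $|\lambda-\|\bx\|_2|<c\|\bx\|_2$ on the good event, i.e.\ $1-c<\lambda/\|\bx\|_2<1+c$, with probability $1-e^{-\Omega(m)}$. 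I do not expect any real obstacle here --- the argument is a routine scalar concentration bound combined with a deterministic estimate; the only mild point to watch is that the noise constant $c_0$ must be small relative to the target accuracy $c$, which is compatible with the standing convention that $c_0$ is ``sufficiently small''.
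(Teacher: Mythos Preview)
Your proposal is correct and follows essentially the same approach as the paper: decompose $\lambda$ into the signal average and the noise average, apply sub-Gaussian concentration (Lemma~\ref{lem:large_dev_Gaussian}) to the i.i.d.\ terms $|\langle\ba_i,\bx\rangle|$ centered at $\sqrt{2/\pi}\,\|\bx\|_2$, and bound the noise average deterministically via Cauchy--Schwarz and~\eqref{eq:bdd_noise}. The only cosmetic difference is that you multiply through by $\sqrt{\pi/2}$ before concentrating while the paper does so afterward.
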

\begin{proof}
 Since $|\langle \ba_i,\bx\rangle|$ is sub-Gaussian with the sub-Gaussian norm upper bounded by $C\|\bx\|_2$ and $\bbE[|\langle \ba_i,\bx\rangle|] = \sqrt{\frac{2}{\pi}}\|\bx\|_2$, from  Lemma~\ref{lem:large_dev_Gaussian}, we have with probability $1-e^{-\Omega(m)}$ that
 \begin{equation}\label{eq:bd_sum_aix}
  \left|\frac{1}{m}\sum_{i=1}^m |\langle \ba_i,\bx\rangle| - \sqrt{\frac{2}{\pi}} \|\bx\|_2\right| \le c\|\bx\|_2.
 \end{equation}
 In addition, by the Cauchy–Schwarz inequality and the upper bound for the noise term assumed in~\eqref{eq:bdd_noise}, we have
 \begin{equation}\label{eq:eta_bd_first}
  \left|\frac{1}{m}\sum_{i=1}^m \eta_i\right| \le \frac{\|\bm{\eta}\|_2}{\sqrt{m}} \le c_0 \|\bx\|_2.
 \end{equation}
Then, using~\eqref{eq:bd_sum_aix},~\eqref{eq:eta_bd_first}, and the triangle inequality, we obtain
\begin{equation}
  \left(\sqrt{\frac{2}{\pi}} - c'\right)\|\bx\|_2 \le \frac{1}{m}\sum_{i=1}^m y_i  =\frac{1}{m}\sum_{i=1}^m (|\langle \ba_i,\bx\rangle| + \eta_i) \le \left(\sqrt{\frac{2}{\pi}} + c'\right)\|\bx\|_2.
\end{equation}
Therefore, we obtain
\begin{equation}
 (1-c)\|\bx\|_2 <  \lambda = \sqrt{\frac{\pi}{2}} \left(\frac{1}{m}\sum_{i=1}^m y_i\right) < (1+c)\|\bx\|_2
\end{equation}
for a choice of $c$ possibly different from that above, but still arbitrarily small. 
\end{proof}

Next, we present the following lemma concerning the expectation of the truncated empirical matrix.
\begin{lemma}\label{lem:exp_emp_mat}
 Suppose that $\ba \sim \calN(\bm{0},\bI_n)$. For any $\bs \in \bbR^n$ and $\eta \in \bbR$, choosing $\alpha_1, \alpha_2 \in \bbR$ such that $1 + \frac{\eta}{\|\bs\|_2} < \alpha_1 < \alpha_2$ and setting $y = |\langle \ba,\bs\rangle| +\eta$, we have
 \begin{align}
  \bbE\left[y \ba\ba^T \mathbf{1}_{\{\alpha_1\|\bs\|_2 < y < \alpha_2 \|\bs\|_2\}}\right] = \|\bs\|_2\left(\gamma_1 \bI_n + \beta_1 \bar{\bs}\bar{\bs}^T\right) + \eta(\check{\gamma}_1\bI_n + \check{\beta}_1\bar{\bs}\bar{\bs}^T), 
 \end{align}
where $\bar{\bs} :=\frac{\bs}{\|\bs\|_2}$, and for $g \sim \calN(0,1)$ and $\phi_1(x) = \mathbf{1}_{\{\alpha_1 - \frac{\eta}{\|\bs\|_2} < |x| < \alpha_2 - \frac{\eta}{\|\bs\|_2}\}}$, we define $\gamma_1 = \bbE[|g|\phi_1(g)]$, $\beta_1 = \bbE[|g|^3 \phi_1(g)] -\gamma_1$, $\check{\gamma}_1 = \bbE[\phi_1(g)]$ and $\check{\beta}_1 = \bbE[g^2 \phi_1(g)] - \check{\gamma}_1$. (Note that from the assumption that $1 + \frac{\eta}{\|\bs\|_2} < \alpha_1 <\alpha_2$, we have $\beta_1 >0$ and $\check{\beta}_1>0$.)
\end{lemma}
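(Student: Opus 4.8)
The plan is to reduce the $n$-dimensional expectation to a one-dimensional Gaussian moment computation by exploiting the rotational invariance of the isotropic Gaussian $\ba \sim \calN(\bm{0},\bI_n)$. Fix an orthogonal matrix $\bU$ with $\bU\bar{\bs} = \be_1$ and write $\ba = \bU^T\bg$ with $\bg \sim \calN(\bm{0},\bI_n)$; then $\langle\ba,\bs\rangle = \|\bs\|_2\, g_1$, so $y = \|\bs\|_2|g_1| + \eta$ and, after dividing through by $\|\bs\|_2$, the indicator $\mathbf{1}_{\{\alpha_1\|\bs\|_2 < y < \alpha_2\|\bs\|_2\}}$ becomes exactly $\phi_1(g_1)$. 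Since $\ba\ba^T = \bU^T\bg\bg^T\bU$, it suffices to compute $\bbE\big[(\|\bs\|_2|g_1| + \eta)\,\bg\bg^T\,\phi_1(g_1)\big]$ and then conjugate by $\bU$, using $\bU^T\bI_n\bU = \bI_n$ and $\bU^T\be_1\be_1^T\bU = (\bU^T\be_1)(\bU^T\be_1)^T = \bar{\bs}\bar{\bs}^T$.

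Next I would observe that the scalar weight $(\|\bs\|_2|g_1| + \eta)\phi_1(g_1)$ depends only on $g_1$, while $g_2,\dots,g_n$ are independent of $g_1$ with mean zero and unit variance. Consequently every off-diagonal entry of $\bbE\big[(\|\bs\|_2|g_1| + \eta)\bg\bg^T\phi_1(g_1)\big]$ vanishes, since such an entry contains either a lone transverse factor $g_j$ ($j\ge 2$) or a product $g_ig_j$ with $2\le i<j$, in each case factoring out an expectation equal to zero. The diagonal splits into two cases: the $(1,1)$ entry equals $\|\bs\|_2\,\bbE[|g|^3\phi_1(g)] + \eta\,\bbE[g^2\phi_1(g)]$, and for $j\ge 2$ each $(j,j)$ entry equals $\bbE[g_j^2]\cdot\bbE[(\|\bs\|_2|g| + \eta)\phi_1(g)] = \|\bs\|_2\,\bbE[|g|\phi_1(g)] + \eta\,\bbE[\phi_1(g)]$. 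Rewriting this diagonal matrix in terms of $\gamma_1,\beta_1,\check{\gamma}_1,\check{\beta}_1$ gives $\|\bs\|_2(\gamma_1\bI_n + \beta_1\be_1\be_1^T) + \eta(\check{\gamma}_1\bI_n + \check{\beta}_1\be_1\be_1^T)$, and conjugating back by $\bU$ yields the claimed identity.

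For the positivity remarks I would write $\beta_1 = \bbE[|g|(|g|^2-1)\phi_1(g)]$ and $\check{\beta}_1 = \bbE[(g^2-1)\phi_1(g)]$. The hypothesis $1 + \eta/\|\bs\|_2 < \alpha_1 < \alpha_2$ forces the support of $\phi_1$, namely $\{\,\alpha_1 - \eta/\|\bs\|_2 < |g| < \alpha_2 - \eta/\|\bs\|_2\,\}$, to be a nonempty open interval contained in $\{|g|>1\}$, hence of positive Gaussian measure; on that support both integrands are strictly positive, so $\beta_1 > 0$ and $\check{\beta}_1 > 0$.

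I do not anticipate any genuine obstacle here: this is a direct Gaussian moment calculation requiring no concentration arguments. The only points needing a little care are the bookkeeping of the indicator under the rescaling by $\|\bs\|_2$ (so that it becomes precisely $\phi_1(g_1)$) and the verification that all off-diagonal moments vanish by the independence and mean-zero property of the transverse coordinates.
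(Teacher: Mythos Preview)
Your argument is correct and is in fact slightly cleaner than the paper's. Both proofs exploit the same structural fact---that the scalar weight depends on $\ba$ only through its component along $\bar{\bs}$, with the orthogonal component independent---but they implement it differently. The paper stays in the original coordinates and writes each entry as $a_i = \bar{s}_i g + t_i$ with $t_i$ independent of $g = \ba^T\bar{\bs}$; this forces a separate computation of the cross terms $\bbE[t_i t_j] = -\bar{s}_i\bar{s}_j$ to handle the off-diagonal entries. Your rotation to align $\bar{\bs}$ with $\be_1$ makes the transverse coordinates $g_2,\dots,g_n$ mutually independent standard normals, so the off-diagonal vanishing is immediate and no covariance bookkeeping is needed. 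The paper's route has the minor advantage of never introducing an auxiliary orthogonal matrix, but your approach is otherwise more transparent; either way the computation is routine.
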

\begin{proof}
Let $g = \langle \ba,\bar{\bs}\rangle \sim \calN(0,1)$. We have
 \begin{align}
  y \ba\ba^T \mathbf{1}_{\{\alpha_1\|\bs\|_2 < y < \alpha_2 \|\bs\|_2\}} &= (\|\bs\|_2 |\ba^T\bar{\bs}| +\eta) \ba\ba^T  \bm{1}_{\{\alpha_1 - \frac{\eta}{\|\bs\|_2} < |\langle \ba,\bar{\bs}\rangle| < \alpha_2 - \frac{\eta}{\|\bs\|_2}\}} \\
  & = (\|\bs\|_2 |g| +\eta)\phi_1(g) \ba\ba^T .\label{eq:exp_emp_decomp}
 \end{align}
 For any $i \in [n]$, we have $\mathrm{Cov}[a_i,g] = \bar{s}_i$, and thus $a_i$ can be written as 
\begin{equation}
 a_i = \bar{s}_i g + t_i,
\end{equation}
where $t_i \sim \calN(0,1-\bar{s}_i^2)$ is independent of $g$. Then, we have 
\begin{align}
 \bbE\left[|g| \phi_1(g) a_i^2 \right] &= \bbE\left[|g|\phi_1(g)(\bar{s}_i g + t_i)^2\right] \\
 &= \bar{s}_i^2\bbE\left[|g|^3 \phi_1(g)\right] + \left(1-\bar{s}_i^2\right)\bbE[|g|\phi_1(g)]\\ 
 & = \gamma_1 + \beta_1 \bar{s}_i^2, \label{eq:aiai_mainterm}
\end{align}
and similarly,
\begin{align}
 \bbE\left[\phi_1(g) a_i^2 \right] &= \bbE\left[\phi_1(g)(\bar{s}_i g + t_i)^2\right] \\
 &= \bar{s}_i^2\bbE\left[g^2 \phi_1(g)\right] + \left(1-\bar{s}_i^2\right) \bbE[\phi_1(g)] \\
 &= \check{\gamma}_1 + \check{\beta}_1\bar{s}_i^2.\label{eq:aiai_noiseterm}
\end{align}
Moreover, for $1 \le i \ne j \le n$, we have $0 = \bbE[a_i a_j]= \bbE[(\bar{s}_i g + t_i)(\bar{s}_j g + t_j)] = \bar{s}_i\bar{s}_j + \bbE[t_i t_j]$, which gives $\bbE[t_i t_j] = - \bar{s}_i\bar{s}_j$. Then, similarly to~\eqref{eq:aiai_mainterm} and~\eqref{eq:aiai_noiseterm}, we have 
\begin{equation}\label{eq:aiaj_mainterm}
 \bbE\left[|g| \phi_1(g) a_i a_j \right] = \bbE\left[|g|\phi_1(g)(\bar{s}_i g + t_i)(\bar{s}_j g + t_j)\right] = \beta_1\bar{s}_i \bar{s}_j, 
\end{equation}
and 
\begin{equation}\label{eq:aiaj_noiseterm}
 \bbE\left[\phi_1(g) a_i a_j \right] = \bbE\left[\phi_1(g)(\bar{s}_i g + t_i)(\bar{s}_j g + t_j)\right] = \check{\beta}_1 \bar{s}_i \bar{s}_j.
\end{equation}
Combining~\eqref{eq:exp_emp_decomp},~\eqref{eq:aiai_mainterm},~\eqref{eq:aiai_noiseterm},~\eqref{eq:aiaj_mainterm},~\eqref{eq:aiaj_noiseterm}, we obtain the desired result.
\end{proof}

\subsection{Proof of Theorem~\ref{thm:main_init_gen}}\label{sec:complete_proof}

Let $\calE$ be the event that
\begin{equation}\label{eq:event_eps}
 1-c \le \frac{\lambda}{\|\bx\|_2} \le 1+c.
\end{equation}
From Lemma~\ref{lem:lambda}, we know that $\calE$ occurs with probability $1-e^{-\Omega(m)}$. Throughout the following, we assume that $\calE$ holds, and the relevant probabilities are conditioned accordingly.

\textbf{Lower bounding $\hat{\bx}^T \bV \hat{\bx}$:} Since $\bar{\bx} \in  \tilde{G}(\calD)$, and $\hat{\bx}$ is a solution of~\eqref{eq:main_opt}, we have
\begin{align}
 &\hat{\bx}^T \bV \hat{\bx} \ge \bar{\bx}^T \bV \bar{\bx} \\
 & = \frac{1}{m}\sum_{i=1}^m y_i \left(\ba_i^T\bar{\bx}\right)^2 \mathbf{1}_{\{l \lambda < y_i < u \lambda\}}\\
 & = \frac{\|\bx\|_2}{m}\sum_{i=1}^m  |\ba_i^T\bar{\bx}|^3 \mathbf{1}_{\{l \lambda < y_i < u \lambda\}} + \frac{1}{m}\sum_{i=1}^m \eta_i \left(\ba_i^T\bar{\bx}\right)^2 \mathbf{1}_{\{l \lambda < y_i < u \lambda\}} \label{eq:xVx_step3} \\
 & \ge \frac{\|\bx\|_2}{m}\sum_{i=1}^m  |\ba_i^T\bar{\bx}|^3 \mathbf{1}_{\{l(1+c')\|\bx\|_2 < y_i < u (1-c')\|\bx\|_2\}} \nonumber\\
 & \quad - \frac{1}{m}\sum_{i=1}^m |\eta_i| \left(\ba_i^T\bar{\bx}\right)^2 \mathbf{1}_{\{l(1-c')\|\bx\|_2 < y_i < u(1+c')\|\bx\|_2\}}\label{eq:use_event_eps_lb}\\
 & = \frac{\|\bx\|_2}{m}\sum_{i=1}^m  |\ba_i^T\bar{\bx}|^3 \mathbf{1}_{\{l(1+c') - \frac{\eta_i}{\|\bx\|_2} <|\ba_i^T\bar{\bx}| < u (1-c') - \frac{\eta_i}{\|\bx\|_2}\}} \nonumber\\
 & \quad  - \frac{1}{m}\sum_{i=1}^m |\eta_i| \left(\ba_i^T\bar{\bx}\right)^2 \mathbf{1}_{\{l(1-c') - \frac{\eta_i}{\|\bx\|_2} <|\ba_i^T\bar{\bx}| < u (1+c') - \frac{\eta_i}{\|\bx\|_2}\}} \label{eq:use_def_yi}\\
 & \ge \frac{\|\bx\|_2}{m}\sum_{i=1}^m  |\ba_i^T\bar{\bx}|^3 \mathbf{1}_{\{l(1+c) <|\ba_i^T\bar{\bx}| < u (1-c) \}} - \frac{1}{m}\sum_{i=1}^m |\eta_i| \left(\ba_i^T\bar{\bx}\right)^2 \mathbf{1}_{\{l(1-c)  <|\ba_i^T\bar{\bx}| < u (1+c) \}}\label{eq:use_noise_bd0},
 \end{align}
 where we use~\eqref{eq:event_eps} in~\eqref{eq:use_event_eps_lb}, we use $y_i = |\ba_i^T\bx| +\eta_i$ in~\eqref{eq:xVx_step3} and~\eqref{eq:use_def_yi}, and we use~\eqref{eq:bdd_noise0} as well as $u > l > 1+c_1$ to derive~\eqref{eq:use_noise_bd0}. We aim to bound the two terms in~\eqref{eq:use_noise_bd0}. Let
 \begin{align}
  &\bU = \frac{1}{m}\sum_{i=1}^m  |\ba_i^T\bx| \ba_i\ba_i^T \mathbf{1}_{\{l(1+c) < |\ba_i^T\bar{\bx}| < u (1-c)\}},\\
  &\bW = \frac{1}{m}\sum_{i=1}^m  |\ba_i^T\bx| \ba_i\ba_i^T \mathbf{1}_{\{l(1-c) < |\ba_i^T\bar{\bx}| < u(1+c)\}},\label{eq:bW}\\
  &\check{\bU} = \frac{1}{m}\sum_{i=1}^m  |\eta_i| \ba_i\ba_i^T \mathbf{1}_{\{l(1+c) < |\ba_i^T\bar{\bx}| < u (1-c)\}},\\
  & \check{\bW} = \frac{1}{m}\sum_{i=1}^m  |\eta_i| \ba_i\ba_i^T \mathbf{1}_{\{l(1-c) < |\ba_i^T\bar{\bx}| < u(1+c)\}}\label{eq:checkbW}.
 \end{align}
According to the proof of Lemma~\ref{lem:exp_emp_mat}, we have 
\begin{align}
  &\bbE[\bU] =  \|\bx\|_2 \left(\gamma\bI_n + \beta\bar{\bx}\bar{\bx}^T\right),\label{eq:expbU}\\
  &\bbE[\bW] = \|\bx\|_2 \left(\gamma'\bI_n + \beta'\bar{\bx}\bar{\bx}^T\right),\label{eq:expbW}\\
  &\bbE[\check{\bU}] = \bar{\eta}\left(\check{\gamma}\bI_n + \check{\beta}\bar{\bx}\bar{\bx}^T\right),\label{eq:expcheckbU}\\
  & \bbE[\check{\bW}] = \bar{\eta}\left(\check{\gamma}'\bI_n + \check{\beta}'\bar{\bx}\bar{\bx}^T\right)\label{eq:expcheckbW}.
 \end{align}
 where for $g \sim \calN(0,1)$ and 
 \begin{align}
  &\phi(x):=\bm{1}_{\{l(1+c) < |x| < u(1-c) \}},\label{eq:phi_iDef}\\
  &\psi(x):=\bm{1}_{\{l(1-c) < |x| < u(1+c) \}}\label{eq:psi_iDef},
 \end{align}
 we set $\gamma = \bbE[|g|\phi(g)]$, $\beta = \bbE[|g|^3\phi(g)] - \gamma$, $\gamma' = \bbE[|g|\psi(g)]$, $\beta' =  \bbE[|g|^3\psi(g)] - \gamma'$, $\bar{\eta} =\frac{1}{m} \sum_{i=1}^m|\eta_i|$, $\check{\gamma} = \bbE[\phi(g)]$, $\check{\beta} =\bbE[g^2\phi(g)] -  \check{\gamma}$, $\check{\gamma}' = \bbE[\psi(g)]$, and $\check{\beta}' =\bbE[g^2\psi(g)] -  \check{\gamma}'$. Then, in~\eqref{eq:use_noise_bd0}, we have 
\begin{align}
 &\frac{\|\bx\|_2}{m}\sum_{i=1}^m  |\ba_i^T\bar{\bx}|^3 \mathbf{1}_{\{l(1+c) < |\ba_i^T\bar{\bx}| < u (1-c)\}} \nonumber \\
 & = \bar{\bx}^T \bU \bar{\bx} \\
 & = \bar{\bx}^T (\bU -\bbE[\bU]) \bar{\bx} + \bar{\bx}^T \bbE[\bU] \bar{\bx} \\
 & = \frac{1}{m}\sum_{i=1}^m \left(|\ba_i^T \bx|(\ba_i^T \bar{\bx})^2 \phi\left(\ba_i^T \bar{\bx}\right) - \bbE\left[|\ba_i^T \bx|(\ba_i^T \bar{\bx})^2 \phi\left(\ba_i^T \bar{\bx}\right)\right]\right) + \|\bx\|_2(\gamma + \beta), \label{eq:lb_impEq}
\end{align}
where we use~\eqref{eq:expbU},~\eqref{eq:phi_iDef}, and $\|\bar{\bx}\|_2 = 1$ to obtain~\eqref{eq:lb_impEq}. Note that $|\ba_i^T \bx|(\ba_i^T \bar{\bx})^2 \phi\left(\ba_i^T \bar{\bx}\right) = |\ba_i^T \bx|(\ba_i^T \bar{\bx})^2 \bm{1}_{\{l(1+c) < |\ba_i^T \bar{\bx}| < u(1-c) \}} \le  u^3(1-c)^3 \|\bx\|_2 = O(\|\bx\|_2)$. Then, from Lemma~\ref{lem:large_dev_Gaussian}, we have with probability $1-e^{-\Omega(m)}$ that
\begin{equation}
 \left|\frac{1}{m}\sum_{i=1}^m \left(|\ba_i^T \bx|(\ba_i^T \bar{\bx})^2 \phi\left(\ba_i^T \bar{\bx}\right) - \bbE\left[|\ba_i^T \bx|(\ba_i^T \bar{\bx})^2 \phi\left(\ba_i^T \bar{\bx}\right)\right]\right)\right| \le c\|\bx\|_2. \label{eq:comb1_lb}
\end{equation}
In addition, we have
\begin{align}
 & \frac{1}{m}\sum_{i=1}^m |\eta_i| \left(\ba_i^T\bar{\bx}\right)^2 \mathbf{1}_{\{l(1-c)  <|\ba_i^T\bar{\bx}| < u (1+c) \}} \nonumber \\
 &\quad= \bar{\bx}^T \check{\bW} \bar{\bx} \\
 &\quad= \bar{\bx}^T (\check{\bW}-\bbE[\check{\bW}]) \bar{\bx} + \bar{\bx}^T \bbE[\check{\bW}] \bar{\bx} \\
 &\quad= \frac{1}{m}\sum_{i=1}^m |\eta_i|\left( \left(\ba_i^T\bar{\bx}\right)^2 \psi\left(\ba_i^T\bar{\bx}\right) - \bbE\left[\left(\ba_i^T\bar{\bx}\right)^2 \psi\left(\ba_i^T\bar{\bx}\right)\right]\right) + \bar{\eta} (\check{\gamma}' + \check{\beta}') \\
 &\quad\le \frac{1}{m}\sum_{i=1}^m |\eta_i|\left( \left(\ba_i^T\bar{\bx}\right)^2 \psi\left(\ba_i^T\bar{\bx}\right) - \bbE\left[\left(\ba_i^T\bar{\bx}\right)^2 \psi\left(\ba_i^T\bar{\bx}\right)\right]\right) + c_0 \|\bx\|_2 (\check{\gamma}' + \check{\beta}'), \label{eq:bd_noise_eta_sum}
\end{align}
where we use $\bar{\eta} = \frac{1}{m}\sum_{i=1}^m |\eta_i| \le \frac{\|\bm{\eta}\|_2}{\sqrt{m}} \le c_0 \|\bx\|_2$ (see~\eqref{eq:bdd_noise}) in~\eqref{eq:bd_noise_eta_sum}. From the definition of $\psi$ in~\eqref{eq:psi_iDef}, we have $\big(\ba_i^T\bar{\bx}\big)^2 \psi\big(\ba_i^T\bar{\bx}\big) \le u^2(1+c)^2 = O(1)$. Setting $\alpha_i = \frac{|\eta_i|}{m}$ and $X_i = \big(\ba_i^T\bar{\bx}\big)^2 \psi\big(\ba_i^T\bar{\bx}\big) - \bbE\big[\big(\ba_i^T\bar{\bx}\big)^2 \psi\big(\ba_i^T\bar{\bx}\big)\big]$ in Lemma~\ref{lem:large_dev_Gaussian}, we obtain with probability $1-e^{-\Omega(m)}$ that
\begin{equation}
 \left|\frac{1}{m}\sum_{i=1}^m |\eta_i|\left( \left(\ba_i^T\bar{\bx}\right)^2 \psi\left(\ba_i^T\bar{\bx}\right) - \bbE\left[\left(\ba_i^T\bar{\bx}\right)^2 \psi\left(\ba_i^T\bar{\bx}\right)\right]\right)\right| \le \frac{\|\bm{\eta}\|_2}{\sqrt{m}} \le c_0 \|\bx\|_2.\label{eq:comb2_lb}
\end{equation}
Combining~\eqref{eq:use_noise_bd0},~\eqref{eq:lb_impEq},~\eqref{eq:comb1_lb},~\eqref{eq:bd_noise_eta_sum} and~\eqref{eq:comb2_lb}, we have with probability $1-e^{-\Omega(m)}$ that
\begin{equation}
 \hat{\bx}^T \bV \hat{\bx} \ge (\gamma+\beta - c)\|\bx\|_2.\label{eq:final_lb}
\end{equation}

\textbf{Upper bounding $\hat{\bx}^T \bV \hat{\bx}$:} We have 
\begin{align}
 &\hat{\bx}^T \bV \hat{\bx} = \frac{1}{m}\sum_{i=1}^m |\ba_i^T \bx| (\ba_i^T\hat{\bx})^2 \bm{1}_{\{l\lambda < y_i < u\lambda\}} + \frac{1}{m}\sum_{i=1}^m \eta_i (\ba_i^T\hat{\bx})^2 \bm{1}_{\{l\lambda < y_i < u\lambda\}}\\
 & \le \frac{1}{m}\sum_{i=1}^m |\ba_i^T \bx| (\ba_i^T\hat{\bx})^2 \bm{1}_{\{l(1-c) < |\ba_i^T\bar{\bx}| < u(1+c)\}} + \frac{1}{m}\sum_{i=1}^m |\eta_i| (\ba_i^T\hat{\bx})^2 \bm{1}_{\{l(1-c) < |\ba_i^T\bar{\bx}| < u(1+c)\}}\label{eq:similar_to_use_noise_bd0} \\
 & = \hat{\bx}^T \bW \hat{\bx} + \hat{\bx}^T \check{\bW}\hat{\bx}\label{eq:main_ub_m1}\\
 & = \hat{\bx}^T (\bW -\bbE[\bW]) \hat{\bx} + \hat{\bx}^T (\check{\bW}-\bbE[\check{\bW}])\hat{\bx} + \|\bx\|_2\left(\gamma' + \beta' \left(\hat{\bx}^T\bar{\bx}\right)^2\right) + \bar{\eta} (\check{\gamma}' + \check{\beta}' \left(\hat{\bx}^T\bar{\bx})^2\right) \label{eq:main_ub_m2}\\
 & \le \hat{\bx}^T (\bW -\bbE[\bW]) \hat{\bx} + \hat{\bx}^T (\check{\bW}-\bbE[\check{\bW}])\hat{\bx} + \|\bx\|_2\left(\gamma' + \beta' \left(\hat{\bx}^T\bar{\bx}\right)^2\right) + c \|\bx\|_2,\label{eq:main_ub}
\end{align}
where~\eqref{eq:similar_to_use_noise_bd0} is derived similarly to~\eqref{eq:use_noise_bd0},~\eqref{eq:main_ub_m1} comes from the definitions of $\bW$ and $\check{\bW}$ in~\eqref{eq:bW} and~\eqref{eq:checkbW},~\eqref{eq:main_ub_m2} follows from~\eqref{eq:expbW} and~\eqref{eq:expcheckbW}, and~\eqref{eq:main_ub} is deduced from $\bar{\eta} \le c_0 \|\bx\|_2$. 

We aim to provide an upper bound for~\eqref{eq:main_ub}. For $\delta >0$, let $M$ be a $(\delta/\tilde{L})$-net of $\calD \subseteq B_2^k(r)$.  There exists such a net with \cite[Lemma~5.2]{vershynin2010introduction}
    \begin{equation}
        \log |M| \le k \log\frac{4\tilde{L}r}{\delta}. \label{eq:net_size}
    \end{equation}
    By the $\tilde{L}$-Lipschitz continuity of $\tilde{G}$, we have that $\tilde{G}(M)$ is a $\delta$-net of $\tilde{G}(\calD)$. We write 
    \begin{equation}
     \hat{\bx} = (\hat{\bx} -\tilde{\bx}) + \tilde{\bx}, 
    \end{equation}
where $\tilde{\bx} \in \tilde{G}(M)$ with $\|\hat{\bx} -\tilde{\bx}\|_2 \le \delta$. Then, we have
\begin{equation}
 \hat{\bx}^T (\bW-\bbE[\bW]) \hat{\bx} = \tilde{\bx}^T (\bW-\bbE[\bW]) \tilde{\bx} + 2 (\hat{\bx} -\tilde{\bx})^T(\bW-\bbE[\bW])\tilde{\bx} + (\hat{\bx} -\tilde{\bx})^T (\bW-\bbE[\bW]) (\hat{\bx} -\tilde{\bx}).\label{eq:main_ub_1}
\end{equation}
We bound the three terms in~\eqref{eq:main_ub_1} separately. 
\begin{itemize}[leftmargin=5ex,itemsep=0ex,topsep=0.25ex]
 \item \underline{The first term}: For any $\bs \in \tilde{G}(M)$,  we have from~\eqref{eq:bW} that
\begin{align}
 \bs^T (\bW-\bbE[\bW])\bs = \frac{1}{m}\sum_{i=1}^m \left(|\ba_i^T\bx| \langle \ba_i,\bs\rangle^2\mathbf{1}_{\{l(1-c) < |\ba_i^T\bar{\bx}| < u(1+c)\}}  - \bs^T\bbE[\bW]\bs\right). \label{eq:ub_union_bd_sum}
\end{align}
Note that for $\bs \in \tilde{G}(M)$, $\|\bs\|_2 =1$. Then, $|\ba_i^T\bx| \langle \ba_i,\bs\rangle^2\mathbf{1}_{\{l(1-c) < |\ba_i^T\bar{\bx}| < u(1+c)\}}$ is sub-exponential with the sub-exponential norm upper bounded by $C u(1+c)\|\bx\|_2$. From Lemma~\ref{lem:large_dev}, we have with probability at least $1-e^{-\Omega(m\epsilon^2)}$ that
\begin{equation}
 \left|\bs^T (\bW-\bbE[\bW])\bs\right| \le \epsilon\|\bx\|_2.
\end{equation}
Taking a union bound over all $\tilde{G}(M)$, we obtain that when $m = \Omega\big(\frac{k}{\epsilon^2} \log \frac{\tilde{L}r}{\delta}\big)$, with probability $1-e^{-\Omega(m\epsilon^2)}$, for \textit{all} $\bs \in \tilde{G}(M)$,
\begin{equation}
 \left|\bs^T (\bW-\bbE[\bW])\bs\right| \le  \epsilon\|\bx\|_2. \label{eq:ub_union_bd_tildeY}
\end{equation}
Hence, since $\tilde{\bx} \in \tilde{G}(M)$, setting $\epsilon$ to be a sufficiently small absolute constant, we obtain that when $m = \Omega\big(k \log \frac{\tilde{L}r}{\delta}\big)$, with probability $1-e^{-\Omega(m)}$,
\begin{equation}
 \left|\tilde{\bx}^T (\bW-\bbE[\bW]) \tilde{\bx}\right| \le c\|\bx\|_2.\label{eq:ub_sep1}
\end{equation}

\item \underline{The second term}: From Lemma~\ref{lem:large_dev_Gaussian} and a union bound over $[m]$, we have with probability $1-me^{-\Omega(n)}$ that
\begin{equation}
 \max_{i \in [m]} \|\ba_i\|_2 \le \sqrt{2n}. \label{eq:ub_ai_ell2}
\end{equation}
Conditioned on the event in~\eqref{eq:ub_ai_ell2}, we have 
\begin{align}
 & (\hat{\bx} -\tilde{\bx})^T(\bW-\bbE[\bW])\tilde{\bx} \nonumber\\
 & = \frac{1}{m}\sum_{i=1}^m \left(|\ba_i^T\bx| (\ba_i^T\tilde{\bx})(\ba_i^T(\hat{\bx}-\tilde{\bx}))\mathbf{1}_{\{l(1-c) < |\ba_i^T\bar{\bx}| < u(1+c)\}}  - (\hat{\bx} -\tilde{\bx})^T\bbE\left[\bW\right]\tilde{\bx}\right) \\
 & = \frac{1}{m}\sum_{i=1}^m  |\ba_i^T\bx| (\ba_i^T\tilde{\bx})(\ba_i^T(\hat{\bx}-\tilde{\bx}))\mathbf{1}_{\{l(1-c) < |\ba_i^T\bar{\bx}| < u(1+c)\}}  \nonumber \\
 & \qquad\qquad\qquad - \|\bx\|_2 \left(\gamma'(\hat{\bx}-\tilde{\bx})^T \tilde{\bx} + \beta'((\hat{\bx}-\tilde{\bx})^T\bar{\bx}) (\tilde{\bx}^T\bar{\bx})\right)\label{eq:ub_second}\\
 & \le (2n(1+c)u + \gamma'+\beta') \|\bx\|_2\delta,\label{eq:ub_second_tildeY}
\end{align}
where we use~\eqref{eq:ub_ai_ell2}, $\|\hat{\bx}-\tilde{\bx}\|_2\le \delta$, $\|\tilde{\bx}\|_2 = \|\bar{\bx}\|_2 =1$, and the standard inequality $|\ba^T\bb| \le \|\ba\|_2 \|\bb\|_2$ (for any $\ba$ and $\bb$) in~\eqref{eq:ub_second_tildeY}.
Therefore, we obtain with probability $1-e^{-\Omega(n)}$ that
\begin{equation}
 \left|2 (\hat{\bx} -\tilde{\bx})^T(\bW-\bbE[\bW])\tilde{\bx}\right| \le 2(2n(1+c)u + \gamma'+\beta') \|\bx\|_2\delta. \label{eq:ub_sep2}
\end{equation}

\item \underline{The third term}: By an argument similar to~\eqref{eq:ub_sep2}, we have with probability $1-e^{-\Omega(n)}$ that
\begin{equation}
 \left|(\hat{\bx} -\tilde{\bx})^T (\bW-\bbE[\bW]) (\hat{\bx} -\tilde{\bx})\right| \le (2n(1+c)u + \gamma'+\beta') \|\bx\|_2 \delta^2. \label{eq:ub_sep3}
\end{equation}
\end{itemize}
Combining~\eqref{eq:main_ub_1},~\eqref{eq:ub_sep1},~\eqref{eq:ub_sep2}, and~\eqref{eq:ub_sep3}, and setting $\delta = \frac{c}{n}$, we obtain that when $m = \Omega(k \log (\tilde{L} n r))$, with probability $1-e^{-\Omega(m)}$, it holds that
\begin{equation}
 \left|\hat{\bx}^T (\bW -\bbE[\bW]) \hat{\bx}\right| \le c \|\bx\|_2.\label{eq:final_ub_aix}
\end{equation}

Next, we provide an upper bound for $\hat{\bx}^T (\check{\bW}-\bbE[\check{\bW}])\hat{\bx}$. The strategy is mostly similar to that for upper bounding $\hat{\bx}^T (\bW-\bbE[\bW])\hat{\bx}$ in~\eqref{eq:main_ub}, but we provide the details below for completeness. Similarly to~\eqref{eq:main_ub_1}, we have
\begin{equation}
 \hat{\bx}^T (\check{\bW}-\bbE[\check{\bW}]) \hat{\bx} = \tilde{\bx}^T (\check{\bW}-\bbE[\check{\bW}]) \tilde{\bx} + 2 (\hat{\bx} -\tilde{\bx})^T(\check{\bW}-\bbE[\check{\bW}])\tilde{\bx} + (\hat{\bx} -\tilde{\bx})^T (\check{\bW}-\bbE[\check{\bW}]) (\hat{\bx} -\tilde{\bx}).\label{eq:main_ub_eta}
\end{equation}
We bound the three terms in~\eqref{eq:main_ub_eta} separately. 
\begin{itemize}[leftmargin=5ex,itemsep=0ex,topsep=0.25ex]
 \item \underline{The first term}: From the definition of $\check{\bW}$ in~\eqref{eq:checkbW}, we have for any $\bs \in \tilde{G}(M)$ that
\begin{align}
 \bs^T (\check{\bW}-\bbE[\check{\bW}])\bs = \frac{1}{m}\sum_{i=1}^m |\eta_i| (\ba_i^T \bs)^2 \bm{1}_{\{l(1-c)<|\ba_i^T\bar{\bx}|<u(1+c)\}} - \bs^T  \bbE[\check{\bW}] \bs.
\end{align}
Note that $\|\bs\|_2 =1$. For any $i \in [m]$, $(\ba_i^T \bs)^2 \bm{1}_{\{l(1-c)<|\ba_i^T\bar{\bx}|<u(1+c)\}} $ is sub-exponential with the sub-exponential norm upper bounded by an absolute constant $C$. Then, from Lemma~\ref{lem:large_dev} (see also~\cite[Theorem~3.1]{hao2019bootstrapping}), we obtain that for $t>2$, with probability $1-e^{-t}$, it holds that
\begin{equation}
 \left|\bs^T (\check{\bW}-\bbE[\check{\bW}])\bs\right| \le C \left(\frac{\|\bm{\eta}\|_2}{m}\sqrt{t} + \frac{\|\bm{\eta}\|_\infty}{m} t\right).\label{eq:main_ub_eta_1}
\end{equation}
Setting $t = m$ in~\eqref{eq:main_ub_eta_1}, and from $\frac{\|\bm{\eta}\|_2}{\sqrt{m}} \le c_0 \|\bx\|_2$ ({\em cf.}~\eqref{eq:bdd_noise}) and $\|\bm{\eta}\|_\infty \le c_1 \|\bx\|_2$ ({\em cf.}~\eqref{eq:bdd_noise0}), we obtain with probability $1-e^{-m}$ that
\begin{equation}
 \left|\bs^T (\check{\bW}-\bbE[\check{\bW}])\bs\right| \le c\|\bx\|_2.
\end{equation}
Taking a union bound over all $\tilde{G}(M)$, we obtain that when $m = \Omega\big(k \log \frac{\tilde{L}r}{\delta}\big)$, with probability $1-e^{-\Omega(m\epsilon^2)}$, for \textit{all} $\bs \in \tilde{G}(M)$,
\begin{equation}
 \left|\bs^T (\check{\bW}-\bbE[\check{\bW}])\bs\right| \le  c\|\bx\|_2. \label{eq:main_ub_eta_2}
\end{equation}
Hence, since $\tilde{\bx} \in \tilde{G}(M)$, we have 
\begin{equation}
 \left|\tilde{\bx}^T (\check{\bW}-\bbE[\check{\bW}]) \tilde{\bx}\right| \le c\|\bx\|_2.\label{eq:main_ub_eta_3}
\end{equation}

\item \underline{The second term}: Conditioned on the event in~\eqref{eq:ub_ai_ell2}, we have 
\begin{align}
 & (\hat{\bx} -\tilde{\bx})^T(\check{\bW}-\bbE[\check{\bW}])\tilde{\bx} \nonumber\\
 &\quad= \frac{1}{m}\sum_{i=1}^m |\eta_i| \left(\ba_i^T(\hat{\bx}-\tilde{\bx})\right)(\ba_i^T \tilde{\bx})\mathbf{1}_{\{l(1-c) < |\ba_i^T\bar{\bx}| < u(1+c)\}} \nonumber \\
 &\qquad\qquad\qquad - \bar{\eta} (\check{\gamma}' (\hat{\bx}-\tilde{\bx})^T \tilde{\bx} + \check{\beta}'((\hat{\bx}-\tilde{\bx})^T \bar{\bx}) (\tilde{\bx}^T\bar{\bx}) ) \\
 &\quad\le \bar{\eta}\delta(2n + \check{\gamma}'+\check{\beta}').\label{eq:main_ub_eta_4}
\end{align}
Therefore, we obtain with probability $1-e^{-\Omega(n)}$ that
\begin{equation}
 \left|2 (\hat{\bx} -\tilde{\bx})^T(\check{\bW}-\bbE[\check{\bW}])\tilde{\bx}\right| \le 2\bar{\eta}\delta(2n + \check{\gamma}'+\check{\beta}'). \label{eq:main_ub_eta_5}
\end{equation}

\item \underline{The third term}: By an argument similar to~\eqref{eq:main_ub_eta_5}, we have with probability $1-e^{-\Omega(n)}$ that
\begin{equation}
 \left|(\hat{\bx} -\tilde{\bx})^T (\check{\bW}-\bbE[\check{\bW}]) (\hat{\bx} -\tilde{\bx})\right| \le (2n(1+c)u + \gamma'+\beta') \bar{\eta} \delta^2. \label{eq:main_ub_eta_6}
\end{equation}
\end{itemize}
Note that $\bar{\eta} \le \frac{\|\bm{\eta}\|_2}{\sqrt{m}} \le c_0 \|\bx\|_2$ ({\em cf.}~\eqref{eq:bdd_noise}). Combining~\eqref{eq:main_ub_eta},~\eqref{eq:main_ub_eta_3},~\eqref{eq:main_ub_eta_5}, and~\eqref{eq:main_ub_eta_6}, and setting $\delta = \frac{c}{n}$, we obtain that when $m = \Omega(k \log (\tilde{L} n r))$, with probability $1-e^{-\Omega(m)}$, it holds that
\begin{equation}
 \left|\hat{\bx}^T (\check{\bW}-\bbE[\check{\bW}]) \hat{\bx}\right| \le c \|\bx\|_2.\label{eq:final_ub_eta}
\end{equation}
Combining~\eqref{eq:main_ub},~\eqref{eq:final_ub_aix}, and~\eqref{eq:final_ub_eta}, we obtain that when $m = \Omega(k \log (\tilde{L} n r))$, with probability $1-e^{-\Omega(m)}$, it holds that
\begin{equation}
 \hat{\bx}^T \bV \hat{\bx} \le \left(c + \gamma' +\beta'(\hat{\bx}^T\bar{\bx})^2\right) \|\bx\|_2.\label{eq:final_ub}
\end{equation}
\textbf{Combining and simplifying:} Finally, combining~\eqref{eq:final_lb} and~\eqref{eq:final_ub}, we obtain that when $m = \Omega(k \log (\tilde{L} n r))$, with probability $1-e^{-\Omega(m)}$,
\begin{equation}\label{eq:comb_lb_ub}
 (\gamma +\beta - c)\|\bx\|_2 \le \left(c + \gamma' +\beta'(\hat{\bx}^T\bar{\bx})^2\right)\|\bx\|_2.
\end{equation}
Simplifying~\eqref{eq:comb_lb_ub}, we obtain
\begin{equation}\label{eq:last_eq_main}
 \beta'(\hat{\bx}^T\bar{\bx})^2 \ge \beta + (\gamma - \gamma') -2c.
\end{equation}
Defining $\Phi(x) = \int_{-\infty}^{x} \frac{1}{\sqrt{2 \pi}} e^{-\frac{t^2}{2}} \rmd t$ to be the  standard normal distribution function and setting $C_0 = \max_{x \in \bbR} \Phi'(x) = \frac{1}{\sqrt{2\pi}}$, we have  for $g\sim\calN(0,1)$ that
\begin{align}
 \gamma' - \gamma &= \bbE[|g|\mathbf{1}_{\{l(1-c)\le |g|\le l(1+c)\}}] + \bbE[|g|\mathbf{1}_{\{u(1-c)\le |g|\le u(1+c)\}}] \\
 & \le 2 \bbE[|g|\mathbf{1}_{\{u(1-c)\le |g|\le u(1+c)\}}] \label{eq:Oc_step1} \\
 & \le 4u(1+c)\bbP(u(1-c)< g <u(1+c)) \label{eq:Oc_step2} \\
 & \le  5u (\Phi(u(1+c))- \Phi(u(1-c))) \label{eq:Oc_step3} \\
 & \le 10C_0 u^2 c = O(c), \label{eq:Oc_step4}
\end{align}
where \eqref{eq:Oc_step1} uses the definitions of $\gamma$ and $\gamma'$ following \eqref{eq:psi_iDef}, \eqref{eq:Oc_step2} multiplies by two due the replacement of $|g|$ by $g$ in the probability, and \eqref{eq:Oc_step3} holds for small enough $c$. Similarly, we have $\beta \ge \beta' - O(c)$. Recall from~\eqref{eq:event_eps} that the event $\calE$ occurs with probability $1-e^{-\Omega(m)}$. Therefore, from~\eqref{eq:last_eq_main}, we obtain that when $m = \Omega(k \log (\tilde{L}nr))$, with probability $1-e^{-\Omega(m)}$, it holds that
\begin{equation}
 (\hat{\bx}^T\bar{\bx})^2 \ge 1-c'.
\end{equation}
Without loss of generality, we assume that $\hat{\bx}^T\bar{\bx} >0$ (otherwise, we can similarly derive an upper bound for $\|\hat{\bx}+\bar{\bx}\|_2$ instead of for $\|\hat{\bx}-\bar{\bx}\|_2$).  Then, we have
\begin{equation}
 \|\hat{\bx}-\bar{\bx}\|_2^2 = 2(1-\hat{\bx}^T\bar{\bx}) \le 2\left(1-(\hat{\bx}^T\bar{\bx})^2\right) \le 2c'.
\end{equation}
By suitably renaming the constant, we obtain~\eqref{eq:normalized_bd} as desired. Using Lemma~\ref{lem:lambda}, we also obtain~\eqref{eq:unnormalized_bd}. 

\section{Proof of Lemma~\ref{lem:strong_SREC} (Variant of the S-REC)}
\label{sec:lemma3}

For fixed $\delta >0$ and a positive integer $\ell$, let $M = M_0 \subseteq M_1 \subseteq \ldots \subseteq M_{\ell}$ be a chain of nets of $B_2^k(r)$ such that $M_i$ is a $\frac{\delta_i}{L}$-net with $\delta_i = \frac{\delta}{2^i}$.  There exists such a chain of nets with~\cite[Lemma~5.2]{vershynin2010introduction}
\begin{equation}
    \log |M_i| \le k \log\frac{4Lr}{\delta_i}. \label{eq:net_size}
\end{equation}
By the $L$-Lipschitz assumption on $G$, we have for any $i \in [\ell]$ that $G(M_i)$ is a $\delta_i$-net of $G(B_2^k(r))$. 
We write 
\begin{align}
    \bx_1  = (\bx_1 - \bs_{\ell}) + \sum_{i=1}^{\ell}(\bs_i - \bs_{i-1}) + \bs_0, \quad \bx_2  = (\bx_2 - \bt_{\ell}) + \sum_{i=1}^{\ell}(\bt_i - \bt_{i-1}) + \bt_0, 
\end{align}
where $\bs_i, \bt_i\in G(M_i)$ for all $i \in [\ell]$, and $\|\bx_1- \bs_{\ell}\| \le \frac{\delta}{2^{\ell}}$, $\|\bx_2- \bt_{\ell}\| \le \frac{\delta}{2^{\ell}}$, $\|\bs_i - \bs_{i-1}\|_2 \le \frac{\delta}{2^{i-1}}$, and $\|\bt_i - \bt_{i-1}\|_2 \le \frac{\delta}{2^{i-1}}$ for all $i \in [\ell]$. Therefore, the triangle inequality gives
\begin{equation}\label{eq:bxbxbsbt}
    \|\bx_1-\bs_0\|_2 < 2\delta, \quad \|\bx_2-\bt_0\|_2 < 2\delta. 
\end{equation}
For any index $I \subseteq [m]$ with $|I| \ge \frac{m}{2}$, from the triangle inequality, we have
\begin{align}
    \|\bA_{I}(\bx_1-\bx_2)\|_2 &\ge \|\bA_{I}(\bs_0-\bt_0)\|_2 -\sum_{i=1}^{\ell}\left(\|\bA_I(\bs_i -\bs_{i-1})\|_2 + \|\bA_I(\bt_i -\bt_{i-1})\|_2\right) \nonumber \\
    & \quad - \|\bA_I(\bs_{\ell} -\bx_1)\|_2 - \|\bA_I(\bt_{\ell} -\bx_2)\|_2 \\
    & \ge \|\bA_{I}(\bs_0-\bt_0)\|_2 -\sum_{i=1}^{\ell}\left(\|\bA(\bs_i -\bs_{i-1})\|_2 + \|\bA(\bt_i -\bt_{i-1})\|_2\right) \nonumber \\
    & \quad - \|\bA(\bs_{\ell} -\bx_1)\|_2 - \|\bA(\bt_{\ell} -\bx_2)\|_2 . \label{eq:chaining_three_terms}
\end{align}
For any $\br_1,\br_2 \in G(M)$ and $0<\alpha<\mu$, from Lemma~\ref{lem:strong_JL}, we have with probability $1-e^{-\Omega(\alpha^2 m)}$ that
\begin{equation}
    \|\bA_{I}(\br_1-\br_2)\|_2 \ge \sqrt{m} (\mu -\alpha) \|\br_1-\br_2\|_2.
\end{equation}
Taking a union bound over $G(M)\times G(M)$, we have that when $m = \Omega\big(\frac{k}{\alpha^2}\log\frac{Lr}{\delta}\big)$, with probability $1-e^{-\Omega(\alpha^2 m)}$, for \textit{all} $\br_1,\br_2 \in G(M)$,
\begin{equation}
    \|\bA_{I}(\br_1-\br_2)\|_2 \ge \sqrt{m} (\mu -\alpha) \|\br_1-\br_2\|_2.
\end{equation}
Therefore, we have that when $m = \Omega\big(\frac{k}{\alpha^2}\log\frac{Lr}{\delta}\big)$, with probability $1-e^{-\Omega(\alpha^2 m)}$,
\begin{equation}
    \frac{1}{\sqrt{m}} \|\bA_{I}(\bs_0-\bt_0)\|_2 \ge (\mu -\alpha) \|\bs_0-\bt_0\|_2 \ge (\mu -\alpha) (\|\bx_1-\bx_2\|_2 - 4\delta), \label{eq:main_bd_chaining} 
\end{equation}
where we applied \eqref{eq:bxbxbsbt}.  
In addition, using the results in~\cite{bora2017compressed,liu2020sample}, for the standard Gaussian matrix $\bA$, we have $m = \Omega\big(\frac{k}{\alpha^2}\log\frac{Lr}{\delta}\big)$, with probability $1-e^{-\Omega(\alpha^2 m)}$,
\begin{equation}
    \sum_{i=1}^{\ell}\left(\|\bA(\bs_i -\bs_{i-1})\|_2 + \|\bA(\bt_i -\bt_{i-1})\|_2\right) \le O(\sqrt{m}\alpha \delta) = O(\sqrt{m}\delta). \label{eq:main_bd_chaining2}
\end{equation}
Moreover, by~\cite[Corollary~5.35]{vershynin2010introduction}, we have $\big\|\frac{1}{\sqrt{m}} \bA\big\|_{2 \to 2} \le 2+ \sqrt{\frac{n}{m}}$ with probability at least $1-e^{-m/2}$. Hence, choosing $\ell = \lceil \log_2 n \rceil$, we have with probability at least $1-e^{-m/2}$ that
\begin{equation}
    \Big\|\frac{1}{\sqrt{m}} \bA (\bt_{\ell} -\bx_2)\Big\|_{2}+ \Big\|\frac{1}{\sqrt{m}} \bA (\bs_{\ell} -\bx_1)\Big\|_{2} \le 2\left(2+ \sqrt{\frac{n}{m}}\right) \frac{\delta}{2^l} = O(\delta), \label{eq:main_bd_chaining3}
\end{equation}
where we used the fact that $\|\bt_{\ell} -\bx_2\| \le \frac{\delta}{2^{\ell}}$ and $\|\bs_{\ell} -\bx_1\| \le \frac{\delta}{2^{\ell}}$. Combining~\eqref{eq:chaining_three_terms},~\eqref{eq:main_bd_chaining},~\eqref{eq:main_bd_chaining2} and~\eqref{eq:main_bd_chaining3}, we obtain the desired result. 

\section{Further Experiments: Comparing the Iterative Algorithm in~\texttt{CoPRAM} Combined with Various Initialization Methods}
\label{sec:extra_numerical}

In this section, we provide additional numerical results comparing our approach with other initialization methods, in that case that the initialization is combined with the subsequent iterative algorithm used in~\texttt{CoPRAM}. In addition to~\texttt{PRI-SPCA-NT} and the spectral initialization methods used in~\texttt{ThWF},~\texttt{SPARTA}, and~\texttt{CoPRAM}, we also compare with random initialization, in which the initial vector is $\lambda \frac{\bg}{\|\bg\|_2}$ with $\bg \sim \calN(\bm{0},\bI_n)$. The worsened performance of random initialization reveals the importance of using spectral initialization methods for sparse phase retrieval. 

We first consider the noiseless case and compare the relative error and empirical success rate of different methods. More specifically, after obtaining initial vectors from different initialization methods using the procedure discussed in Section~\ref{sec:numerical}, we run the iterative algorithm of~\texttt{CoPRAM} for $T = 100$ iterations to further refine the estimated vectors and obtain $\bx^{(T)}$. For our experiments, we found that $100$ iterations are usually sufficient for convergence. A trial is declared to be successful if the relative error
\begin{equation}
    \frac{\min\{\|\bx^{(T)}-\bx\|_2,\|\bx^{(T)}+\bx\|_2\}}{\|\bx\|_2}
\end{equation}
is less than $0.01$. We fix $n = 1000$, and consider $s=10$ or $s=20$, while varying $m$ in $\{100,150,200,\dotsc,1000\}$. 
For each of the experiments in this section, we repeat $50$ trials for $10$ times, and calculate the standard deviation over these $10$ times to produce the error bars. 

We report the relative error in Figure~\ref{fig:exp4_m}, and we report the empirical success rate of different methods in Figure~\ref{fig:exp4_esr_m}. From Figures~~\ref{fig:exp4_m} and~\ref{fig:exp4_esr_m}, we observe that for most cases, our~\texttt{PRI-SPCA} method leads to the smallest relative error and the largest empirical success rate.   

\begin{figure}[t]
    \subfloat{\includegraphics[width=.5\columnwidth]{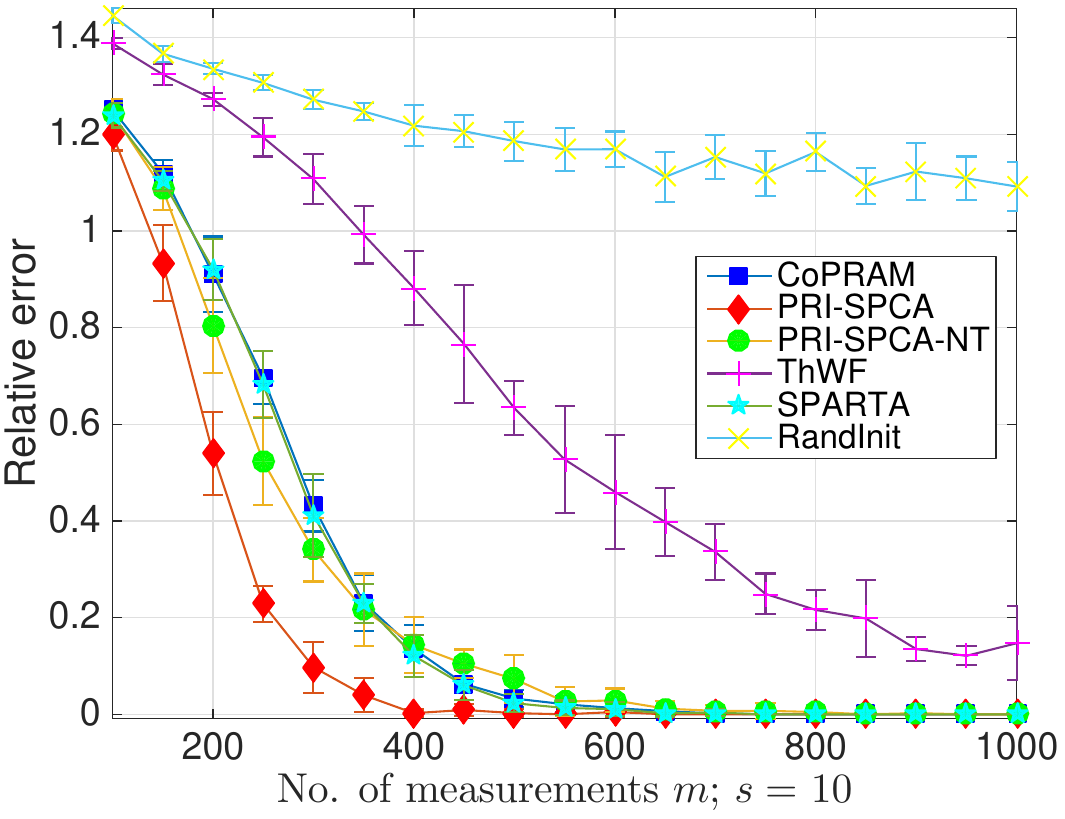}}\hfill
    \subfloat{\includegraphics[width=.475\columnwidth]{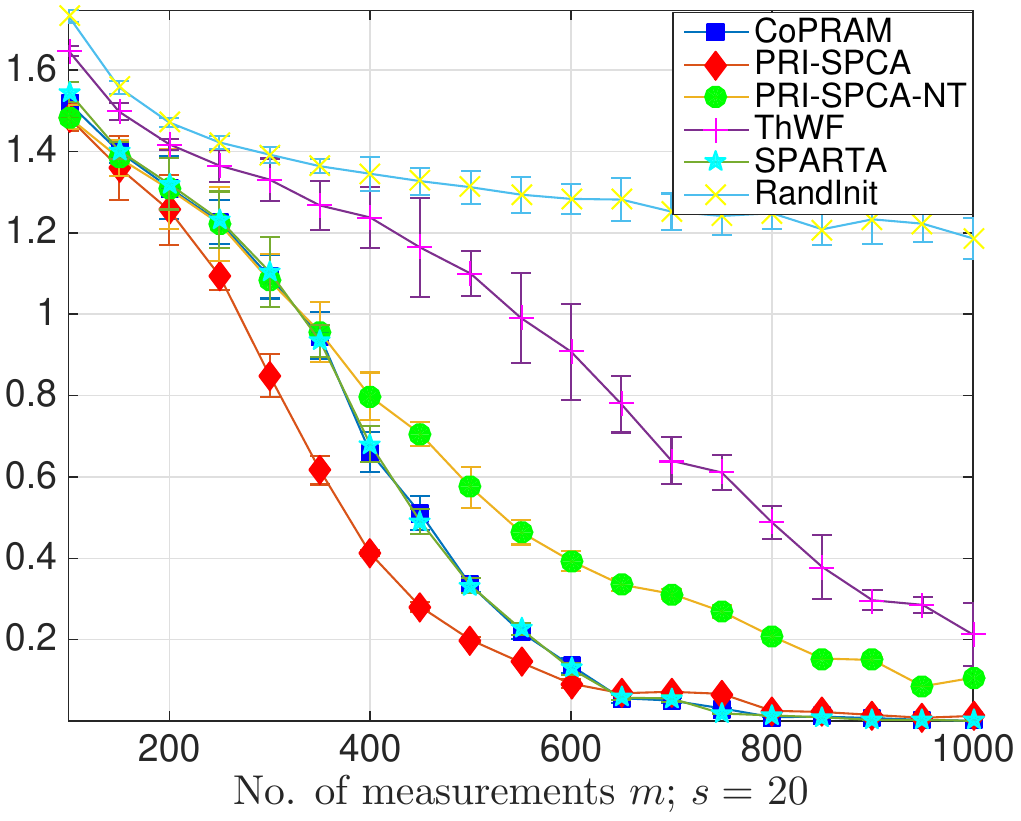}}
    \caption{Relative error vs.~number of measurements $m$ in the noiseless setting with $s=10$ (Left) and $s=20$ (Right).}\label{fig:exp4_m}
\end{figure}

\begin{figure}[t]
    \subfloat{\includegraphics[width=.5\columnwidth]{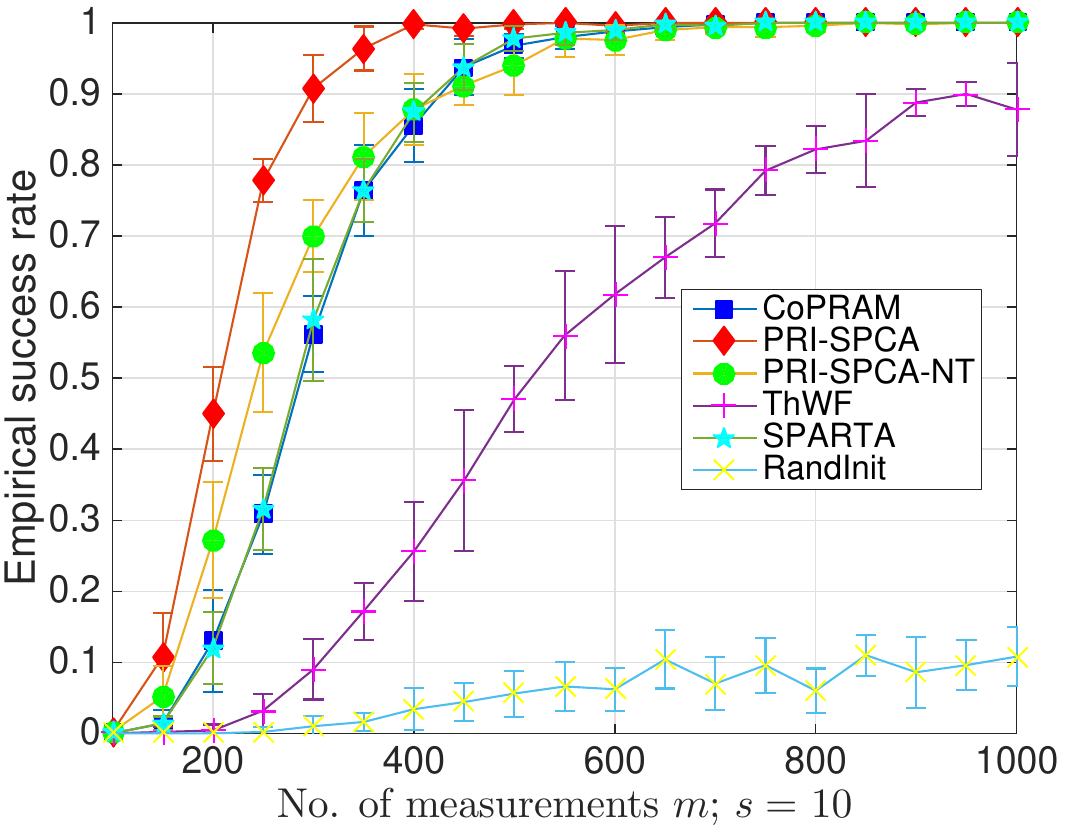}}\hfill
    \subfloat{\includegraphics[width=.5\columnwidth]{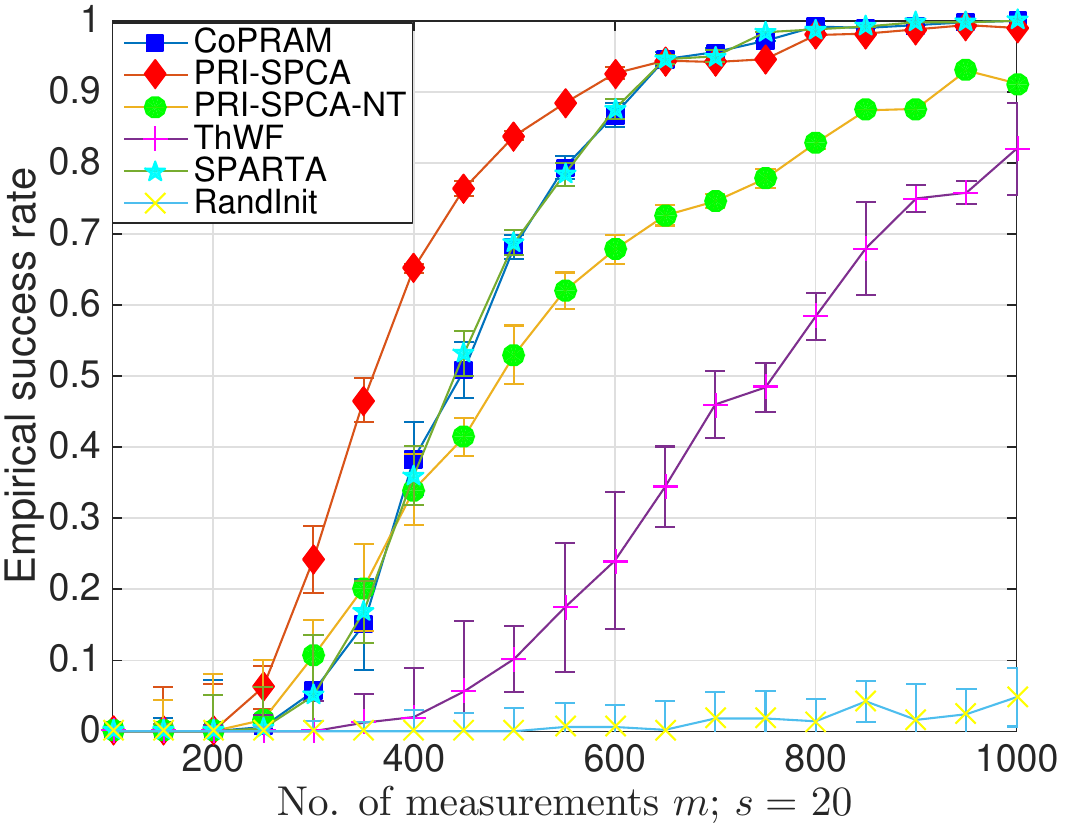}}
    \caption{Empirical success rate vs.~number of measurements $m$ in the noiseless setting with $s=10$ (Left) and $s=20$ (Right).}\label{fig:exp4_esr_m}
\end{figure}

Next, we consider the noisy case, and compare the relative error for using approximately the same time cost. The time cost is calculated as the sum of the running time of each initialization method and the running time of the subsequent iterative algorithm. As we have mentioned in Section~\ref{sec:numerical}, the time complexity of each iteration in~\texttt{GRQI} is $O(s^3 + sn)$, while for the iterative algorithm of~\texttt{CoPRAM}, the time complexity of each iteration is $O(s^2 n \log n)$, which dominates the total time complexity. In this experiment, we fix $n=1000$, $m =500$, $s = 20$, and consider $\sigma = 0.1$ and $0.2$. Since the running times of the initialization methods are typically less than $0.1$ second, we vary the time cost (in seconds) $t$ in $\{0.1,0.12,\ldots,0.48,0.5\}$. For each $t$ and each of the methods we consider, we find the number of the iteration whose time cost is closest to $t$, and record the corresponding relative error. Note that as mentioned in Section~\ref{sec:numerical}, for noisy measurements, we do not compare with~\texttt{ThWF} because it corresponds to quadratic measurements. The results are reported in Figure~\ref{fig:exp6_sigma}, from which we observe that when combined with the iterative algorithm of~\texttt{CoPRAM} and using approximately the same time cost, our~\texttt{PRI-SPCA} method gives smallest relative error in most cases. 

\begin{figure}[t]
    \subfloat{\includegraphics[width=.5\columnwidth]{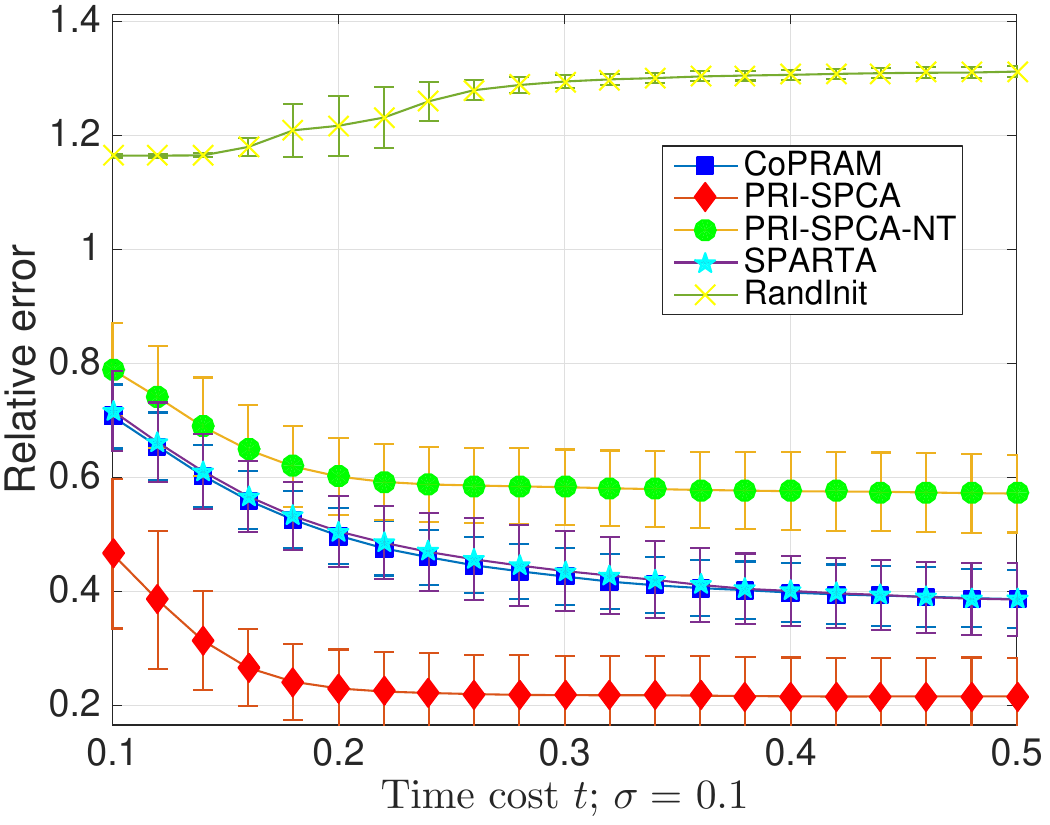}}\hfill
    \subfloat{\includegraphics[width=.475\columnwidth]{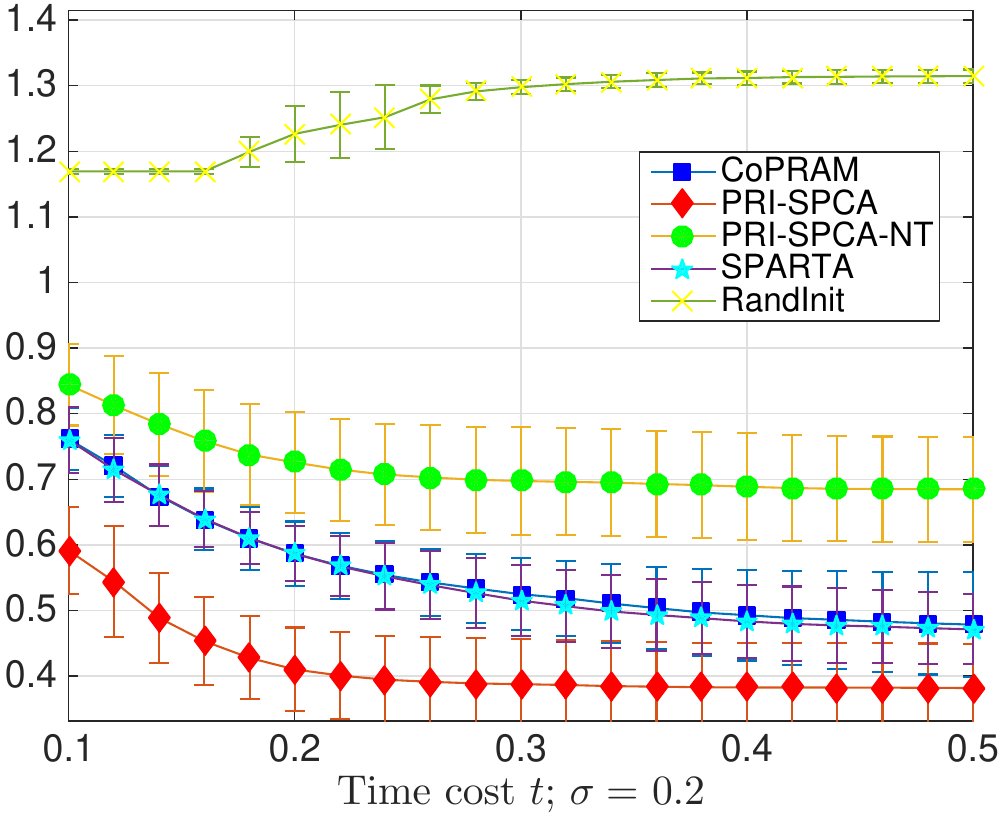}}
    \caption{Relative error for different initialization methods combined with the iterative algorithm of~\texttt{CoPRAM} in the noisy setting with $\sigma = 0.1$ (Left) and $\sigma = 0.2$ (Right), when using approximately the same time cost.}\label{fig:exp6_sigma}
\end{figure}

\end{document}